\documentclass[letterpaper,11pt]{article}
\usepackage[margin=1in]{geometry}
\usepackage[bookmarks, colorlinks=true, plainpages = false, citecolor = blue,linkcolor=red,urlcolor = blue, filecolor = blue]{hyperref}
\usepackage{url}\urlstyle{rm}
\usepackage{amsmath,amsfonts,amsthm,amssymb,bm, verbatim,dsfont,mathtools}
\usepackage{color,graphicx,appendix}
\usepackage{subfigure}
\usepackage{etoolbox}
\usepackage{tikz}
\usepackage{xr,xspace}
\usepackage{todonotes}
\usepackage{paralist}
\usepackage{caption,soul}
\usepackage{algorithm}% http://ctan.org/pkg/algorithms
\usepackage{algorithmic}% http://ctan.org/pkg/algorithms

% put your definitions there:

\newcommand{\post}[2]{\begin{center} \includegraphics[width=#2]{#1} \end{center} }
\newcommand\1[1]{\mathbb{I}_{\left\{#1\right\}}}
%\newtheorem{theorem}{Theorem}[section]
%\newtheorem{lemma}{Lemma}[section]
%\newtheorem{corollary}{Corollary}[section]
%\newtheorem{remark}{Remark}[section]
%\newtheorem{conjecture}{Conjecture}[section]
%\newtheorem{definition}{Definition}[section]
%\newtheorem{proposition}{Proposition}[section]

%\renewcommand{\ALG@name}{SDP}
%\renewcommand{\listalgorithmname}{List of \ALG@name s}

%%%%% THEOREM STYLE DEFINITIONS
\theoremstyle{plain}
\newtheorem{theorem}{Theorem}
\newtheorem{lemma}{Lemma}

\newtheorem{corollary}{Corollary}
\theoremstyle{definition}
\newtheorem{definition}{Definition}

\newtheorem{conjecture}{Conjecture}

\newtheorem{remark}{Remark}
\newtheorem*{remark*}{Remark}

\newcommand{\argmax}{\mathop{\arg\max}}

%\newcommand \P[1]{\mathbb{P}[#1]}

%% Wu
\usepackage{xspace,prettyref}

\newcommand{\reals}{{\mathbb{R}}}
\newcommand{\integers}{{\mathbb{Z}}}

  % symmetric matrices

\newcommand{\eexp}{{\rm e}}

\newcommand{\identity}{\mathbf I}
\newcommand{\allones}{\mathbf J}

\newcommand{\diff}{{\rm d}}

\newcommand{\expect}[1]{\mathbb{E}\left[ #1 \right]}

\newcommand{\expects}[2]{\mathbb{E}_{#2}\left[ #1 \right]}

\newcommand{\prob}[1]{ \mathbb{P}\left\{ #1 \right\} }

\newcommand{\var}{\mathsf{var}}

\newcommand{\Bern}{{\rm Bern}}
\newcommand{\Binom}{{\rm Binom}}
\newcommand{\Pois}{{\rm Pois}}

\newcommand{\ie}{i.e.\xspace}

% for prettyref.sty
\newrefformat{eq}{(\ref{#1})}
\newrefformat{chap}{Chapter~\ref{#1}}
\newrefformat{sec}{Section~\ref{#1}}
\newrefformat{algo}{Algorithm~\ref{#1}}
\newrefformat{fig}{Fig.~\ref{#1}}
\newrefformat{tab}{Table~\ref{#1}}
\newrefformat{rmk}{Remark~\ref{#1}}
\newrefformat{clm}{Claim~\ref{#1}}
\newrefformat{def}{Definition~\ref{#1}}
\newrefformat{cor}{Corollary~\ref{#1}}
\newrefformat{lmm}{Lemma~\ref{#1}}
\newrefformat{prop}{Proposition~\ref{#1}}
\newrefformat{app}{Appendix~\ref{#1}}
\newrefformat{hyp}{Hypothesis~\ref{#1}}
\newrefformat{thm}{Theorem~\ref{#1}}

\newcommand{\lnorm}[2]{\left\|{#1} \right\|_{{#2}}}

\newcommand{\fnorm}[1]{\|#1\|_{\rm F}}
% \newcommand{\opnorm}[1]{\lnorm{#1}{\rm op}}

% inner product

\newcommand{\Iprod}[2]{\langle #1, #2 \rangle}
% 12/02/2007

\newcommand{\diag}[1]{\mathsf{diag} \left\{ {#1} \right\} }
\newcommand{\sign}{\mathsf{sign}}

\newcommand{\calL}{{\mathcal{L}}}

\newcommand{\calT}{{\mathcal{T}}}

\newcommand{\SDP}{{\rm SDP}\xspace}

\newcommand{\ER}{Erd\H{o}s-R\'enyi\xspace}

\renewcommand{\hat}{\widehat}
\renewcommand{\star}{\ast}

\begin{document}
%\bibliographystyle{imsart-number}

% "Title of the paper"
\title{Reconstruction in the Labeled Stochastic Block Model}
%\runtitle{Reconstruction in the Labeled Stochastic Block Model}
\date{\today}

\author{Marc Lelarge \and Laurent Massouli\'{e} \and Jiaming Xu \thanks{M. Lelarge is with INRIA-ENS, \texttt{marc.Lelarge@ens.fr}. L. Massouli\'{e}
is with the Microsoft Research-INRIA Joint Centre, \texttt{laurent.massoulie@inria.fr}.
J. Xu is with
 with the Department of ECE, University of Illinois at Urbana-Champaign, Urbana, IL, \texttt{jxu18@illinois.edu}. Part of  work is performed while
  J. Xu was an intern with Technicolor.  A preliminary version of this paper appeared in the Proceedings
of the 2013 Information Theory Workshop.}
 }

\maketitle
%\begin{aug}
%\author{\fnms{Marc} \snm{Lelarge} \thanksref{m1}\ead[label=e1]{Marc.Lelarge@ens.fr}},
%\author{\fnms{Laurent} \snm{Massouli\'{e}}\thanksref{m2}\ead[label=e2]{laurent.massoulie@inria.fr}}
%\and
%\author{\fnms{Jiaming} \snm{Xu}\thanksref{m3,t1}}
%\ead[label=e3]{jxu18@illinois.edu}
%
%\thankstext{t1}{Part of work performed while an intern with Technicolor}
%
%\runauthor{M. Lelarge et al.}
%
%\affiliation{INRIA-ENS \thanksmark{m1} and Microsoft Research-INRIA Joint Centre \thanksmark{m2} and  \\ University of Illinois at Urbana-Champaign \thanksmark{m3}}
%
%\address{Address of the First and Second authors\\
%Usually a few lines long\\
%\printead{e1}\\
%\phantom{E-mail:\ }\printead*{e2}}
%
%\address{Address of the Third author\\
%Usually a few lines long\\
%Usually a few lines long\\
%\printead{e3}\\
%\printead{u1}}
%\end{aug}

\begin{abstract}
The labeled stochastic block model is a random graph model representing networks with community structure and interactions of multiple types. In its simplest form, it consists of two communities of approximately equal size, and the edges are drawn and labeled at random with probability depending on whether their two endpoints belong to the same community or not.

It has been conjectured in \cite{Heimlicher12} that correlated reconstruction (i.e.\ identification of a partition correlated with the true partition into the underlying communities) would be feasible if and only if a model parameter exceeds a threshold.
We prove one half of this conjecture, i.e., reconstruction is impossible when below the threshold. In the positive direction, we introduce a weighted graph to exploit the label information. With a suitable choice of weight function, we show that when above the threshold by a specific constant, reconstruction is achieved by (1) minimum bisection, (2) a semidefinite relaxation of minimum bisection, and (3) a spectral method combined with removal of edges incident to vertices of high degree. 
Furthermore, we show that  hypothesis testing between the labeled stochastic block model and the labeled Erd\H{o}s-R\'enyi random graph model exhibits a phase transition at the conjectured reconstruction threshold. 
%with the same threshold as the correlated reconstruction.

\end{abstract}

%\begin{keyword}[class=AMS]
%\kwd[Primary ]{}
%\kwd{}
%\kwd[; secondary ]{}
%\end{keyword}

%\begin{keyword}
%\kwd{}
%\kwd{}
%\end{keyword}

%\end{frontmatter}

\section{Introduction}
\subsection{Motivation}
Community detection aims to identify underlying communities of similar characteristics in an overall population from the observation of pairwise interactions between individuals \cite{Fortunato10,Newman04,Newman06}. The stochastic block model, also known as {\it planted partition model}, is a popular random graph model for analyzing the community detection problem \cite{Holland83,Snijders97,Bicke09,Yu11,Decelle11}, in which pairwise interactions are binary: an edge is either present or absent between two individuals. In its simplest form, the stochastic block model consists of two communities of approximately equal size, where the within-community edge is present at random with probability $p$; while the across-community edge is present with probability $q$. If $p>q$, it corresponds to assortative communities where interactions are more likely within rather than across communities; while $p<q$ corresponds to disassortative communities.

In practice, interactions can be of various types and these types reveal more information on the underlying communities than the mere existence of the interaction itself. For example, in recommender systems, interactions between users and items come with user ratings. Such ratings contain far more information than the interaction itself to characterize the user and item types. Similarly, protein-protein chemical interactions in biological networks can be exothermic and endothermic; email exchanges in a club may be formal or informal; friendship in social networks may be strong or weak. The labeled stochastic block model was recently proposed in \cite{Heimlicher12} to capture rich interaction types. In this model interaction types are described by labels drawn from an arbitrary collection. In particular, for the simple two communities case, the within-community edge is labeled at random with distribution $\mu$; while the across-community edge is labeled with a different distribution $\nu$. In this context an important question is how to leverage the labeling information for detecting underlying communities.

\subsection{Information-Scarce Regime}
In this paper, we focus on the sparse labeled stochastic block model in which every vertex has a limited average degree, i.e., $p,q=O(1/n)$, where $n$ is the number of vertices. It corresponds to the information-scarce regime where only $O(n)$ edges and labels are observed in total\footnote{We also provide results for $p,q=O(\hbox{polylog}(n)/n)$ in Theorem \ref{ThmSpectralSBM-large}.}. This regime is of practical interest, arising in several contexts. For example, in recommender systems, users only give ratings to few items; in biological networks, only few protein-protein interactions are observed due to cost constraints; in social networks, a person only has a limited number of friends.

For the stochastic block model in this information-scarce regime, there are $\Theta(n)$ isolated vertices, as in Erd\H{o}s-R\'enyi random graphs with bounded average degree. For isolated vertices, it is impossible to determine their community membership and thus exact reconstruction of communities is impossible. Therefore, we resort to finding a partition into communities positively correlated to the true community partition (see Definition \ref{def:Q} below).

\subsection{Main Results}
Focusing on the two communities scenario, we show that a positively correlated reconstruction is fundamentally impossible when below a threshold. This establishes one half of the conjecture in \cite{Heimlicher12}. In the positive direction, we establish the following results. We introduce a graph weighted by a suitable function of observed labels, on which we show that:

(1) Minimum bisection gives a positively correlated partition when above the threshold by a factor of $64 \ln 2$.

(2) A semidefinite relaxation of minimum bisection gives a positively correlated partition when above the threshold by a factor of $2^{17} \ln 2$.

(3) A spectral method combined with removal of edges incident to vertices of high degree gives a positively correlated partition when above the threshold by a constant factor.

Furthermore, we show that the labeled stochastic block model is contiguous to a labeled Erd\H{o}s-R\'enyi random graph when below the reconstruction threshold and orthogonal to it when above the threshold. It implies that for the hypothesis testing problem between the labeled stochastic block model and the labeled Erd\H{o}s-R\'enyi random graph model, the correct identification of the underlying distribution is feasible if and only if above the reconstruction threshold. It also implies that there is no consistent estimator for model parameters when below the reconstruction threshold.

\subsection{Related Work}
For the stochastic block model, most previous work focuses on the ``dense'' regime with an average degree  diverging as the size of the graph $n$ grows, (see, e.g., \cite{Chen12,ChenXu14} and the references therein).  %McSherry \cite{McSherry01} showed that the spectral method works as long as $p-q \ge \Omega (\sqrt{p \log n /n })$ with average degree as low as $\Omega(\log^6 n)$. Massouli\'e and Tomozei~\cite{mastom11} reduced this lower bound on the average degree to $\Omega(\log n)$. More recently, it was shown in \cite{Chen12} that a matrix completion approach works  when  $p-q \ge \Omega (r \sqrt{p/n} \log^2 n )$  where the number of communities $r$ could scale with $n$.
For the ``sparse'' regime with bounded average degrees, a sharp phase transition threshold for reconstruction was conjectured in \cite{Decelle11}  by analyzing the belief propagation algorithm. The converse part of the conjecture was rigorously proved in \cite{Mossel12}. The achievability part is proved independently in \cite{Mossel13,Massoulie13}. In addition, it is shown in \cite{Coja-oghlan10} that a variant of spectral method gives a positively correlated partition when above the threshold by an unknown constant factor. More recently, it is shown in \cite{Vershynin14} that a semidefinite program finds a correlated partition when above the threshold by some large constant factor.
%In the converse direction,

The labeled stochastic block was first proposed and studied in \cite{Heimlicher12} and a new reconstruction threshold that incorporates the extra labeling information was conjectured. Simulations further indicate that the belief propagation algorithm works when above the threshold, but reconstruction algorithms that provably work are still unknown. 

Finally, we recently became aware of the work \cite{ABBS14} that studies the problem of decoding 
binary node labels from noisy edge measurements. In the case where the background graph is \ER random graph and each node label is independently and uniformly chosen from $\{\pm 1\}$, 
the model in \cite{ABBS14} can be viewed as a special case
of the labeled stochastic block model with $p=q$, $\mu=(1-\epsilon) \delta_{+1} + \epsilon \delta_{-1}$ and $\nu=\epsilon \delta_{+1} + (1-\epsilon) \delta_{-1}$, where $\delta_{x}$ 
denotes the probability measure concentrated on point $x$ (See Section \ref{SectionModel} for the formal model description).  When $p =q = a \log n /n$ for some constant $a$ and $\epsilon \to 1/2$, it is shown in \cite{ABBS14} that exact recovery of node labels is
possible if and only if $a (1-2\epsilon)^2 >2$. In contrast, our results show that when $p=q =a /n$ for some constant $a$, correlated recovery of node labels is impossible
if $a (1-2 \epsilon)^2 <1$ for any $0\le \epsilon \le 1$.  Moreover, we show that distinguishing hypothesis $\epsilon =\epsilon_0$ and hypothesis $\epsilon = 1/2$ is possible if and only if $ a(1-2\epsilon_0)^2>1$. 

\subsection{Outline}
Section \ref{SectionModel} introduces the precise definition of the labeled stochastic block model to be studied and the key notations. The main theorems are introduced and briefly discussed in Section \ref{SectionMainThm}. The detailed proofs are presented in Section \ref{SectionProof}. Section \ref{SectionConclusion} ends the paper with concluding remarks. Miscellaneous details and proofs are in the Appendix.

\section{Model and Notation} \label{SectionModel}
This section formally defines the labeled stochastic block model with two symmetric communities and introduces the key notations and definitions used in the paper. Let $\mathcal{L}$ denote a finite set. The labeled stochastic block model $\mathcal{G}(n,p,q,\mu,\nu)$ is a random graph with $n$ vertices of $\{\pm 1\}$ types indexed by $[n]$ and $\{\ell \in \mathcal{L} \}$-labeled edges. To generate a particular realization $(G,L,\sigma)$, first assign type $\sigma_u \in \{\pm 1\}$ to each vertex $u$ uniformly and independently at random. Then, for every vertex pair $(u,v)$, independently of everything else, draw an edge between $u$ and $v$ with probability $p$ if $\sigma_u=\sigma_v$ and with probability $q$ otherwise. Finally, every edge $e=(u,v)$ is labeled with $\ell$ independently at random with probability $\mu(\ell)$ if $\sigma_u=\sigma_v$ and with probability $\nu(\ell)$ otherwise.

Equivalently, we can specify $\mathcal{G}(n,p,q,\mu,\nu)$ by its probability distribution. Let
\begin{align}
\phi_{uv}(G,L,\sigma)=\left \{
\begin{array}{rl}
 p \mu(L_{uv}) & \text{if } \sigma_u=\sigma_v, (u,v) \in E(G), \\
 q \nu(L_{uv}) & \text{if } \sigma_u \neq \sigma_v, (u,v) \in E(G), \\
 1-p & \text{if } \sigma_u= \sigma_v, (u,v) \notin E(G), \\
 1-q & \text{if } \sigma_u \neq \sigma_v, (u,v) \notin E(G), \nonumber
\end{array} \right.
\end{align}
where  $E(G)$ is the set of edges of $G$ and $L_{uv}$ is the label on the edge $(u,v)$. Then,
\begin{align}
\mathbb{P}_n (G, L, \sigma) = 2^{-n} \prod_{(u,v): u<v} \phi_{uv}(G,L,\sigma). \label{eq:loglikelihood}
\end{align}
When $\mu=\nu$, it reduces to the classical stochastic block model without labels. This paper focuses on the sparse case where $p=a/n$ and $q=b/n$ for two fixed constants $a$ and $b$, and the goal is to reconstruct the true underlying types of vertices $\sigma$ by observing the graph structure $G$ and the labels on edges $L$.

It is known that in the sparse graph, there are $\Theta(n)$ isolated vertices whose types clearly cannot be recovered accurately. Therefore, our goal is to reconstruct a type assignment which is positively correlated to the true type assignment. More formally, we adopt the following definition.
\begin{definition}\label{def:Q}
A type assignment $\hat{\sigma}$ is said to be positively correlated with the true type assignment $\sigma$ if a.a.s.
\begin{align}
\ Q(\sigma,\hat{\sigma}) := \frac{1}{2} - \frac{1}{n} \min \{ d (\sigma, \hat{\sigma} ), d(\sigma, -\hat{\sigma}) \} >0,
\end{align}
where $d$ is the Hamming distance, and $Q$ is called the {\it Overlap}.
\end{definition}
The shorthand a.a.s. denotes {\it asymptotically almost surely}. A sequence of events $A_n$ holds a.a.s. if the probability of $A_n$ converges to $1$ as $n \to \infty$.
Define $\tau$ as
\begin{align}
\tau= \frac{a+b}{2} \sum_{ \ell \in \mathcal{L} } \frac{a\mu(\ell) + b\nu(\ell) }{a+b}  \left( \frac{a\mu(\ell)-b\nu(\ell)}{a\mu(\ell)+b \nu(\ell) } \right)^2. \label{DefReconstructionThreshold}
\end{align}
It was conjectured in \cite{Heimlicher12} that $\tau$ is the threshold for positively correlated reconstruction.
\begin{conjecture} \label{Conjecture}
\begin{itemize}
\item[(i)] If $\tau>1$, then it is possible to find a type assignment correlated
with the true assignement a.a.s.
\item[(ii)] If $\tau<1$, then it is impossible to find a type assignment correlated
with the true assignement a.a.s.
\end{itemize}
\end{conjecture}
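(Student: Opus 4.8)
\medskip
\noindent\textbf{Proof proposal.}
Both halves rest on the local geometry of $\calG(n,a/n,b/n,\mu,\nu)$: around a uniformly random vertex the labeled graph converges, in the Benjamini--Schramm sense, to a labeled Galton--Watson tree with offspring law $\Pois\pth{(a+b)/2}$, in which each child inherits its parent's type with probability $\frac{a}{a+b}$ and the opposite type with probability $\frac{b}{a+b}$, the connecting edge being labeled from $\mu$ in the first case and $\nu$ in the second. Write $\rho(\ell)=\frac{a\mu(\ell)+b\nu(\ell)}{a+b}$ for the marginal edge-label law and $g(\ell)=\frac{a\mu(\ell)-b\nu(\ell)}{a\mu(\ell)+b\nu(\ell)}=\Expect[\sigma_u\sigma_v\mid L_{uv}=\ell,\ (u,v)\in E]$; a one-line computation gives $\tau=\tfrac{a+b}{2}\,\Expect_\rho[g(L)^2]$, i.e.\ $\tau$ is exactly the Kesten--Stigum quantity (branching number times squared second eigenvalue) of this broadcast process, which is what pins the threshold to reconstruction.

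\emph{Part (ii): $\tau<1\Rightarrow$ impossibility.} It suffices that the Bayes-optimal overlap vanish: with $o$ uniform and $\calF_R$ the $\sigma$-field of its depth-$R$ labeled neighborhood, one wants $\limsup_{R}\limsup_n\Expect\big|\Expect[\sigma_o\mid\calF_R]\big|=0$, which a positively correlated estimator would violate. Two routes. \emph{(a) Tree route:} by Benjamini--Schramm convergence the law of $\sigma_o$ given $\calF_R$ converges to the root magnetization of the labeled Galton--Watson tree, and on the tree the standard broadcast recursion contracts to $0$ precisely when $\tfrac{a+b}{2}\Expect_\rho[g(L)^2]=\tau<1$ --- the always-valid converse direction of Kesten--Stigum, via a layer-by-layer $L^2$ contraction. \emph{(b) Contiguity route} (which I prefer as it also gives the hypothesis-testing corollary): let $\PP^{\mathrm{ER}}_n$ be the labeled graph law obtained by setting $p=q=(a+b)/(2n)$, $\mu=\nu=\rho$, and compute $\Expect_{\PP^{\mathrm{ER}}_n}\big[(\diff\PP_n/\diff\PP^{\mathrm{ER}}_n)^2\big]$; its logarithm is, up to $o(1)$, the short-cycle series $\sum_{k\ge3}\tau^k/(2k)$, hence bounded iff $\tau<1$. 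A bounded second moment yields one-sided contiguity $\PP_n\triangleleft\PP^{\mathrm{ER}}_n$ by Cauchy--Schwarz; since $(G,L)\indep\sigma$ under $\PP^{\mathrm{ER}}_n$ no estimator can be positively correlated there, and contiguity transfers this to $\PP_n$. (Small-subgraph conditioning on label-refined cycle counts upgrades this to mutual contiguity, needed only for the sharper corollaries.)

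\emph{Part (i): $\tau>1\Rightarrow$ possibility.} I would work on the weighted graph $G^w$ on $[n]$ in which each present edge $(u,v)$ carries weight $g(L_{uv})$ and non-edges carry $0$, with weighted adjacency matrix $A^w$. The point of the weight is the off-diagonal identity $\Expect[A^w\mid\sigma]=\tfrac1n\big(\tfrac{a-b}{2}\,\ones\ones^\top+\tau\,\sigma\sigma^\top\big)$: after peeling off the rank-one all-ones part the planted signal has operator norm exactly $\tau$, whereas a moment/trace-method estimate on the centered part shows its bulk spectral radius is governed by $\sqrt{\tfrac{a+b}{2}\Expect_\rho[g(L)^2]}=\sqrt\tau$. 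So signal beats noise precisely when $\tau>\sqrt\tau$, i.e.\ $\tau>1$, and the threshold in the conjecture is visible already at this heuristic level. To make it rigorous at the sharp threshold, the plan is a non-backtracking spectral method: form the weighted non-backtracking operator $B_{(u\to v),(v\to w)}=g(L_{vw})\,\indc{w\neq u}$ (or Massouli\'e's self-avoiding-walk matrix), show that $B$ carries an isolated real eigenvalue at $(1+o(1))\tau$ whose edge-eigenvector, pushed forward to vertices, correlates with $\sigma$, while the remaining spectrum stays in the disk of radius $(1+o(1))\sqrt\tau$; rounding the sign of that eigenvector then gives a positively correlated $\hat\sigma$ whenever $\tau>1$.

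\emph{Main obstacle.} The hard step is precisely the sharp spectral analysis of $B$: rerunning the Bordenave--Lelarge--Massouli\'e tangle-free/trace-method argument with the real edge-weights $g(\ell)$ in place of $\pm1$ --- now every moment $\Expect_\rho[g(L)^j]$ enters the path sums, $\mu$ and $\nu$ may have unequal supports, and the Ihara--Bass reduction must be re-derived for the weighted operator --- is delicate and is where I expect the bulk of the work to lie. A robust but lossy fallback is to replace $B$ by $A^w$ itself and argue via (1) minimum bisection of $G^w$ (a first-moment computation shows the planted bisection has strictly smaller expected cut weight, the surplus $\propto\tau$, finished by Bernstein concentration and a union bound over partitions), (2) the Grothendieck-type semidefinite relaxation of $\max\{x^\top A^wx:\ x\in\{\pm1\}^n,\ \ones^\top x=0\}$, or (3) a spectral method on $A^w$ after deleting the $O(1)$ atypically-high-degree vertices (Davis--Kahan converting the gap into eigenvector correlation); each of these certifies reconstruction for $\tau$ above an absolute constant rather than every $\tau>1$. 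Accordingly the present paper establishes the converse (ii) at the sharp threshold $\tau=1$ and the direct part (i) at a threshold exceeding $1$ by an explicit constant, leaving the sharp direct statement to the non-backtracking analysis outlined above.
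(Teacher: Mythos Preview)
Your overall plan is sound and largely matches the paper. For part~(ii), your tree route~(a) is exactly the paper's proof of Theorem~\ref{ThmNonReconstruction}: couple the depth-$R$ neighborhood of a fixed vertex to the labeled Galton--Watson tree (Lemma~\ref{PropCouplingTree}), reduce $\Prob(\sigma_\rho\mid G,L,\sigma_{\partial G_R})$ to the tree posterior, and invoke tree non-reconstruction below the Kesten--Stigum threshold $\tau<1$ from \cite{Heimlicher12}. For part~(i), your fallback methods (minimum bisection, SDP, spectral on the weighted adjacency $A^w$ with $w(\ell)=g(\ell)$ after high-degree trimming) are precisely Theorems~\ref{ThmMinBisection}, \ref{thm:SBMSDPCorrelated}, and~\ref{ThmSpectralSBM}; your identification $\alpha=(a-b)/2$, $\beta=\tau$ for this weight is correct, and you rightly note that the paper only reaches $\tau>C$ for explicit constants, leaving the sharp $\tau>1$ to a weighted non-backtracking analysis that the paper does not carry out.

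There is, however, a genuine gap in your preferred route~(b) for part~(ii). Contiguity of the \emph{marginal} laws of $(G,L)$ does \emph{not} imply impossibility of correlated reconstruction. The event $\{Q(\sigma,\hat\sigma(G,L))>\epsilon\}$ lives in $(G,L,\sigma)$-space, not $(G,L)$-space, so the implication ``$\Prob^{\mathrm{ER}}_n(A_n)\to 0\Rightarrow \Prob_n(A_n)\to 0$'' cannot be applied to it. Equivalently, your augmented null model $\tilde\Prob^{\mathrm{ER}}_n$ carries an independent uniform $\sigma$, but the second-moment computation you describe only controls $\diff\Prob_n/\diff\Prob^{\mathrm{ER}}_n$ on $(G,L)$; the joint laws on $(G,L,\sigma)$ are not shown to be contiguous (and in general detection and weak recovery thresholds can differ). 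This is why the paper treats the two questions separately: Theorem~\ref{ThmNonReconstruction} proves non-reconstruction via the tree route, while Theorem~\ref{ThmACER} uses the second-moment/small-subgraph-conditioning argument solely for the hypothesis-testing statement. Your second-moment calculation $\log\Expect_{\Prob'_n}[Y_n^2]\to\sum_{k\ge 3}\tau^k/(2k)$ is correct and matches the paper, but it buys you contiguity and hence indistinguishability, not impossibility of recovery; keep it for the corollary and rely on route~(a) for part~(ii).
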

In this paper, we prove (ii) and propose three different algorithms
able to find a type assignment correlated with the true assignment
for $\tau$ big enough.

\paragraph{Notation}
Let $A$ denote the adjacency matrix of the graph $G$, $\identity $ denote the identity matrix,
and $\allones$ denote the all-one matrix.
We write  $X \succeq 0$ if $X$ is positive semidefinite and $X \ge 0$ if all the entries of $X$ are non-negative.
%Let $\calS^n$ denote the set of all $n \times n$ symmetric matrices.
%For $X \in \calS^n$, let $\lambda_2(X)$ denote its second smallest eigenvalue.
For any matrix $Y$, let $\|Y\|$ denote its spectral norm.
For any positive integer $n$, let $[n]=\{1, \ldots, n\}$.
For any set $T \subset [n]$, let $|T|$ denote its cardinality and $T^c$ denote its complement.
We use standard big $O$ notations,
e.g., for any sequences $\{a_n\}$ and $\{b_n\}$, $a_n=\Theta(b_n)$ or $a_n  \asymp b_n$
if there is an absolute constant $c>0$ such that $1/c\le a_n/ b_n \le c$.
Let $\Bern(p)$ denote the Bernoulli distribution with mean $p$ and
$\Binom(N,p)$ denote the binomial distribution with $N$ trials and success probability $p$.
All logarithms are natural and we use the convention $0 \log 0=0$. For a vector $x \in \reals^n$, $\sign(x)$ gives the sign of
$x$ componentwise, and $\|x\|$ denotes the $L_2$ norm. For a graph $G$, let $V(G)$ denote its vertex set and $E(G)$ 
denote its edge set. 

\section{Main Theorems} \label{SectionMainThm}
%This section presents the main theorems in this paper.

\subsection{Minimum Bisection}

To recover the community partition, one approach is via the maximum
likelihood estimation. In view of \prettyref{eq:loglikelihood}, the log-likelihood function can be written as:
\begin{eqnarray*}
\log \mathbb{P}( G,L | \sigma)&=& \frac{1}{2} \sum_{(u,v) \in E(G) } \left[ \log
\frac{a\mu(L_{uv})}{b\nu(L_{uv})}  \sigma_u \sigma_v  + \log \left( \frac{ab}{n^2}  \mu (L_{uv})\nu(L_{uv}) \right) \right] \\
&+&\frac{1}{2} \sum_{(u,v) \notin E(G) } \left[  \log\left(\frac{1-a/n}{1-b/n}\right)\sigma_u \sigma_v  + \log \left( (1-a/n)(1-b/n) \right) \right].
\end{eqnarray*}
Under the constraint $\sum_{u} \sigma_u=0$, the maximum likelihood estimation is equivalent to
 \begin{align}
\max_{\sigma}  \quad & \sum_{(u,v)\in E(G) } \log \left[
  \frac{a (1-b/n)\mu(L_{uv}) }{b (1-a/n) \nu(L_{uv})}  \right] A_{uv} \sigma_u \sigma_v  \nonumber \\
\text{s.t. }  \quad &  \sum_u \sigma_u =0, \;  \sigma \in \{ \pm 1 \}^n \nonumber .
\end{align}
This is equivalent to the minimum bisection on the weighted graph with a specific weight function $w(\ell) = \log \frac{a (1-b/n) \mu(\ell) }{b (1-a/n) \nu(\ell) }$. For a general weighing function $w:\mathcal{L} \to [-1,1]$, the minimum bisection finds a balanced bipartite subgraph in $G$ with the minimum weighted cut, i.e.,
\begin{align}
\min_{\sigma}  & \sum_{(u,v):\sigma_u \neq \sigma_v}  W_{uv}  \nonumber \\
\text{s.t. 	} & \sum_{u} \sigma_u =0, \; \sigma_u \in \{ \pm 1\}, \label{eq:MinimumBisection}
\end{align}
where $W_{uv}=A_{uv} w(L_{uv})$ and $A$ is the adjacency matrix of $G$.

\begin{theorem} \label{ThmMinBisection}
Assume the technical condition: $\sum_\ell a \mu(\ell) w^2(\ell),
\sum_\ell b \nu(\ell) w^2(\ell) > 8 \ln 2$. Then if
\begin{align}
\frac{\sum_\ell (a\mu(\ell)-b\nu(\ell) )w (\ell)}{\sqrt{ \sum_\ell (a\mu(\ell)+b \nu(\ell)) w^2(\ell) }} > \sqrt{128 \ln 2 }  \label{EqMinBisectionCondition},
\end{align}
a.a.s. solutions of the minimum bisection (\ref{eq:MinimumBisection}) are
positively correlated to the true type assignment $\sigma^\ast.$
Moreover, the left hand side of (\ref{EqMinBisectionCondition}) is
maximized when $w(\ell)=\frac{a\mu(\ell)-b\nu(\ell)}{a\mu(\ell)+
  b\nu(\ell) }$, in which case (\ref{EqMinBisectionCondition}) reduces
to $\tau > 64\ln 2$.
\end{theorem}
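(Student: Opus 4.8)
The plan is to show that the minimum bisection cannot return a partition that is asymptotically uncorrelated with $\sigma^\ast$, by a first-moment (union bound) argument over ``bad'' partitions. Write $\hat\sigma$ for a candidate balanced assignment and let $\delta = Q(\sigma^\ast,\hat\sigma) \in [0,1/2]$ be its overlap with the truth; the set of vertices on which $\hat\sigma$ disagrees with the better-aligned version of $\sigma^\ast$ has size $m = (1/2 - \delta) n$. The weighted cut value $\mathrm{cut}(\hat\sigma) = \sum_{(u,v): \hat\sigma_u \ne \hat\sigma_v} W_{uv}$ is a sum over potential edges of independent contributions $A_{uv} w(L_{uv}) \indc{\hat\sigma_u \ne \hat\sigma_v}$, where the edge/label randomness is independent across pairs conditionally on $\sigma^\ast$. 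First I would compute, conditionally on $\sigma^\ast$ and on a fixed $\hat\sigma$ with overlap parameter $\delta$, the conditional mean $\mu(\delta) := \Expect[\mathrm{cut}(\hat\sigma)]$. Splitting pairs by whether they are monochromatic or bichromatic under $\sigma^\ast$ and whether $\hat\sigma$ cuts them, one gets (up to lower-order terms from the $1-p,1-q$ factors and the $\binom{n}{2}$ normalization)
\begin{align}
\mu(\delta) = \frac{n}{4}\Big[ \bar w_a \big(\tfrac12 + 2\delta^2\big) + \bar w_b \big(\tfrac12 - 2\delta^2\big)\Big] + o(n),
\qquad \bar w_a := \sum_\ell a\mu(\ell) w(\ell),\ \bar w_b := \sum_\ell b\nu(\ell) w(\ell),
\end{align}
so that $\mu(0) - \mu(\delta) = \tfrac{n}{2}(\bar w_a - \bar w_b)\,\delta^2$ (the constant $\tfrac12$ here is a placeholder; the exact bookkeeping constant is part of the routine computation). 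Crucially, a higher-overlap partition has strictly smaller expected cut when $\bar w_a > \bar w_b$, which is exactly what the numerator of \eqref{EqMinBisectionCondition} encodes.

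Next I would establish concentration of $\mathrm{cut}(\hat\sigma)$ around $\mu(\delta)$ for each fixed $\hat\sigma$. Since the summands are bounded in $[-1,1]$ and independent (given $\sigma^\ast$), Bernstein's / Hoeffding's inequality gives
\begin{align}
\Prob\{ |\mathrm{cut}(\hat\sigma) - \mu(\delta)| > t \mid \sigma^\ast \} \le 2\exp\!\Big( - \frac{c\, t^2}{V} \Big),
\end{align}
where $V \asymp n \sum_\ell (a\mu(\ell)+b\nu(\ell)) w^2(\ell)$ is the conditional variance proxy (here the technical hypothesis $\sum_\ell a\mu(\ell)w^2(\ell),\ \sum_\ell b\nu(\ell) w^2(\ell) > 8\ln 2$ enters: it guarantees $V$ dominates the entropy term below with the stated slack). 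The number of balanced $\hat\sigma$ with disagreement set of size $m = (1/2-\delta)n$ is $\binom{n/2}{m/2}^2 \cdot (\text{small}) \le \exp(n \cdot 2 H(1/2 - \delta) )$ where $H$ is the binary entropy in nats; in particular for $\delta$ bounded away from $1/2$ this entropy is at most $2\ln 2 \cdot n$ plus a term that vanishes as $\delta \to 0$ — so near $\delta = 0$ the exponent is close to $2\ln 2\, n$. Then I would union-bound: the probability that \emph{some} $\hat\sigma$ with overlap $\delta \le \epsilon$ has cut value below $\mathrm{cut}(\sigma^\ast) \approx \mu(0)$ is at most
\begin{align}
\sum_{\delta \le \epsilon} \exp\!\big( 2 n H(\tfrac12 - \delta) \big)\cdot \exp\!\Big( - \frac{c\, (\mu(0)-\mu(\delta))^2}{V} \Big)
+ (\text{fluctuation of } \mathrm{cut}(\sigma^\ast)),
\end{align}
and since $\mu(0)-\mu(\delta) \asymp n (\bar w_a - \bar w_b)\delta^2$ while $V \asymp n \sum_\ell(a\mu+b\nu)w^2$, the exponent is $\asymp - n\, \delta^4 (\bar w_a - \bar w_b)^2 / \sum_\ell(a\mu+b\nu)w^2$, which must beat $2\ln 2\, n$ for all relevant $\delta$. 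Matching these shows the argument goes through precisely when $(\bar w_a - \bar w_b)^2 / \sum_\ell(a\mu+b\nu)w^2$ exceeds a constant multiple of $\ln 2$; tracking the constants carefully (the $1/4$ from the mean gap, the $1/2$ from the sub-Gaussian tail) yields the threshold $\sqrt{128\ln 2}$ in \eqref{EqMinBisectionCondition}.

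The main obstacle — and where care is needed — is the small-$\delta$ regime: as $\delta \to 0$ the mean gap $\mu(0)-\mu(\delta)$ is only $O(n\delta^2)$, shrinking quadratically, while the entropy $2H(1/2-\delta)$ approaches its maximum $2\ln 2$ rather than vanishing, so the exponent in the union bound degrades. One must show the large-deviation term still dominates uniformly over $\delta \in (0,\epsilon]$; balancing $n\delta^4 \cdot (\text{const})$ against $n(2\ln 2 - 2H(1/2-\delta)) \asymp n\delta^2$ (using $2\ln 2 - 2H(1/2-\delta) \asymp \delta^2$) is what pins down the numerical constant and forces the strict inequality in \eqref{EqMinBisectionCondition}. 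A secondary technical point is controlling $\mathrm{cut}(\sigma^\ast)$ itself — but this is a single event, so a direct Bernstein bound suffices and contributes only an $o(1)$ term. Finally, the optimization claim is elementary: by Cauchy–Schwarz, $\sum_\ell (a\mu(\ell)-b\nu(\ell))w(\ell) \le \sqrt{\sum_\ell (a\mu(\ell)+b\nu(\ell))w^2(\ell)} \cdot \sqrt{\sum_\ell \frac{(a\mu(\ell)-b\nu(\ell))^2}{a\mu(\ell)+b\nu(\ell)}}$ with equality iff $w(\ell) \propto \frac{a\mu(\ell)-b\nu(\ell)}{a\mu(\ell)+b\nu(\ell)}$, and for that choice $\sum_\ell \frac{(a\mu(\ell)-b\nu(\ell))^2}{a\mu(\ell)+b\nu(\ell)} = \frac{2}{a+b}\cdot\frac{2\tau}{a+b}\cdot\frac{(a+b)^2}{2}$... more simply, it equals $\frac{4\tau}{a+b}$, so \eqref{EqMinBisectionCondition} becomes $\frac{4\tau}{a+b} > 128\ln 2 \cdot \frac{2}{?}$ — unwinding the definition \eqref{DefReconstructionThreshold} of $\tau$ gives exactly $\tau > 64\ln 2$.
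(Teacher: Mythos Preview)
Your overall framework---union bound over near-uncorrelated candidate bisections, Chernoff/Bernstein concentration for each, Cauchy--Schwarz for the optimization of $w$---is the same as the paper's. But there is a genuine error in the comparison you set up, and it is what manufactures the ``small-$\delta$ regime'' difficulty you worry about.

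You write ``$\mathrm{cut}(\sigma^\ast)\approx\mu(0)$'' and then build the union bound around the gap $\mu(0)-\mu(\delta)$. This is the wrong baseline: $\sigma^\ast$ has overlap $1/2$ with itself, so $\mathbb{E}[\mathrm{cut}(\sigma^\ast)]=\mu(1/2)$, not $\mu(0)$. The quantity $\mu(0)-\mu(\delta)\asymp n\delta^2$ compares two near-uncorrelated partitions to each other, which is irrelevant to the question; the relevant gap is $\mu(\delta)-\mu(1/2)$, and for $\delta$ near $0$ this equals $\tfrac{2m(n/2-m)}{n}(\bar w_a-\bar w_b)\approx \tfrac{n}{8}(\bar w_a-\bar w_b)$, which is $\Theta(n)$ uniformly in $\delta\in[0,\epsilon]$. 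With the correct baseline there is no degradation as $\delta\to 0$: the deviation scale is $\Theta(n)$, the variance proxy is $\Theta(n)$, and matching the constants against the entropy $\approx n\ln 2$ of the $\binom{n/2}{m}^2\le 2^n$ candidates is exactly what yields $\sqrt{128\ln 2}$. Your ``balancing $n\delta^4$ against $n\delta^2$'' paragraph is an artifact of comparing to the wrong mean and should be discarded.

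The paper makes this comparison cleanly by never concentrating $\mathrm{cut}(\hat\sigma)$ and $\mathrm{cut}(\sigma^\ast)$ separately. It writes the difference directly as
\[
\mathrm{cut}(\hat\sigma)-\mathrm{cut}(\sigma^\ast)=Y_1-Y_2,
\]
where $Y_1=\sum_{(u,v):\,\hat\sigma_u\neq\hat\sigma_v,\ \sigma^\ast_u=\sigma^\ast_v} W_{uv}$ and $Y_2=\sum_{(u,v):\,\hat\sigma_u=\hat\sigma_v,\ \sigma^\ast_u\neq\sigma^\ast_v} W_{uv}$; pairs cut by both (or neither) cancel exactly, so $Y_1$ and $Y_2$ are sums over disjoint sets of $2m(n/2-m)$ i.i.d.\ terms each, with means $y_i$ and second-moment proxies $z_i$ linear in $a\mathbb{E}_\mu[w],a\mathbb{E}_\mu[w^2]$ (resp.\ $b\mathbb{E}_\nu[w],b\mathbb{E}_\nu[w^2]$). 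One Chernoff bound on each with $\lambda=t_iy_i/(4z_i)$ (the technical hypothesis $a\mathbb{E}_\mu[w^2],b\mathbb{E}_\nu[w^2]>8\ln 2$ is exactly what keeps $\lambda\le 1/2$) gives failure probability $2^{-n(1+\epsilon)}$, and the union over $\le 2^n$ partitions with $m\in[(1-\epsilon)n/4,n/4]$ finishes it. Your Cauchy--Schwarz paragraph for the optimal $w$ is correct in spirit; unwinding \eqref{DefReconstructionThreshold} one has $\sum_\ell \frac{(a\mu(\ell)-b\nu(\ell))^2}{a\mu(\ell)+b\nu(\ell)}=2\tau$, so equality in Cauchy--Schwarz turns \eqref{EqMinBisectionCondition} into $\sqrt{2\tau}>\sqrt{128\ln 2}$, i.e.\ $\tau>64\ln 2$.
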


\subsection{Semidefinite relaxation method}\label{SectionSDP}
The minimum bisection is known to be NP-hard in the worst case \cite[Theorem 1.3]{garey76}. In this section,
we present a semidefinite relaxation of the minimum bisection \prettyref{eq:MinimumBisection} which
is solvable in polynomial time, and show it finds an assignment correlated with the true assignment provided
$\tau$ is large enough.
Let $Y=\sigma \sigma^\top$. Then $\sigma_u = \pm 1$  is equivalent to $Y_{uu}=1$, and $\sum_{u} \sigma_u =0$
if and only if $\Iprod{Y}{\allones}=0$. Therefore, \prettyref{eq:MinimumBisection} can be recast as
\begin{align}
\max_{Y,\sigma}  & \; \Iprod{W}{Y} \nonumber  \\
\text{s.t.	} & \; Y=\sigma \sigma^\top  \nonumber  \\
& \;  Y_{uu} =1, \quad u \in [n]\nonumber \\
& \;  \Iprod{\allones}{Y} =0 . \label{eq:SBMMB2}
\end{align}
Notice that the matrix $Y=\sigma \sigma^\top$ is a rank-one positive semidefinite matrix. If we relax this
condition by dropping the rank-one restriction, we obtain the following semidefinite relaxation of \prettyref{eq:SBMMB2}:
%\begin{algorithm}[H]
%\caption{Convex relaxation of $\ML$ estimation \label{alg:convex}}
\begin{align}
\widehat{Y}_{\SDP} = \argmax_{Y}  & \; \langle W, Y \rangle \nonumber  \\
\text{s.t.	} & \; Y \succeq 0  \nonumber \\
 & \;  Y_{uu} =1, \quad u \in [n] \nonumber \\
 & \; \Iprod{\allones}{Y} =0. \label{eq:SBMSDP}
\end{align}
To get an estimator of the type assignment from $\widehat{Y}_{\SDP}$, let $y$ denote an
eigenvector of $\widehat{Y}_{\SDP}$ corresponding to the largest eigenvalue and $\|y\| =\sqrt{n}$.
The following result shows that $\widehat{\sigma}_{\SDP} \triangleq \sign(y)$ is positively correlated with the true type assignment.
\begin{theorem}\label{thm:SBMSDPCorrelated}
Assume the technical condition: $\sum_{\ell} w^2(\ell) ( a \mu(\ell) +b \nu(\ell) ) > 8 \ln 2$. If
\begin{align}
\frac{ \sum_\ell (a\mu(\ell)-b\nu(\ell) )w (\ell)  } { \sqrt{\sum_\ell (a\mu(\ell)+b \nu(\ell)) w^2(\ell)} } > 512 \sqrt{ \ln 2} \label{eq:SBMSDPCondition},
\end{align}
then a.a.s.\ $\widehat{\sigma}_{\SDP}$ is positively correlated to the true type assignment $\sigma^\star$.
Moreover, the left hand side of \prettyref{eq:SBMSDPCondition} is
maximized when $w(\ell)=\frac{a\mu(\ell)-b\nu(\ell)}{a\mu(\ell)+
  b\nu(\ell) }$, in which case (\ref{eq:SBMSDPCondition}) reduces
to $\tau > 2^{17} \ln 2$.
\end{theorem}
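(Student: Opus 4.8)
The plan is to adapt the Grothendieck-inequality argument of Gu\'edon--Vershynin \cite{Vershynin14} for the unlabeled \SBM to the weighted matrix $W$ (entries $W_{uv}=A_{uv}w(L_{uv})$, zero diagonal). Condition on $\sigma=\sigma^\star$, write $\hat Y$ for $\hat Y_{\SDP}$, $\bar W=\Expect[W\mid\sigma]$, $\Delta=W-\bar W$, and put $\alpha=a\sum_\ell\mu(\ell)w(\ell)$, $\beta=b\sum_\ell\nu(\ell)w(\ell)$, $S=\sum_\ell(a\mu(\ell)+b\nu(\ell))w^2(\ell)$; the numerator of \eqref{eq:SBMSDPCondition} is $\alpha-\beta=\sum_\ell(a\mu(\ell)-b\nu(\ell))w(\ell)$, which the hypothesis forces to be positive, so no sign normalization of $w$ is needed. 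A short computation gives $\bar W=\frac{\alpha+\beta}{2n}(\allones-\identity)+\frac{\alpha-\beta}{2n}(\sigma\sigma^\top-\identity)$. I would compare $\hat Y$ to $Y^\star:=\sigma\sigma^\top$: it is positive semidefinite with unit diagonal, and although $\Iprod{\allones}{Y^\star}=(\sum_u\sigma_u)^2=O(n)$ rather than $0$, flipping $\sigma$ on $O(\sqrt n)$ coordinates makes it genuinely feasible at Frobenius distance $o(n)$, which is negligible in every estimate below; I suppress this. By optimality of $\hat Y$, $\Iprod{W}{\hat Y}\ge\Iprod{W}{Y^\star}$. Writing $W=\bar W+\Delta$ and using that $\hat Y,Y^\star$ both have unit diagonal and satisfy $\Iprod{\allones}{\cdot}=0$ (the latter is exactly where the balance constraint is essential --- without it $\hat Y=\allones$), the $\identity$- and $\allones$-parts of $\bar W$ drop out, leaving $\Iprod{\bar W}{\hat Y-Y^\star}=\frac{\alpha-\beta}{2n}\bigl(\Iprod{\sigma\sigma^\top}{\hat Y}-n^2\bigr)$. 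Rearranging and bounding the $\Delta$-term by Grothendieck's inequality (write $\hat Y=UU^\top$ with unit-norm rows $u_i$, so $|\Iprod{\Delta}{\hat Y}|=|\sum_{ij}\Delta_{ij}\Iprod{u_i}{u_j}|\le K_G\norm{\Delta}_{\infty\to1}$ with $K_G\le1.783$, while $|\Iprod{\Delta}{\sigma\sigma^\top}|\le\norm{\Delta}_{\infty\to1}$ trivially, where $\norm{\Delta}_{\infty\to1}:=\max_{x,y\in\{\pm1\}^n}|x^\top\Delta y|$) yields
\[ \Iprod{\sigma\sigma^\top}{\hat Y}\ \ge\ n^2-\frac{2(K_G+1)\,n}{\alpha-\beta}\,\norm{\Delta}_{\infty\to1}. \]

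The heart of the proof is the estimate $\norm{\Delta}_{\infty\to1}\le Cn\sqrt S$ a.a.s.\ for an explicit absolute constant $C$ (a small one; $C\le3$ suffices under the technical hypothesis). In the sparse regime the \emph{spectral} norm of $\Delta$ is inflated to $\Theta(\sqrt{\log n/\log\log n})$ by high-degree vertices, which would ruin the argument; the point is that the cut norm is robust to sparsity because its very definition ranges over the \emph{finite} set $\{\pm1\}^n\times\{\pm1\}^n$, so a union bound is available. For fixed $x,y$, $x^\top\Delta y=\sum_{u<v}\Delta_{uv}(x_uy_v+x_vy_u)$ is a sum of independent, mean-zero, (essentially) $2$-bounded terms whose total variance is at most $2nS$ (using $|w|\le1$ and $\sum_\ell(a\mu(\ell)+b\nu(\ell))w^2(\ell)=S$, for any $\sigma$). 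Bernstein's inequality then makes the deviation probability at level $t\asymp n\sqrt{S\ln2}$ smaller than $e^{-(2\ln2+\Omega(1))n}$ precisely when the sub-Gaussian term $\Theta(nS)$ dominates the sub-exponential correction $\Theta(t)$ at that level, which is exactly what $S>8\ln2$ guarantees, and then the union bound over all $2^{2n}=e^{(2\ln2)n}$ sign vectors closes. Substituting into the display above gives $\Iprod{\sigma\sigma^\top}{\hat Y}\ge(1-\delta)n^2$ with $\delta=(1+o(1))\frac{2(K_G+1)C\sqrt S}{\alpha-\beta}$; hypothesis \eqref{eq:SBMSDPCondition} says $\sqrt S/(\alpha-\beta)<1/(512\sqrt{\ln2})$, so $\delta$ is a small absolute constant --- much smaller than needed, the constant $512$ being nowhere near optimized --- in particular $\delta<1/8$.

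It remains to pass from $\Iprod{\sigma\sigma^\top}{\hat Y}\ge(1-\delta)n^2$ to a correlated partition via the leading eigenvector. Since $\hat Y\succeq0$ has unit diagonal, $|\hat Y_{uv}|\le1$, hence $\Fnorm{\hat Y}^2\le n^2$ and so $\Fnorm{\hat Y-\sigma\sigma^\top}^2=\Fnorm{\hat Y}^2-2\Iprod{\hat Y}{\sigma\sigma^\top}+n^2\le2\delta n^2$; by Weyl's inequality $\lambda_2(\hat Y)\le\opnorm{\hat Y-\sigma\sigma^\top}\le\Fnorm{\hat Y-\sigma\sigma^\top}\le\sqrt{2\delta}\,n$. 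Expanding $\hat Y=\sum_k\lambda_ky_ky_k^\top$ in an orthonormal eigenbasis ($\lambda_1\ge\lambda_2\ge\cdots\ge0$, $\sum_k\lambda_k=\Tr\hat Y=n$) and setting $q_k=\Iprod{y_k}{\sigma}^2/n$ (so $\sum_kq_k=1$), one gets $\lambda_1q_1\ge\tfrac1n\Iprod{\sigma\sigma^\top}{\hat Y}-\lambda_2\ge(1-\delta-\sqrt{2\delta})n$, hence (using $\lambda_1\le n$) $q_1\ge1-\delta-\sqrt{2\delta}$; incidentally $\lambda_1\ge(1-\delta-\sqrt{2\delta})n\gg\lambda_2$, so the leading eigenvector is unambiguous up to sign. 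Thus, normalizing $\norm y=\sqrt n$ and orienting $y$ so that $\Iprod{y}{\sigma}\ge0$, $\norm{y-\sigma}^2=2n-2\Iprod{y}{\sigma}=O(\sqrt\delta)\,n$; since $\sign(y_i)\ne\sigma_i$ forces $(y_i-\sigma_i)^2\ge1$, $d(\sign(y),\sigma)\le\norm{y-\sigma}^2=O(\sqrt\delta)n$, and therefore $Q(\sigma,\hat\sigma_{\SDP})\ge\tfrac12-O(\sqrt\delta)>0$ a.a.s.

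Finally, the optimization over $w$ is Cauchy--Schwarz: with $c_\ell=a\mu(\ell)-b\nu(\ell)$ and $d_\ell=a\mu(\ell)+b\nu(\ell)>0$, $\frac{\sum_\ell c_\ell w(\ell)}{\sqrt{\sum_\ell d_\ell w^2(\ell)}}\le\sqrt{\sum_\ell c_\ell^2/d_\ell}$, with equality at $w(\ell)\propto c_\ell/d_\ell=\frac{a\mu(\ell)-b\nu(\ell)}{a\mu(\ell)+b\nu(\ell)}\in[-1,1]$; for this $w$ one has $\sum_\ell d_\ell w^2(\ell)=\sum_\ell c_\ell^2/d_\ell=2\tau$, so the left side of \eqref{eq:SBMSDPCondition} equals $\sqrt{2\tau}$ and the technical condition becomes $\tau>4\ln2$, whence \eqref{eq:SBMSDPCondition} reads $\tau>2^{17}\ln2$ (which dominates $\tau>4\ln2$). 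The one genuinely delicate step is the cut-norm bound of the second paragraph: obtaining the correct order $\norm{\Delta}_{\infty\to1}\asymp n\sqrt S$ --- so that the ``noise'' is of the same order as, and is beaten by, the ``signal'' $\alpha-\beta$ --- rests on the cut norm (unlike the spectral norm) surviving sparsity, together with the Bernstein estimate in which $S>8\ln2$ is exactly what keeps the relevant deviation in the sub-Gaussian regime against the $e^{(2\ln2)n}$ union bound; everything else is bookkeeping.
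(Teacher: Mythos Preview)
Your proof is correct and follows essentially the same route as the paper's: optimality of $\hat Y$, decomposition $W=\bar W+\Delta$, Grothendieck's inequality to pass from $\Iprod{\Delta}{\hat Y}$ to the cut norm, Bernstein plus a union bound over $\{\pm1\}^{2n}$ to control $\|\Delta\|_{\infty\to1}$ (with the technical condition $S>8\ln 2$ keeping the deviation sub-Gaussian), and finally an eigenvector perturbation step plus the Hamming--$\ell_2$ inequality. The only cosmetic differences are that the paper bounds $|\Iprod{\Delta}{Y^\star}|$ separately by a single Bernstein application (yielding $O(\sqrt{n\log n\,S})$, sharper but unnecessary) rather than by $\|\Delta\|_{\infty\to1}$, and quotes Davis--Kahan $\sin\theta$ rather than your direct eigendecomposition computation; neither alters the argument.
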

In the stochastic block model without labels, \ie, $\mu=\nu$, condition (\ref{eq:SBMSDPCondition}) reduces to $(a-b)^2> 2^{18} \ln 2 (a+b)$;
similar conditions with a different constant have been proved in \cite[Theorem 1.1]{Vershynin14} using the Grothendieck's inequality. Our proof builds upon the
analysis in \cite{Vershynin14}.
\subsection{Spectral Method}

In this section, we present a polynomial-time spectral algorithm based on the weighted adjacency matrix
$W$ and show that this algorithm allows us to
find an assignment correlated with the true assignment provided
$\tau$ is large enough.

Note that $\mathbb{E}[W | \sigma ] =\frac{\alpha}{n}  \allones+ \frac{\beta}{n} \sigma \sigma ^\top- \frac{\alpha+\beta}{n} \mathbf{I}$ with
\begin{align}
\alpha=\frac{1}{2} \sum_{\ell} w(\ell) (a \mu(\ell) + b \nu(\ell)), \nonumber \\
\beta=\frac{1}{2} \sum_{\ell} w(\ell) (a \mu(\ell) -b \nu(\ell)). \label{EqDefAlphaBeta}
\end{align}
The term $\frac{\alpha+\beta}{n} \mathbf{I}$ is irrelevant to the main
results (thanks to Weyl's perturbation theorem) and neglected for simplicity.
Let $D=W-\frac{\alpha}{n}  \allones$ and then $\mathbb{E}[D | \sigma ]= \frac{\beta}{n} \sigma \sigma ^\top$ has rank one with singular value $\beta$. Hence, it makes sense to define $\hat{D}$ as the best rank-1 approximation of the
matrix $D$. In other words, if $D=\sum_{i}v_ix_ix_i^\top$ is the
eigenvalue decomposition of $D$ with eigenvalues $|v_1| \geq | v_2 | \geq \dots$, we
define $\hat{D} = v_1x_1x_1^\top$. Then if the matrix $D$ is close
to its mean $\mathbb{E}[D | \sigma ]$ in the spectral norm, we expect $v_1$ to be close to $\beta$, and $\sign (x_1)$ to
be correlated with $\sigma$.
%Note that $D$ is very similar to the modularity matrix defined in \cite{Newman06} and thus dividing the vertices into two communities according to ${\rm sign} (x_1)$ can be seen as an algorithm to maximize the modularity.
Unfortunately, in the sparse regime, there are
vertices of degree $\Omega(\frac{\log n} {\log \log n})$ and thus the
largest singular value of $W$ could reach $\Omega(\sqrt{\frac{\log n}
  { \log \log n}})$ which is much higher than $\beta$.
%For the regular graph, such issue does not exist. In particular, consider a regular version of the labeled stochastic block model. Every vertex is assigned a type from $\{\pm 1 \}$ arbitrarily and the graph $G$ is generated by a random regular-$a$ graph model independently from the type assignment. Every edge $e=(u,v)$ is labeled with $\ell \in \mathcal{L}$ with probability $\mu(\ell)$ if $\sigma_u=\sigma_v$ and with probability $\nu(\ell)$ otherwise.
%\begin{theorem} \label{ThmSpectralRegular}
%Let $w(\ell)=\frac{\mu(\ell)-\nu(\ell)}{\mu(\ell)+\nu(\ell)}$. For the regular version of the labeled stochastic block model, there exists a constant $C(\mu,\nu)$ which only depends on $\mu$ and $\mu$ such that if $\tau>C(\mu,\nu)$, then the eigenvector corresponding to the largest eigenvalue of $W$ is positively correlated with the true type assignment a.s.s..
%\end{theorem}
In order to take care of the issue, we begin with a preliminary step to clean the spectrum of $W$:
we remove all edges incident to vertices in the graph with degree larger than
$\frac{3}{2}\frac{a+b}{2}$. To summarize, for a given weight function
$w(\ell)$, our algorithm $\rm{Spectral-Reconstruction}$ has the following structure:
\begin{enumerate}
%\item For vertices with degree larger than $\frac{3}{2}\frac{a+b}{2}$
%  and assign a random type to these vertices.
\item Remove edges incident to vertices with degree larger than $\frac{3}{2}\frac{a+b}{2}$ and let $G'$ denote
the resulting graph. Define $W'$ to be the weighted adjacency matrix of $G'$.
%Define $W^\prime$ by setting to zero the rows and columns of $W$ corresponding to vertices removed.
\item Let $\hat{x}$ be the left-singular vector associated with
  the largest singular value of $D^\prime=W^\prime-\frac{\alpha}{n}  \allones $, i.e.,
\begin{align}
\hat{x} = \arg \max\{ | x^\top D^\prime x |  ,\: \|x\|=1 \}. \label{eq:spectral}
\end{align}
Output ${\rm sign} (\hat{x})$ for the types of the vertices.
\end{enumerate}
Observe that \prettyref{eq:spectral} can be seen as a (non-convex) relaxation of the minimum bisection (\ref{eq:MinimumBisection}) by replacing the integer constraint with the unit-norm constraint and relaxing the constraint $\sum_{u} \sigma_u=0$ to be a regularized term $\frac{\alpha}{n} x^\top \allones x $ in the objective function. $\rm{Spectral-Reconstruction}$ needs estimates of $\alpha$ and $a+b$, which can be well approximated by $ \frac{1}{n} \mathbf{1}^\top W \mathbf{1}$ and $ \frac{2}{n} \mathbf{1}^\top A \mathbf{1}$, respectively. To simplify the analysis, we will assume that the exact values of $\alpha$ and $a+b$ are known.

\begin{theorem}\label{ThmSpectralSBM}
Assume $a>b>C_0$ for some sufficiently large constant $C_0$. There exists a universal constant $C$ (i.e.\ not depending on $a$, $b$,
$\mu$ or $\nu$) such that if $\beta^2 >C(a+b)$, where $\beta$
is defined in \eqref{EqDefAlphaBeta},  then a.a.s.\ $\rm{Spectral-Reconstruction}$
outputs a type assignment correlated with the true assignment.
In the particular case, where $w(\ell) = \frac{a\mu(\ell)-b\nu(\ell)}{a\mu(\ell)+
  b\nu(\ell) }$, the condition $\beta^2>C(a+b)$ reduces to $\tau> \sqrt{C(a+b) }$.
\end{theorem}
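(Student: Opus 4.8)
The plan is to show that after the degree-trimming step, the weighted adjacency matrix $W'$ concentrates around its conditional mean $\EE[W\mid\sigma]=\frac{\alpha}{n}\allones+\frac{\beta}{n}\sigma\sigma^\top-\frac{\alpha+\beta}{n}\identity$ in spectral norm, and then to convert this spectral concentration into a statement about overlap via an eigenvector perturbation argument. Concretely, I would proceed in four stages.

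First, \emph{spectral concentration of the trimmed matrix.} Set $D'=W'-\frac{\alpha}{n}\allones$, so $\EE[D'\mid\sigma]$ is rank one with top singular value $|\beta|$. I would prove that a.a.s.\ $\|D'-\EE[D\mid\sigma]\|\le C_1\sqrt{a+b}$ for a universal constant $C_1$. Without trimming this fails because vertices of degree $\Theta(\log n/\log\log n)$ create spurious large eigenvalues of order $\sqrt{\log n/\log\log n}$; the trimming threshold $\frac32\cdot\frac{a+b}2$ kills exactly those. The standard route (as in \cite{Coja-oghlan10,Massoulie13,Vershynin14} for the unlabeled case) is: (i) on the event that no vertex has degree exceeding the threshold the graph is unchanged, and deviations in that regime are controlled; (ii) since degrees are $\Binom(n-1,(a+b)/2n)$-like, a union bound over vertices and over $\pm1$-valued test vectors, together with a bounded-difference/Chernoff estimate on $x^\top (W-\EE W)x$ after restricting to bounded-degree vertices, yields the bound. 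One must be careful that $w$ takes values in $[-1,1]$ so each entry of $W$ is bounded, which is what makes the Bernstein-type bounds go through with constants depending only on $a+b$, not on $\mu,\nu$. I expect this to be the main obstacle: getting the trimming-plus-concentration argument to yield a bound of the right order $\sqrt{a+b}$ with a \emph{universal} constant, uniformly over all label distributions, is the technically delicate part, and is the reason the hypothesis $a>b>C_0$ is imposed.

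Second, \emph{eigenvector recovery.} Given $\|D'-\frac{\beta}{n}\sigma\sigma^\top\|\le C_1\sqrt{a+b}$ and the hypothesis $\beta^2>C(a+b)$ with $C$ chosen $\gg C_1^2$, the Davis--Kahan $\sin\theta$ theorem gives that the top left-singular vector $\hat x$ of $D'$ (normalized to $\|\hat x\|=1$) satisfies, up to a global sign, $\|\hat x-\tfrac1{\sqrt n}\sigma\|^2\le \frac{C_2(a+b)}{\beta^2}$ for a universal $C_2$. Third, \emph{from $\ell_2$ closeness to overlap.} A vector within squared $\ell_2$-distance $\delta<1$ of $\frac1{\sqrt n}\sigma$ must agree in sign with $\sigma$ on at least $(1-\delta)n$ coordinates (each sign disagreement contributes at least $1/n$ to the squared distance), so $Q(\sigma,\sign(\hat x))\ge \frac12-\delta\ge \frac12-\frac{C_2(a+b)}{\beta^2}>0$ once $C$ is large enough. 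This is the place where one also handles the neglected $\frac{\alpha+\beta}{n}\identity$ term: it is a rank-one (indeed scalar) shift of order $O(1/n)$ in norm and is absorbed by Weyl's inequality without affecting the eigenvector estimate. Finally, \emph{the specialization.} Plugging $w(\ell)=\frac{a\mu(\ell)-b\nu(\ell)}{a\mu(\ell)+b\nu(\ell)}$ into the definition \eqref{EqDefAlphaBeta} of $\beta$ gives $\beta=\frac12\sum_\ell\frac{(a\mu(\ell)-b\nu(\ell))^2}{a\mu(\ell)+b\nu(\ell)}$, and a short computation identifies $2\beta$ with $\tau$ (using $\tau=\frac12\sum_\ell\frac{(a\mu(\ell)-b\nu(\ell))^2}{a\mu(\ell)+b\nu(\ell)}$ from \eqref{DefReconstructionThreshold}), so $\beta^2>C(a+b)$ becomes $\tau>\sqrt{C(a+b)}$ — wait, more precisely $\tau=2\beta$ would give the stated form after renaming the constant, which is the claimed reduction.

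A couple of remarks on bookkeeping. One should verify that replacing $\alpha,a+b$ by their empirical estimates $\frac1n\ones^\top W\ones$ and $\frac2n\ones^\top A\ones$ only perturbs $D'$ by $o(\sqrt{a+b})$ in norm — these concentrate with fluctuations $O(1/\sqrt n)$ — but, as the paper states, we may assume the exact values are known, so this can be deferred to a remark. Also, the correlated-recovery conclusion is an a.a.s.\ statement, so all the high-probability bounds above need only hold with probability $1-o(1)$, which the Chernoff/Bernstein estimates comfortably supply.
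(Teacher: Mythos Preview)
Your proposal is correct and follows essentially the same route as the paper: spectral concentration of the trimmed weighted adjacency matrix (proved in the paper as a separate lemma via the Feige--Ofek $\epsilon$-net argument with the light/heavy pair decomposition and discrepancy bounds, which is what your ``standard route'' references amount to), then Davis--Kahan, then the $\ell_2$-to-Hamming conversion $d(\sigma,\sign(x))\le\|\sigma-\sqrt{n}x\|^2$. One minor slip in your final specialization: with $w(\ell)=\frac{a\mu(\ell)-b\nu(\ell)}{a\mu(\ell)+b\nu(\ell)}$ the computation gives $\beta=\tau$ exactly (both equal $\tfrac12\sum_\ell (a\mu(\ell)-b\nu(\ell))^2/(a\mu(\ell)+b\nu(\ell))$), not $2\beta=\tau$, so $\beta^2>C(a+b)$ is directly $\tau>\sqrt{C(a+b)}$.
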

In the stochastic block model without labels, letting $w(\ell)=1$, condition $\beta^2>C(a+b)$ reduces to $(a-b)^2> 4 C(a+b)$;
the sharp condition $(a-b)^2> 2(a+b)$ has been proved recently in \cite{Mossel13,Massoulie13}.
Compared to point (i) in the Conjecture \ref{Conjecture}, our result
does not give the right order of magnitude when $a$ and $b$ are large. Indeed, we are able to
improve it if we allow $a$ and $b$ to grow with $n$.

\begin{theorem}\label{ThmSpectralSBM-large}
Assume that $\min(a,b) =\Omega(\log^6n)$.
If
\begin{align}
\frac{ [\sum_\ell (a\mu(\ell)-b\nu(\ell) )w (\ell) ]^2 } { \sum_\ell
  (a\mu(\ell)+b \nu(\ell)) w^2(\ell) } > 256 \label{EqSpecCondition},
\end{align}
then $\rm{Spectral-Reconstruction}$ %leaving out step 1)
outputs a type assignment correlated with the true assignment a.a.s.
Moreover, the left hand side of (\ref{EqSpecCondition}) is
maximized when $w(\ell)=\frac{a\mu(\ell)-b\nu(\ell)}{a\mu(\ell)+
  b\nu(\ell) }$, in which case (\ref{EqSpecCondition}) reduces
to $\tau > 128$. With this choice of $w(\ell)$, as soon as $\tau\to
\infty$, $\rm{Spectral-Reconstruction}$ %leaving out step 1)
outputs the true assignment
for all vertices except $o(n)$ a.a.s.
\end{theorem}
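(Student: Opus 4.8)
The plan is to establish Theorem~\ref{ThmSpectralSBM-large} by following the same architecture as the proof of Theorem~\ref{ThmSpectralSBM}, but exploiting the additional assumption $\min(a,b)=\Omega(\log^6 n)$ to sharpen the spectral estimates. The key structural fact is the decomposition $W = \frac{\alpha}{n}\allones + \frac{\beta}{n}\sigma\sigma^\top - \frac{\alpha+\beta}{n}\identity + (W - \Expect[W\mid\sigma])$, so that $D' = W' - \frac{\alpha}{n}\allones$ has conditional mean (up to the negligible diagonal term) equal to the rank-one matrix $\frac{\beta}{n}\sigma\sigma^\top$ with nonzero singular value $\beta$. First I would argue that in the polylogarithmic-degree regime the trimming step is essentially inert: by standard concentration (e.g.\ Chernoff/Bernstein for the $\Binom(n-1,p)$ and $\Binom(n-1,q)$ degree distributions), a.a.s.\ \emph{every} vertex has degree at most $\frac32\cdot\frac{a+b}{2}$ when $\min(a,b)=\Omega(\log n)$, so $G'=G$, $W'=W$, and no boundary effects from trimming need to be controlled. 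Second, and this is the crux, I would show the concentration bound $\|D - \Expect[D\mid\sigma]\| = \|W - \Expect[W\mid\sigma]\| + o(1) \le C'\sqrt{\sum_\ell (a\mu(\ell)+b\nu(\ell))w^2(\ell)}\,(1+o(1))$ a.a.s. Since $W$ is a sum of independent mean-zero random matrices (one per vertex pair) with entries bounded by $\|w\|_\infty\le 1$ and per-row variance proxy $\frac1n\sum_\ell(a\mu(\ell)+b\nu(\ell))w^2(\ell) =: \frac{s^2}{n}$ with $s^2 = \Omega(\log^6 n)$ growing, a matrix Bernstein inequality (or the combinatorial trace-moment / path-counting method as in Vu's work on sparse random matrices) gives $\|W-\Expect[W\mid\sigma]\| \le (2+o(1))\sqrt{s^2}$; the regime $s^2 \gg \mathrm{polylog}(n)$ is precisely what kills the lower-order ``heavy vertex'' contributions that force the trimming in the bounded-degree case.

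Given the operator-norm bound, the rest is the standard Davis--Kahan / $\sin\Theta$ argument. Writing $\hat x$ for the leading singular vector of $D'$ and $\frac{1}{\sqrt n}\sigma$ for the unit leading singular vector of $\Expect[D\mid\sigma]$ (singular value $\beta$), I get $\min_{\pm}\|\hat x \mp \tfrac{1}{\sqrt n}\sigma\|^2 \le \frac{C''\|W-\Expect[W\mid\sigma]\|^2}{\beta^2} \le \frac{C'''\,s^2}{\beta^2}$, and by hypothesis \eqref{EqSpecCondition} this ratio is $\beta^2/s^2 \cdot \frac14 \cdot \frac{1}{16}$-type, i.e.\ $\frac{[\sum_\ell(a\mu(\ell)-b\nu(\ell))w(\ell)]^2}{\sum_\ell(a\mu(\ell)+b\nu(\ell))w^2(\ell)} = \frac{(2\beta)^2}{s^2} > 256$ means $\beta^2/s^2 > 64$, comfortably small enough that $\hat x$ agrees in sign with $\frac{1}{\sqrt n}\sigma$ on a $1/2+\Omega(1)$ fraction of coordinates (a Markov/counting bound converting the $\ell_2$ closeness of $\hat x$ and $\sigma/\sqrt n$ into a Hamming bound on $\sign(\hat x)$ versus $\sigma$). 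This yields positive correlation. The identification of the optimal $w(\ell)=\frac{a\mu(\ell)-b\nu(\ell)}{a\mu(\ell)+b\nu(\ell)}$ and the resulting reduction of \eqref{EqSpecCondition} to $\tau>128$ is the same Cauchy--Schwarz computation already used in Theorems~\ref{ThmMinBisection} and~\ref{thm:SBMSDPCorrelated}: with that choice, $2\beta = \sum_\ell\frac{(a\mu(\ell)-b\nu(\ell))^2}{a\mu(\ell)+b\nu(\ell)}$ and $s^2 = \sum_\ell\frac{(a\mu(\ell)-b\nu(\ell))^2}{a\mu(\ell)+b\nu(\ell)}$, so the left side of \eqref{EqSpecCondition} equals $\frac{(2\beta)^2}{s^2} = 2\beta$, and one checks $2\beta = 2\tau$ after substituting $p=a/n$, $q=b/n$ into \eqref{DefReconstructionThreshold}; hence the condition becomes $\tau > 128$.

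For the final assertion — that $\tau\to\infty$ forces all but $o(n)$ vertices to be labeled correctly — I would push the same chain quantitatively: $\min_{\pm}\|\hat x \mp \tfrac{1}{\sqrt n}\sigma\|^2 \le \frac{C'''}{\tau}\to 0$, and since each coordinate where $\sign(\hat x_u)\neq \sigma_u$ contributes at least a constant (roughly $1/n$) to $\|\hat x - \sigma/\sqrt n\|^2$ only after normalizing $\|\hat x\|=1$, a more careful bookkeeping shows the number of sign errors is $O(n/\tau) = o(n)$. Concretely, if $k$ coordinates have $|\hat x_u - \sigma_u/\sqrt n| \ge \frac{1}{2\sqrt n}$ then $\|\hat x - \sigma/\sqrt n\|^2 \ge \frac{k}{4n}$, so $k \le 4n\cdot\frac{C'''}{\tau} = o(n)$, and a sign disagreement at $u$ forces $|\hat x_u - \sigma_u/\sqrt n|\ge \frac{1}{\sqrt n}$ (since $\hat x_u$ and $\sigma_u/\sqrt n$ then have opposite signs and $|\sigma_u/\sqrt n| = 1/\sqrt n$). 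I expect the main obstacle to be the sharp operator-norm concentration $\|W - \Expect[W\mid\sigma]\| \le (2+o(1))\sqrt{s^2}$ in the intermediate density regime — unlike the dense case one cannot simply invoke a black-box matrix Bernstein with clean constants, and unlike the bounded-degree case one wants to \emph{avoid} the $\log$-factor losses; getting the clean $\sqrt{s^2}$ scaling (which is what makes the threshold a pure constant $128$ rather than something $\log$-dependent) will require either a careful application of the Bandeira--van Handel type non-asymptotic bounds or an explicit high-moment trace estimate, together with verifying that the assumption $\min(a,b)=\Omega(\log^6 n)$ is exactly the slack needed to absorb the error terms and the (now vacuous) trimming step.
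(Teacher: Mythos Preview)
Your proposal is correct and follows essentially the same route as the paper: the paper's proof simply reruns the Davis--Kahan argument from Theorem~\ref{ThmSpectralSBM} (the bound $\tfrac{1}{n}\min_{\pm}d(\sigma,\pm\sign(\hat x))\le \tfrac{8}{\beta^2}\|W'-\Expect[W\mid\sigma]\|^2$) after replacing Lemma~\ref{lmm:spectrumsparse} by the sharper estimate $\|W-\Expect[W\mid\sigma]\|\le 2\sqrt{\sum_\ell(a\mu(\ell)+b\nu(\ell))w^2(\ell)}$, which it obtains as a direct citation of Vu's Theorem~1.4 (the $\Omega(\log^6 n)$ hypothesis is exactly Vu's variance condition). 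Your observation that trimming is vacuous in this regime, your Cauchy--Schwarz identification of the optimal $w$, and your derivation of the $o(n)$-error claim from $\|W-\Expect[W\mid\sigma]\|^2/\beta^2\to 0$ are all the intended steps; the paper leaves the last of these entirely implicit.
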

Note that in the regime $\min(a,b) =\Omega(\log^6n)$, the degrees are very concentrated and step 1) of the algorithm can be removed without harm.
The simulation results, depicted in Fig.~\ref{FigSBMSpectralMethod}, further indicate that $\rm{Spectral-Reconstruction}$ leaving out step 1) outputs a positively correlated assignment when above the threshold. In the simulation, we assume for simplicity only two labels: $r$ and $b$, and define $\mu(r)=0.5+\epsilon$ and $\nu(r)=0.5-\epsilon$. We generate the graph from the labeled stochastic block model with $n=1000$ vertices for various $a,b,\epsilon$. Fix $a,b$, we plot the overlap $Q$ against $\epsilon$ and indicate the threshold $\tau=1$ as a vertical dash line. All plotted values are averages over $100$ trials.

\begin{figure}
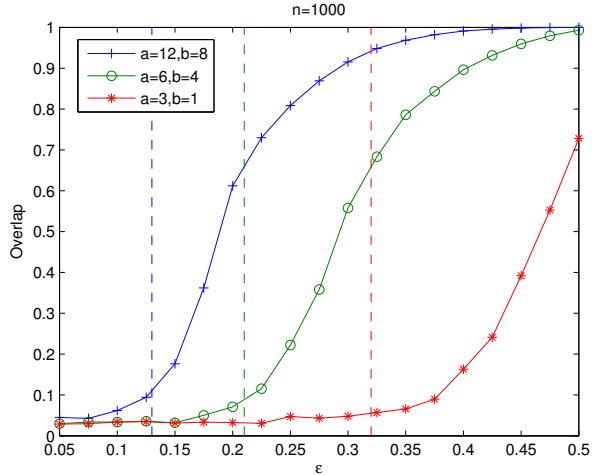

\centering
\post{FigSBMSpectralMethod}{3.5in}
\centering
\caption{The overlap $Q$ against $\epsilon$ from $0.05$ to $0.5$.}
\label{FigSBMSpectralMethod}
\end{figure}

Note that our algorithm is most efficient when the parameters ($a$,
$b$, $\mu$ and $\nu$) of the model are known as the optimal weight
function depends on these parameters. In the case where the labels are
uninformative, i.e. $\mu=\nu$, our algorithm is very simple, does not
require to know the values $a$ and $b$, and in the range of Theorem
\ref{ThmSpectralSBM-large}, has the best known performance
guarantee (see \cite[Table I]{Chen12}).

\subsection{Converse Result}
This section proves part (ii) of Conjecture \ref{Conjecture}. In particular, we show that when $\tau<1$, asymptotically it is impossible to tell whether any two vertices are more likely to belong to the same community. It further implies that reconstructing a positively correlated type assignment is fundamentally impossible.
\begin{theorem} \label{ThmNonReconstruction}
If $\tau<1$, then for any fixed vertices $\rho$ and $v$,
\begin{align}
\mathbb{P}_n (\sigma_\rho=+1 | G, L, \sigma_v=+1) \to 1/2 \text{ a.a.s}.
\end{align}
\end{theorem}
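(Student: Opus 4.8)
The plan is to prove non-reconstruction via the standard ``broadcasting on trees'' / local weak limit approach used for the unlabeled sparse block model in \cite{Mossel12}, adapted to carry the edge labels.

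\paragraph{Step 1: Local structure.} First I would establish that the labeled graph $\calG(n,p,q,\mu,\nu)$ converges locally (in the Benjamini–Schramm / local weak sense) to a labeled Galton–Watson tree with an associated broadcast process. Concretely, for a fixed vertex $\rho$, a neighborhood of $\rho$ of radius $R = R(n) \to \infty$ slowly (e.g.\ $R \asymp \epsilon \log_{(a+b)/2} n$) is, with probability $1-o(1)$, a tree; and conditioned on the types $\sigma$, this tree has the law of a Poisson Galton–Watson tree of mean degree $(a+b)/2$ in which each offspring is reached by an edge that, given the parent's type, independently has type flipped with probability $\frac{b}{a+b}$ (unflipped w.p.\ $\frac{a}{a+b}$), and the edge carries a label drawn from $\mu$ if the two endpoints share a type and from $\nu$ otherwise. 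I would also need the two-point version: the pair $(\rho, v)$ for fixed $v$, whose radius-$R$ neighborhoods are disjoint trees a.a.s.\ (since $v$ is at graph distance $\gg R$ from $\rho$ a.a.s.), so conditioning on $\sigma_v = +1$ affects $\rho$ only through the (negligible) global information, and it suffices to show the tree reconstruction problem is non-solvable.

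\paragraph{Step 2: Reduction to a tree broadcast problem and the Kesten–Stigum bound.} By the local-limit coupling, it is enough to show that in the limiting labeled broadcast process on the Galton–Watson tree, the posterior of the root type given the types-plus-labels observed on the boundary of generation $R$ (equivalently, given the whole labeled subtree) converges to the uniform distribution as $R \to \infty$; i.e.\ the labeled tree reconstruction problem is \emph{not} solvable. The key point is to identify the relevant second eigenvalue. Here the ``channel'' at each edge is: send the parent's type $\pm 1$; the child's type is obtained by a binary-symmetric flip with probability $\frac{b}{a+b}$, and simultaneously one observes a label whose distribution depends on whether a flip occurred. A direct computation of the linearized recursion (the ``magnetization'' recursion) shows that the effective branching-times-transmission parameter is exactly $\tau$ as defined in \prettyref{DefReconstructionThreshold}: the factor $\frac{a+b}{2}$ is the mean offspring number, and $\sum_\ell \frac{a\mu(\ell)+b\nu(\ell)}{a+b}\left(\frac{a\mu(\ell)-b\nu(\ell)}{a\mu(\ell)+b\nu(\ell)}\right)^2$ is the per-edge $\chi^2$-type contraction coefficient of the joint (type, label) channel. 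The classical result of Evans–Kenyon–Peres–Schulman / Mossel–Peres then gives that when this product $\tau < 1$ the reconstruction problem is unsolvable.

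\paragraph{Step 3: Making the comparison rigorous.} To turn the tree statement into Theorem \ref{ThmNonReconstruction} I would: (a) show $\PP_n(\sigma_\rho = +1 \mid G, L, \sigma_v = +1)$ is, up to $o(1)$ in probability, equal to the posterior computed from the radius-$R$ labeled neighborhood of $\rho$ alone — this uses that conditioning on more data only ``helps'', plus a bound that the information in $\sigma_v$ and in the far-away part of the graph about $\sigma_\rho$ is vanishing (a coupling/total-variation argument between the true conditional law and the tree model, as in \cite{Mossel12}); and (b) invoke non-solvability to conclude the radius-$R$ posterior $\to 1/2$ as $n\to\infty$ (so $R\to\infty$). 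A clean way to package (b) is via the second-moment / correlation estimate: let $X_R$ be the linear estimator (weighted count of boundary types times label-weights $\frac{a\mu(\ell)-b\nu(\ell)}{a\mu(\ell)+b\nu(\ell)}$) at depth $R$; one shows $\Expect[X_R \mid \sigma_\rho]$ grows like $\tau^{R/2}$-ish while $\var(X_R)$ grows at a comparable-or-faster rate, so when $\tau<1$ the normalized correlation between $X_R$ and $\sigma_\rho$ vanishes, and since the linear estimator is optimal to leading order below Kesten–Stigum, the full posterior is asymptotically uninformative.

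\paragraph{Main obstacle.} The routine parts are the local-limit coupling and the linearized recursion. The delicate step is Step 3(a)/(b): proving that below $\tau$ the \emph{exact} posterior — not just the linear estimator — is asymptotically uniform, and that the genuine graph (with its $o(n)$-sized but non-trivial global dependencies, cycles, and the conditioning on $\sigma_v$) can be replaced by the idealized tree with only $o(1)$ error in the relevant conditional probability. This is exactly where \cite{Mossel12} does substantial work in the unlabeled case; the labels do not change the structure of the argument but one must check that all the second-moment and coupling bounds go through with the label-dependent channel, i.e.\ that the correct scalar controlling everything is $\tau$ and that the label distributions $\mu,\nu$ enter only through this scalar. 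I would therefore structure the proof to follow \cite{Mossel12} closely, isolating the one place (the per-edge channel eigenvalue computation) where the labels must be handled, and citing their machinery for the rest.
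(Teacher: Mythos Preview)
Your approach is essentially the paper's: couple the labeled neighborhood $(G_R,L_{G_R},\sigma_{G_R})$ of $\rho$ to a labeled Galton--Watson broadcast tree (your Step~1 is Lemma~\ref{PropCouplingTree}), then invoke tree non-reconstruction when $\tau<1$. Two points deserve sharpening. For Step~3(a), the paper does not attempt a direct total-variation comparison of posteriors; instead it \emph{conditions on the boundary types} $\sigma_{\partial G_R}$ and shows by an explicit factorization of the likelihood that $\Prob(\sigma_\rho\mid G,L,\sigma_{\partial G_R})=\Prob(\sigma_\rho\mid G_R,L_{G_R},\sigma_{\partial G_R})$ exactly, then uses the law of total variance together with Lemma~4.7 of \cite{Mossel12} to remove the conditioning on $\sigma_v$. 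Your remark that ``conditioning on more data only helps'' points in the wrong direction: you need to show the data outside $G_R$ does \emph{not} help, and the boundary-conditioning device is what delivers this cleanly.

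Your Step~3(b), as written, is a genuine gap. Showing that the linear estimator $X_R$ has vanishing normalized correlation with $\sigma_\rho$ only establishes that the Kesten--Stigum estimator fails; it does \emph{not} imply the full posterior is asymptotically uniform. The assertion that ``the linear estimator is optimal to leading order below Kesten--Stigum'' is not a general principle---it is a theorem specific to the binary symmetric broadcast and must be proved, not assumed. The paper avoids this by citing \cite{Heimlicher12}, which proves non-reconstruction on the labeled Galton--Watson tree directly. Your EKPS citation in Step~2 handles only the homogeneous Ising case; the extension to label-dependent (random, i.i.d.) edge biases is precisely the content of \cite{Heimlicher12}. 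So either cite that result for the tree step, or reproduce its recursive magnetization argument---but the second-moment computation alone is insufficient.
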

\begin{remark}
Reconstructing a positively correlated type assignment is harder than telling whether any two vertices are more likely to belong to the same community. In particular, given a positively correlated type assignment $\hat{\sigma}$, for two vertices randomly chosen, they are more likely to belong to the same community if they have the same type in $\hat{\sigma}$.
\end{remark}
Theorem \ref{ThmNonReconstruction} is related to the Ising spin model in the statistical physics \cite{Peres00,Mossel04}, and it essentially says that there is no long range correlation in the type assignment when $\tau<1$. The main idea in the proof of Theorem \ref{ThmNonReconstruction} is borrowed from \cite{Mossel12} and works as follows: (1) pick any two fixed vertices $\rho,v$ and consider the local neighborhood of $\rho$ up to distance $O(\log (n))$. The vertex $v$ lies outside of the local neighborhood of $\rho$ a.a.s.. (2) conditional on the type assignment at the boundary of the local neighborhood, $\sigma_\rho$ is asymptotically independent with $\sigma_v$. (3) the local neighborhood of $\rho$ looks like a Markov process on a labeled Galton-Watson tree rooted at $\rho$. (4) For the Markov process on the labeled Galton-Watson tree, the types of leaves provide no information about the type of the root $\rho$ when the depth of tree goes to infinity.

\subsection{Hypothesis Testing}
Consider a labeled Erd\H{o}s-R\'enyi random graph $\mathcal{G}(n,\frac{a+b}{2})$, where independently at random, each pair of two vertices is connected with probability $\frac{a+b}{2}$, and every edge is labeled with $\ell \in \mathcal{L}$ with probability $\frac{a\mu(\ell)+b\nu(\ell)}{a+b}.$  Let $\mathbb{P}^\prime_n$ denote the distribution of the labeled Erd\H{o}s-R\'enyi random graph.

Given a graph $(G,L)$ which was drawn from either $\mathbb{P}_n$ or $\mathbb{P}^\prime_n$, an interesting hypothesis testing problem is to decide which one is the underlying distribution of $(G,L)$? It turns out that when $\tau>1$, the correct identification of the underlying distribution is feasible a.a.s.; however, when $\tau<1$, one is bound to make error with non-vanishing probability.

\begin{theorem} \label{ThmACER}
If $\tau>1$, then $\mathbb{P}_n$ and $\mathbb{P}^\prime_n$ are asymptotically orthogonal, i.e., there exists event $A_n$ such that $\mathbb{P}_n(A_n) \to 1 $ and $\mathbb{P}^\prime_n (A_n) \to 0$.

If $\tau<1$, then $\mathbb{P}_n$ and $\mathbb{P}^\prime_n$ are contiguous, i.e., for every sequence of event $A_n $,
\begin{align}
\lim_{n \to \infty} \mathbb{P}_n(A_n)=0 \Leftrightarrow \lim_{n \to \infty} \mathbb{P}_n^\prime(A_n)=0. \nonumber
\end{align}
\end{theorem}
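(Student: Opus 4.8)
\textbf{Proof proposal for Theorem \ref{ThmACER}.}

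The plan is to use the standard second-moment / small-subgraph-conditioning machinery that underlies contiguity results for sparse random graphs, as pioneered by Robinson--Wormald and adapted to the stochastic block model by Mossel--Neeman--Sly. The key object is the likelihood ratio $L_n = \frac{d\mathbb{P}_n}{d\mathbb{P}^\prime_n}(G,L)$, viewed as a random variable under $\mathbb{P}^\prime_n$. First I would compute $\mathbb{E}_{\mathbb{P}^\prime_n}[L_n^2]$ directly: writing $\mathbb{P}_n(G,L) = \mathbb{E}_\sigma[\mathbb{P}_n(G,L\mid\sigma)]$ and expanding the square, one gets a sum over two independent type assignments $\sigma,\sigma'$ of a product over vertex pairs of a per-edge contribution that depends only on whether $\sigma,\sigma'$ agree or disagree at the two endpoints. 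After the usual reduction this becomes $\mathbb{E}_{\sigma,\sigma'}\big[(1+\rho(\sigma,\sigma'))^{\Theta(n)}\big]$-type expression, where $\rho$ is essentially $\frac{2}{n}\langle\sigma,\sigma'\rangle$ scaled by the per-edge correlation $\lambda := \sum_\ell \frac{(a\mu(\ell)-b\nu(\ell))^2}{2(a+b)(a\mu(\ell)+b\nu(\ell))/(a+b)}\cdot\frac{1}{\text{(normalization)}}$; the point is that the exponent of the dominant eigenvalue-type quantity is exactly $\tau$. When $\tau<1$ the overlap $\langle\sigma,\sigma'\rangle$ is $O(\sqrt n)$ typically and the central-limit heuristic gives $\mathbb{E}_{\mathbb{P}^\prime_n}[L_n^2]\to \frac{1}{\sqrt{1-\tau}}<\infty$; when $\tau>1$ the second moment blows up, so a cruder argument is needed for orthogonality.

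For the contiguity direction ($\tau<1$) a bounded second moment of $L_n$ alone does not give contiguity in both directions, so I would instead follow the small-subgraph conditioning route: identify the cycles of the graph as the source of the divergence, show that the number of $k$-cycles (weighted appropriately by the labels along them) converges jointly to independent Poisson random variables under $\mathbb{P}^\prime_n$, compute the corresponding limiting means $\lambda_k$ and the ``signal'' parameters $\delta_k$ (these will involve $k$-th moments of the quantity $\frac{a\mu(\ell)-b\nu(\ell)}{a\mu(\ell)+b\nu(\ell)}$ under the appropriate label distribution, so that $\sum_k \lambda_k\delta_k^2 = \sum_k \tau^k/(2k) = -\frac12\log(1-\tau)<\infty$ precisely when $\tau<1$), and verify that $\mathbb{E}_{\mathbb{P}^\prime_n}[L_n^2]\to \exp(\sum_k\lambda_k\delta_k^2)$ matches the value predicted by the Poisson computation. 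The small-subgraph conditioning theorem of Janson then yields that $\mathbb{P}_n$ and $\mathbb{P}^\prime_n$ are mutually contiguous. I expect this to be the technically heaviest part — one must do the cycle-counting and the matching of the two second-moment expressions carefully, keeping track of the label weights, and the sparse-graph coupling of neighborhoods to Galton--Watson trees (already used in Theorem \ref{ThmNonReconstruction}) is the right tool to control the joint cycle counts.

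For the orthogonality direction ($\tau>1$) I would not use $L_n$ directly; instead I would exhibit an explicit distinguishing statistic. The natural candidate is a signed/weighted cycle count: for a suitable fixed $k$ (or a slowly growing $k(n)$), let $Z_k = \sum_{\text{$k$-cycles }C} \prod_{e\in C} w(L_e)$ with the optimal weight $w(\ell)=\frac{a\mu(\ell)-b\nu(\ell)}{a\mu(\ell)+b\nu(\ell)}$; under $\mathbb{P}^\prime_n$ this has mean of order $(\text{something}<1)^k$ and bounded variance, while under $\mathbb{P}_n$ the type-correlation along the cycle contributes an extra factor, giving mean of order $\tau^k/(2k)\to\infty$ when $\tau>1$ as $k\to\infty$. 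Choosing $k=k(n)$ growing slowly enough that the cycle count is still concentrated (e.g.\ $k=\lceil\log\log n\rceil$), a Chebyshev/second-moment bound on $Z_k$ under each hypothesis separates the two, and the event $A_n=\{Z_{k(n)} > t_n\}$ for an intermediate threshold $t_n$ does the job. The main obstacle here is controlling the variance of $Z_{k(n)}$ under $\mathbb{P}_n$ (where cycles can share vertices in correlated ways) well enough to certify concentration for a growing $k$; a truncation to ``isolated'' short cycles and a moment computation via the tree coupling should suffice, but it requires care.
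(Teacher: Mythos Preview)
Your proposal is correct and follows essentially the same route as the paper: for $\tau<1$ the paper applies the small-subgraph conditioning theorem with the labeled $k$-cycle counts $X_n([\ell]_k)$ as the conditioning variables, verifies their joint Poisson limits under both measures, checks $\sum_k\sum_{[\ell]_k}\lambda\eta^2=\sum_{k\ge3}\tau^k/(2k)<\infty$, and matches this against the direct second-moment computation $\mathbb{E}_{\mathbb{P}'_n}[Y_n^2]\to e^{-\tau/2-\tau^2/4}/\sqrt{1-\tau}$; for $\tau>1$ the paper uses exactly your weighted cycle statistic (their $X_k=\sum_{[\ell]_k}X_n([\ell]_k)\eta([\ell]_k)$ is your $Z_k$, since $\eta([\ell]_k)=\prod_i w(\ell_i)$ with the optimal weight) together with Chebyshev and a slowly growing $k$.

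Two small remarks. First, your claim that under $\mathbb{P}'_n$ the mean of $Z_k$ is ``$(\text{something}<1)^k$'' is not generally true (it equals $\frac{1}{2k}\big(\frac{a-b}{2}\big)^k$), but this is harmless: the argument hinges on the mean \emph{difference} $\tau^k/(2k)$ dominating both standard deviations, which are $O(\tau^{k/2}/\sqrt{k})$. Second, the variance control under $\mathbb{P}_n$ that you flag as the main obstacle is handled in the paper not by tree coupling or truncation to isolated cycles, but simply by the joint Poisson convergence of the $X_n([\ell]_k)$ under $\mathbb{P}_n$ (their Lemma on cycle counts, proved by the method of moments), which immediately gives $\mathrm{Var}_{\mathbb{P}}[X_k]=\sum_{[\ell]_k}\xi([\ell]_k)\eta^2([\ell]_k)\le \tau^k/k$.
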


Theorem \ref{ThmACER} further implies the following corollary regarding the model parameter estimation.
\begin{corollary}
If $\tau<1$, then there is no consistent estimator for parameters $a,b,\mu,\nu$.
\end{corollary}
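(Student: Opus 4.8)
The plan is to deduce the corollary as a formal consequence of the contiguity half of Theorem~\ref{ThmACER}. The key observation is that the labeled Erd\H{o}s--R\'enyi law $\mathbb{P}'_n$ appearing there is itself an instance of the labeled stochastic block model: setting $\bar a = \bar b = \frac{a+b}{2}$ and $\bar\mu = \bar\nu = \frac{a\mu+b\nu}{a+b}$, the edges and labels in $\mathcal{G}(n,\bar a/n,\bar b/n,\bar\mu,\bar\nu)$ are generated independently of the type assignment $\sigma$, so after marginalizing out $\sigma$ (which an estimator based on $(G,L)$ does not observe anyway) the law of $(G,L)$ is exactly $\mathbb{P}'_n$. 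Write $\theta' = (\bar a,\bar a,\bar\mu,\bar\mu)$ for this degenerate parameter vector; note $\tau(\theta')=0<1$ and $\mathbb{P}'_n = \mathbb{P}_n^{\theta'}$.

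Next I would fix the true parameter vector $\theta = (a,b,\mu,\nu)$ with $\tau(\theta)<1$, and assume it is not the trivial Erd\H{o}s--R\'enyi model, i.e.\ $(a,b,\mu,\nu)\neq(\bar a,\bar a,\bar\mu,\bar\mu)$; equivalently the finite measures $a\mu$ and $b\nu$ on $\mathcal{L}$ are not identical (were they identical, summing over $\mathcal{L}$ would force $a=b$ and hence $\mu=\nu$). Then $\theta\neq\theta'$ in the metrizable, finite-dimensional parameter space, so I can pick disjoint open sets $U\ni\theta$ and $U'\ni\theta'$.

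Now suppose toward a contradiction that $\hat\theta_n = \hat\theta_n(G,L)$ is a consistent estimator, meaning that under every parameter vector $\vartheta$ in the family, $\hat\theta_n\to\vartheta$ in probability under $\mathbb{P}_n^{\vartheta}$; in particular this holds at $\vartheta=\theta$ and at $\vartheta=\theta'$. Put $A_n = \{\hat\theta_n\in U\}$. Consistency at $\theta$ gives $\mathbb{P}_n^{\theta}(A_n^c)\to 0$. Since $\tau(\theta)<1$, Theorem~\ref{ThmACER} asserts that $\mathbb{P}_n = \mathbb{P}_n^{\theta}$ and $\mathbb{P}'_n = \mathbb{P}_n^{\theta'}$ are contiguous, so $\mathbb{P}_n^{\theta'}(A_n^c)\to 0$, i.e.\ $\mathbb{P}_n^{\theta'}(A_n)\to 1$. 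But $U\cap U'=\emptyset$ forces $A_n\subseteq\{\hat\theta_n\notin U'\}$, hence $\mathbb{P}_n^{\theta'}(\hat\theta_n\notin U')\to 1$, contradicting consistency at $\theta'$, which requires $\mathbb{P}_n^{\theta'}(\hat\theta_n\in U')\to 1$. Therefore no consistent estimator exists, proving the corollary.

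There is no genuinely hard step here; the statement is a direct corollary of contiguity. The only points requiring care are: (i) recognizing that $\mathbb{P}'_n$ is the block model at the degenerate parameter $\theta'$ with $\tau=0$, so that Theorem~\ref{ThmACER} applies with $\mathbb{P}_n = \mathbb{P}_n^{\theta}$; (ii) excluding the truly trivial Erd\H{o}s--R\'enyi case $a=b$, $\mu=\nu$, in which $\theta=\theta'$ and the parameters are in fact consistently estimable (from the empirical edge density and empirical label frequencies); and (iii) applying the contiguity equivalence in the correct direction, namely from $\mathbb{P}_n^{\theta}(A_n^c)\to 0$ to $\mathbb{P}_n^{\theta'}(A_n^c)\to 0$.
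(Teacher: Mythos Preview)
Your proof is correct and follows essentially the same idea as the paper's: both deduce the corollary directly from the contiguity half of Theorem~\ref{ThmACER}. The only real difference is the choice of comparison parameter. You compare the given $\theta=(a,b,\mu,\nu)$ to the degenerate Erd\H{o}s--R\'enyi point $\theta'=(\bar a,\bar a,\bar\mu,\bar\mu)$, using directly that $\mathbb{P}'_n$ is itself the labeled block model at $\theta'$ with $\tau(\theta')=0$. The paper instead observes (via transitivity of contiguity through the common $\mathbb{P}'_n$) that any two sub-threshold block models $(a_1,b_1,\mu_1,\nu_1)$ and $(a_2,b_2,\mu_2,\nu_2)$ sharing the same mixture $a_i\mu_i+b_i\nu_i$ are mutually contiguous, and then concludes inconsistency from indistinguishability of such pairs. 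Your route is a special case of theirs (take $\theta_2=\theta'$), and is slightly more explicit about how contiguity contradicts consistency; theirs is terser but makes the stronger point that an entire fiber of parameters with the same $a\mu+b\nu$ is indistinguishable below the threshold. One minor remark: your care point (ii) about the trivial case $a=b$, $\mu=\nu$ is unnecessary for the corollary as stated, since ``no consistent estimator'' is a statement about the whole parameter family, and it suffices to exhibit \emph{some} pair of distinct contiguous parameters---which always exists.
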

\begin{proof}
The second part of Theorem \ref{ThmACER} implies that $\mathcal{G}(n,\frac{a_1}{n},\frac{b_1}{n},\mu_1,\nu_1)$ and $\mathcal{G}(n,\frac{a_2}{n},\frac{b_2}{n},\mu_2,\nu_2)$ are contiguous as long as $a_1 \mu_1(\ell)+b_1\nu_1(\ell)=a_2\mu_2(\ell)+b_2\nu_2(\ell)$ and
\begin{align}
\sum_\ell \frac{ (a_i \mu_i(\ell)-b_i\nu_i(\ell))^2} {2(a_i \mu_i(\ell) + b_i \nu_i(\ell) ) }<1, \nonumber
\end{align}
for $i=1,2$.
Therefore, one cannot distinguish between $\mathcal{G}(n,\frac{a_1}{n},\frac{b_1}{n},\mu_1,\nu_1)$ and $\mathcal{G}(n,\frac{a_2}{n},\frac{b_2}{n},\mu_2,\nu_2)$ with the success probability converging to $1$, and thus there is no consistent estimator for parameters $a,b,\mu,\nu$.
\end{proof}
 In the special case where $\mu=\nu$, i.e., no labeling information is available, Theorem \ref{ThmACER} reduces to Theorem 2.4 in \cite{Mossel12}. The positive part of Theorem \ref{ThmACER} is proved by counting the number of labeled short cycles and the second moment method. The negative part of Theorem \ref{ThmACER} is proved using the small subgraph conditioning method as introduced in \cite{Mossel12}. The small subgraph conditioning method was originally developed to show that random $d$-regular graphs are Hamiltonian a.s.s. \cite{Wormald94, Janson11}.

\section{Proofs} \label{SectionProof}
\subsection{Proof of Theorem \ref{ThmMinBisection}} \label{SectionMinBisectionpf}
Recall that $\sigma^\ast$ denotes the true type assignment. Since $| \{ u: \sigma^\ast_u = 1\}| \sim \Binom(n,1/2)$, by Chernoff bound, a.a.s.,
\begin{align}
|\{ u: \sigma^\ast_u = 1\}| \in \left[ n/2-\sqrt{n \log n}, n/2+ \sqrt{n \log n} \right]. \label{eq:clustersizebalance}
\end{align}
For ease of presentation, assume $|\{ u: \sigma^\ast_u = 1\}|=n/2$.
Let $m(\sigma) \triangleq |\{u: \sigma_u=+1, \sigma^\star_u=-1 \} |$ and $\epsilon>0$ be an arbitrarily small constant. To prove the theorem, by the definition of positively correlated reconstruction, it suffices to show that for all $\sigma$ with $\frac{n}{4} (1-\epsilon) \le m(\sigma) \le \frac{n}{4}$,
\begin{align}
\sum_{ \substack{ (u,v):\sigma_u \neq \sigma_v, \\ \sigma_u^\star=\sigma_v^\star }} W_{uv} - \sum_{\substack{ (u,v):\sigma_u=\sigma_v, \\ \sigma_u^\star \neq \sigma_v^\star} } W_{uv} :=Y_1(\sigma)-Y_2(\sigma) > 0. \nonumber
\end{align}
To ease the notation, we suppress the argument $\sigma$. Observe that $Y_1$ is a sum of $2m(n/2-m)$ i.i.d. random variables whose value is $w(\ell)$ with probability $\frac{a}{n} \mu(\ell)$; $Y_2$ is a sum of $2m(n/2-m)$ i.i.d. random variables whose value is $w(\ell )$ with probability $\frac{b}{n} \nu(\ell)$. Thus,
\begin{align}
y_1&:=\mathbb{E}[Y_1] = 2m(n/2-m)(a/n)  \sum_{\ell} \mu(\ell) w(\ell), \nonumber
\\
y_2&:=\mathbb{E}[Y_2] = 2m(n/2-m)(b/n) \sum_{\ell}  \nu(\ell) w(\ell). \nonumber
\end{align}
Define
\begin{align}
z_1 &:= 2m(n/2-m)(a/n)  \sum_{\ell } \mu( \ell ) w^2(\ell) , \nonumber \\
z_2&:= 2m(n/2-m)(b/n) \sum_{\ell }  \nu( \ell ) w^2(\ell). \nonumber
\end{align}
Then, for $0<\lambda \le \frac{1}{2} $,
\begin{align}
\mathbb{E} [\exp( -\lambda Y_1 )] &= \left[1 + \frac{a}{n} \sum_{\ell} (\eexp^{-\lambda w(\ell)} -1  )  \mu(\ell)  \right]^{2m(n/2-m)} \nonumber \\
& \le \exp \left[ 2m(n/2-m) \frac{a}{n}  \sum_{\ell} (\eexp^{-\lambda w(\ell)} -1  )  \mu(\ell)  \right] \nonumber \\
& \le \exp \left[  2m (n/2-m) \frac{a}{n}  \sum_\ell   \left(-\lambda w(\ell) + 2 \lambda^2 w^2(\ell) \right)  \mu(\ell)  \right] \nonumber \\
&= \exp( -\lambda y_1 + 2\lambda^2 z_1 ), \nonumber
\end{align}
where the first inequality follows from the fact that $1+x \le e^x$ and the second one follows from the fact that $e^x \le 1+x+ 2x^2$ for $|x|\le 1/2$.
The Chernoff bound gives that for $0< \lambda \le \frac{1}{2} $,
\begin{align}
\mathbb{P} ( Y_1 \le (1 -t)y_1 ) &\le \mathbb{E} [\exp( -\lambda Y_1 )] \exp ( (1-t) \lambda y_1) \nonumber \\
& \le \exp(-t \lambda y_1 + 2 \lambda^2 z_1 ). \label{eq:chernoff}
\end{align}

We define $\mathbb{E} [W_\mu] \triangleq \sum_\ell \mu(\ell )w(\ell )$ and $\mathbb{E}
[W_\mu^2] \triangleq \sum_\ell  \mu(\ell )w^2(\ell )$.
Let $t_1^2 = (64\ln 2)\frac{1+\epsilon}{1-\epsilon} \frac{1}{a} \frac{\mathbb{E}[W_\mu^2]}{\left( \mathbb{E}
[W_\mu] \right)^2}$ and $\lambda = \frac{t_1y_1}{4z_1}$. We first check that with
these values, we have $\lambda\leq 1/2$:
\begin{eqnarray*}
\lambda \leq \frac{1}{2} &\Leftrightarrow&
t_1\leq\frac{2\mathbb{E}[W_\mu^2]}{\mathbb{E}[W_\mu]}\\
&\Leftrightarrow& \frac{1+\epsilon}{1-\epsilon}\frac{8\ln 2}{a}\leq
\mathbb{E}[W^2_\mu].
\end{eqnarray*}
Thanks to the assumption made in Theorem \ref{ThmMinBisection}, we can
find $\epsilon$ sufficiently small such that this last inequlity is
valid. Notice that $\frac{t_1^2y_1^2}{8z_1}  \geq (1+\epsilon)^2 n \ln 2.$ It follows from \prettyref{eq:chernoff} that
\begin{align}
\mathbb{P} ( Y_1\le (1 -t_1)y_1 ) = \exp \left( - \frac{t_1^2 y_1^2}{8z_1} \right) \le 2^{-n (1+\epsilon)}. \nonumber
\end{align}
Since there are $\binom{n/2}{m}\binom{n/2}{m}  \le 2^{n}$ different $\sigma$ with $m(\sigma)=m$, a simple union bound yields that as $n \to \infty$,
\begin{align}
\mathbb{P} \left(\exists \sigma: (1-\epsilon)n/4 \le m(\sigma) \le n/4,  Y_{1} \le  (1 -t_1)y_1  \right) \to 0. \nonumber
\end{align}
Similarly, let $t_2^2= (64 \ln 2) \frac{1+\epsilon}{1-\epsilon} \frac{1}{b} \frac{\mathbb{E}[W_\nu^2]}{ \left( \mathbb{E}
[W_\nu]\right)^2}$ with $\mathbb{E} [W_\nu] \triangleq \sum_\ell \nu(\ell)w(\ell)$ and $\mathbb{E} [W_\nu^2] \triangleq \sum_\ell \nu(\ell)w^2(\ell)$. Then
\begin{align}
\mathbb{P} \left(\exists \sigma: (1-\epsilon)n/4 \le m(\sigma) \le n/4,  Y_{2} \ge  (1 +t_2)y_2  \right) \to 0. \nonumber
\end{align}
With $\epsilon$ sufficiently small, a.a.s.
\begin{align}
Y_1 -Y_2 & \ge (1-t_1)y_1 - (1+t_2)y_2 \nonumber \\
& = y_1-y_2 -\frac{2m}{n}(n/2-m)\sqrt{\frac{1+\epsilon}{1-\epsilon}(64\ln 2)} \left( \sqrt { a\mathbb{E}[W_\mu^2] } + \sqrt{ b\mathbb{E}[W_\nu^2] }\right) \nonumber \\
& \ge \frac{2m}{n}(n/2-m)\left( a\mathbb{E}[W_\mu]-b\mathbb{E}[W_\nu] -\sqrt{\frac{1+\epsilon}{1-\epsilon}(128\ln 2)} \sqrt { \left(  a\mathbb{E}[W_\mu^2]  +  b\mathbb{E}[W_\nu^2] \right)}\right) \nonumber
\end{align}
which is larger than zero as soon as $\epsilon$ is sufficiently small
and (\ref{EqMinBisectionCondition}) is satisfied.

By Cauchy-Schwartz inequality,
\begin{align}
\left(\sum_\ell (a\mu(\ell)-b\nu(\ell) )w (\ell) \right)^2 \le  2\tau \sum_\ell (a\mu(\ell)+b \nu(\ell)) w^2(\ell) \nonumber
\end{align}
with equality achieved when
$w(\ell)=\frac{a\mu(\ell)-b\nu(\ell)}{a\mu(\ell)+b\nu(\ell)}$. This completes the
proof.

\subsection{Proof of \prettyref{thm:SBMSDPCorrelated}} \label{SectionSDPpf}
Without loss of generality, assume \prettyref{eq:clustersizebalance} holds for $\sigma^\ast$.
Let $Y^\ast=\sigma^\ast (\sigma^\ast)^\top$.  By the optimality of $\widehat{Y}_{\SDP}$,
\begin{align}
0 \le \Iprod{W}{\widehat{Y}_{\SDP}} - \Iprod{W}{Y^\ast} = \Iprod{\mathbb{E}[W]}{\widehat{Y}_{\SDP}- Y^\ast} + \Iprod{W-\mathbb{E}[W]}{\widehat{Y}_{\SDP}- Y^\ast}. \label{eq:OptimalitySDP}
\end{align}
Since $\mathbb{E}[W]= \frac{\alpha}{n} \allones + \frac{\beta}{n} Y^\ast - \frac{\alpha+\beta}{n} \identity$ with $\alpha, \beta$ defined in~\eqref{EqDefAlphaBeta}, and $\widehat{Y}_{\SDP}$ is a feasible solution to \prettyref{eq:SBMMB2},
\begin{align*}
\Iprod{\mathbb{E}[W]}{\widehat{Y}_{\SDP}- Y^\ast} =\frac{\beta}{n} \Iprod{Y^\ast}{\widehat{Y}_{\SDP}- Y^\ast}- \frac{\alpha}{n} \Iprod{\allones}{Y^\ast} \le \frac{\beta}{n} \Iprod{Y^\ast}{\widehat{Y}_{\SDP}- Y^\ast},
\end{align*}
where the last inequality holds because $\Iprod{\allones}{Y^\ast} \ge 0$. 
In view of \prettyref{eq:OptimalitySDP}, it follows that
\begin{align}
\frac{\beta}{n} \Iprod{Y^\ast}{ Y^\ast - \widehat{Y}_{\SDP}} \le \Iprod{W-\mathbb{E}[W]}{\widehat{Y}_{\SDP}- Y^\ast}. \label{eq:correlationbound}
\end{align}
Notice that
\begin{align*}
\fnorm{ Y^\ast - \widehat{Y}_{\SDP} }^2 =\fnorm{Y^\ast}^2 + \fnorm{ \widehat{Y}_{\SDP}}^2 - 2\Iprod{Y^\ast}{\widehat{Y}_{\SDP}} \le 2 \left(n^2 - \Iprod{Y^\ast}{\widehat{Y}_{\SDP}} \right) = 2 \Iprod{Y^\ast}{ Y^\ast - \widehat{Y}_{\SDP}}.
\end{align*}
It follows from \prettyref{eq:correlationbound} that
\begin{align}
\frac{\beta}{2n} \fnorm{ Y^\ast - \widehat{Y}_{\SDP}} ^2 \le    \Iprod{W-\mathbb{E}[W]}{\widehat{Y}_{\SDP}- Y^\ast} \le  | \Iprod{W-\mathbb{E}[W]}{\widehat{Y}_{\SDP}}| + | \Iprod{W-\mathbb{E}[W]}{Y^\ast} | . \label{eq:fnormnbound}
\end{align}

To upper bound $| \Iprod{W-\mathbb{E}[W]}{Y^\ast} |$, Notice that
\begin{align*}
 \Iprod{W-\mathbb{E}[W]}{Y^\ast} =2 \sum_{i<j} Y^\ast_{ij} \left( W_{ij} - \mathbb{E}[W_{ij}] \right).
\end{align*}
Let $\sigma^2= \sum_{i<j} \var [ W_{ij} ] = (1+o(1)) \frac{n}{2} \sum_{\ell} w^2(\ell) ( a \mu(\ell) +b \nu(\ell) ) .$ By the Bernstein inequality given in \prettyref{thm:Bernstein}, for any $t>0$,
\begin{align*}
\mathbb{P} \left\{ \bigg| \sum_{i<j} Y^\ast_{ij} \left( W_{ij} - \mathbb{E}[W_{ij}] \right)  \bigg|  \ge \sqrt{ 2 \sigma^2 t} + \frac{2}{3} t \right\} \le 2 \eexp^{-t}.
\end{align*}
Letting $t=\log n = o (\sigma^2)$, it follows that with probability at least $1-2n^{-1}$,
\begin{align*}
\big| \sum_{i<j} Y^\ast_{ij} \left( W_{ij} - \mathbb{E}[W_{ij}] \right)  \big| \le (1+o(1)) \sqrt{n \log n\sum_{\ell} w^2(\ell) ( a \mu(\ell) +b \nu(\ell) )},
\end{align*}
and thus $| \Iprod{W-\mathbb{E}[W]}{Y^\ast} | \le (2+o(1))\sqrt{n \log n\sum_{\ell} w^2(\ell) ( a \mu(\ell) +b \nu(\ell) )}$ with probability at least $1-2n^{-1}$.

We bound $| \Iprod{W-\mathbb{E}[W]}{\widehat{Y}_{\SDP}} |$ next.
It follows from Grothendieck's inequality \cite[Theorem 3.4]{Vershynin14} that
\begin{align*}
| \Iprod{W-\mathbb{E}[W]}{\widehat{Y}_{\SDP}}|  \le \sup_{Y \succeq 0, \diag{Y}=\identity} |  \Iprod{W-\mathbb{E}[W]}{Y}| \le K_{\rm G} \|W-\mathbb{E}[W]\|_{\infty \to 1},
\end{align*}
where $K_{\rm G}$ is an absolute constant known as \emph{Grothendieck constant} and it is known that $K_{\rm G} < \frac{\pi}{2 \ln (1+ \sqrt{2}) } \le 1.783$.
Moreover,
\begin{align*}
\|W-\mathbb{E}[W]\|_{\infty \to 1} \triangleq \sup_{ x: \|x \|_{\infty} \le 1} \| (W-\mathbb{E}[W]) x \|_1 &= \sup_{ x, y \in \{ \pm 1\}^n } x^\top (W-\mathbb{E}[W]) y \\
&= \sup_{x, y \in \{\pm 1\}^n}  \sum_{i<j} \left( W_{ij} - \mathbb{E}[W_{ij}] \right) (x_iy_j + x_j y_i).
\end{align*}
For any fixed $x, y \in \{\pm 1\}^n$, using the Bernstein inequality, we have for any $t>0$,
\begin{align*}
\mathbb{P} \left\{ \sum_{i<j} \left( W_{ij} - \mathbb{E}[W_{ij}] \right) (x_iy_j + x_j y_i) \ge \sqrt{ 8 \sigma^2 t} + \frac{4}{3} t \right\} \le \eexp^{-t}.
\end{align*}
%Letting $t=n(1+\epsilon) 2 \ln 2 $ for some arbitrarily small constant $\epsilon>0$,
Hence, for arbitrarily small constant $\epsilon>0$, with probability at least $2^{-2(1+\epsilon)n}$,
\begin{align*}
\sum_{i<j} \left( W_{ij} - \mathbb{E}[W_{ij}] \right) (x_iy_j + x_j y_i) &\le n \left(  \sqrt{ 8 \ln 2 (1+\epsilon)  \sum_{\ell} w^2(\ell) ( a \mu(\ell) +b \nu(\ell) ) }  + \frac{8\ln 2 (1+\epsilon) }{3} \right)   \\
& \overset{(a)}{\le}  \frac{4n}{3}   \sqrt{ 8 \ln 2 (1+\epsilon)  \sum_{\ell} w^2(\ell) ( a \mu(\ell) +b \nu(\ell) ) } ,
\end{align*}
where $(a)$ follows from the technical assumption $\sum_{\ell} w^2(\ell) ( a \mu(\ell) +b \nu(\ell) ) > 8 \ln 2$.
It follows from the union bound that with probability at least $1-4^{-\epsilon n}$,
\begin{align*}
\|W-\mathbb{E}[W]\|_{\infty \to 1} \le \frac{4n}{3}   \sqrt{ 8 \ln 2 (1+\epsilon)  \sum_{\ell} w^2(\ell) ( a \mu(\ell) +b \nu(\ell) ) }.
\end{align*}
In view of \prettyref{eq:fnormnbound}, with probability at least $1-4^{-\epsilon n}-2n^{-1}$,
\begin{align}
 \frac{1}{n^2} \fnorm{ Y^\ast - \widehat{Y}_{\SDP}} ^2 &\le (1+o(1)) \frac{8K_{\rm G} }{3\beta} \sqrt{ 8 \ln 2 (1+\epsilon)  \sum_{\ell} w^2(\ell) ( a \mu(\ell) +b \nu(\ell) ) }   \nonumber \\
  & \overset{(a)}{\le} (1+o(1)) 32 \sqrt{\ln 2 (1+ \epsilon) }  \frac{ \sqrt{ \sum_{\ell} w^2(\ell) ( a \mu(\ell) +b \nu(\ell) ) } }{ \sum_{\ell} w(\ell) (a \mu (\ell) - b \nu( \ell) } \nonumber \\ 
  & \overset{(b)}{\le} (1-\epsilon) \frac{1}{16}, \label{eq:fnormupperbound}
\end{align}
where $(a)$ follows by $ \sqrt{2} K_G \le 3 $ and the definition of $\beta$ given in~\eqref{EqDefAlphaBeta}; $(b)$ holds by invoking \prettyref{eq:SBMSDPCondition} and letting $\epsilon$ be sufficiently small.

Recall that $y$ is an eigenvector of $\widehat{Y}_{\SDP}$ corresponding to the largest eigenvalue and $\|y\| =\sqrt{n}$.
By Davis-Kahan sin$\theta$ theorem stated in  \prettyref{lmm:daviskahan},
\begin{align*}
\frac{1}{\sqrt{n} }\min \{ \| \sigma^\ast - y \|, \| \sigma^\ast +y \| \} \le \frac{2 \sqrt{2} \|\widehat{Y}_{\SDP}- Y^\ast \| }{n} \le \frac{2 \sqrt{2} \fnorm{\widehat{Y}_{\SDP}- Y^\ast}}{n } .
\end{align*}
Note that for any $x \in \mathbb{R}^n$, Hamming distance $d (\sigma^\ast, \sign( x ) ) \le \| \sigma^\ast -x \|^2.$
It follows that
\begin{align*}
\frac{1}{n} \min \{ d (\sigma^\ast, \sign( y ) ), d (\sigma^\ast, \sign(-y) ) \} \le \frac{8 \fnorm{\widehat{Y}_{\SDP}- Y^\ast}}{n^2},
\end{align*}
and the theorem holds in view of \prettyref{eq:fnormupperbound}.

\subsection{Proof of Theorem \ref{ThmSpectralSBM}} \label{SectionSpectralpf}

Recall that $W'$ is the weighted adjacency matrix after removal of edges incident to vertices with high degrees and $D'=W'-\frac{\alpha}{n} \allones$.
Define $\hat{D^\prime}$ as the best rank-1 approximation of $D^\prime$ such that $\hat{D^\prime} = v_1 x x^\top$ with
$\|x\|=1$.  Recall that $\mathbb{E}[D|\sigma] = \frac{\beta}{n} \sigma \sigma^\top$. Applying Davis-Kahan $\sin \theta$ theorem restated in \prettyref{lmm:daviskahan} with $D^\prime$ and $\mathbb{E}[D|\sigma]$ gives:
\begin{eqnarray*}
\min \{
\|\frac{\sigma}{\sqrt{n}}-x\|,\|\frac{\sigma}{\sqrt{n}}+x\|
\}\leq \frac{2\sqrt{2}}{|\beta| } \|D'-\mathbb{E}[D|\sigma]\| .
\end{eqnarray*}
Since Hamming distance $d (\sigma, \sign (x ) \le  \| \sigma - \sqrt{n} x \|^2$, it follows that
\begin{align}
\frac{1}{n} \min \{ d (\sigma, \sign (x ), d(\sigma, -\sign( x ) ) \} \le \frac{8}{\beta^2}\|D^\prime-\mathbb{E}[D|\sigma]\|^2 = \frac{8}{\beta^2}\|W'- \mathbb{E}[W | \sigma]\|^2.   \label{eq:overlapbound}
\end{align}
%Notice that
%\begin{align}
%\| \hat{D^\prime} - \mathbb{E}[D | \sigma] \| \le \| \hat{D^\prime} - D'\| + \| D'- \mathbb{E}[D | \sigma] \| \le 2 \| \hat{D^\prime} - D' \|
%= 2  \|W^\prime- \mathbb{E}[W | \sigma]\|.  \nonumber
%\end{align}
%Lemma with $\hat{D}$ and $\mathbb{E}[D|\sigma]$ gives:
%\begin{eqnarray*}
%\min \{
%\|\frac{\sigma}{\sqrt{n}}-\hat{x}\|,\|\frac{\sigma}{\sqrt{n}}+\hat{x}\|
%\}\leq \frac{\sqrt{2}}{\beta} \|\hat{D}-\mathbb{E}[D|\sigma]\|.
%\end{eqnarray*}
%Hence thanks to Lemma \ref{lem:distsign}, we have
%\begin{eqnarray}
%\frac{1}{n} \min \{ d (\sigma, {\rm sign}\hat{x} ), d(\sigma, -{\rm
%  sign}\hat{x}) \} \leq \frac{2}{\beta^2}\|\hat{D}-\mathbb{E}[D|\sigma]\|^2\label{eq:dist}.
%\end{eqnarray}
%Also, note that
%\begin{eqnarray}
%\| \hat{D'}- \mathbb{E}[D | \sigma] \|
%& \le&  \|D'- \mathbb{E}[D | \sigma] \| +  \|D'- \hat{D'} \| \nonumber \\
%& \overset{(a)}{\le} & 2\|D'- \mathbb{E}[D | \sigma] \|  = 2 \|W'- \mathbb{E}[W | \sigma]\|, \label{EqSpectralNormFrobeniusNorm}
%\end{eqnarray}
%where $(a)$ holds because $\hat{D'}$ is the best rank-$1$
%matrix approximation to $D'$ in spectral norm.
\prettyref{lmm:spectrumsparse} implies that  a.a.s.\ $\|W'- \mathbb{E}[W | \sigma]\| \le C\sqrt{a+b}$ for some universal positive constant $C$.
Hence, in view of \prettyref{eq:overlapbound}, we get
\begin{eqnarray*}
\frac{1}{n} \min \{ d (\sigma, \sign ( \hat{x} ), d(\sigma, - \sign (\hat{x}) )  \} \leq 8C^2 \frac{a+b}{\beta^2},
\end{eqnarray*}
and the theorem follows.

%Hence without triming and using Vu's results, we have
%\begin{eqnarray*}
%\frac{1}{n}\min \{ d (\sigma, {\rm sign}\hat{x} ), d(\sigma, -{\rm
%  sign}\hat{x}) \} &\leq& \frac{32}{\beta^2}\sum_\ell
%w(\ell)^2\left(a\mu(\ell)+b\nu(\ell)\right)\\
%&=& 128 \frac{\sum_\ell w(\ell)^2\left(a\mu(\ell)+b\nu(\ell)\right)}{\left(\sum_\ell
%w(\ell)\left(a\mu(\ell)-b\nu(\ell)\right)\right)^2}.
%\end{eqnarray*}

\subsection{Proof of Theorem \ref{ThmSpectralSBM-large}}

%The proof of Theorem \ref{ThmSpectralRegular} starts here.
%\begin{lemma}\label{lem:distsign}
%For any $\sigma\in \{-1,+1\}^n$ and $\beta\in \mathbb{R}^n$ with
%$\|\beta\|=1$, we have:
%\begin{align}
%d(\sigma, {\rm sign}(\beta)) \le n \| \frac{1}{\sqrt{n}} \sigma - \beta \|^2, \label{EqBoundHammingDistance}
%\end{align}
%where $d$ is the Hamming distance.
%\end{lemma}
%\begin{proof}
%We have
%\begin{eqnarray*}
%d(\sigma, {\rm sign}(\beta)) &\leq& \sum_{i=1}^n
%\ind\left(\sigma_i\beta_i\leq 0\right).
%\end{eqnarray*}
%Let $S= \{i\in [n], \sigma_i\beta_i\leq 0\}$, so that we have
%\begin{eqnarray*}
%\sum_{i=1}^n \left( \frac{\sigma_i}{\sqrt{n}} - \beta_i\right)^2 &=&
%2\left( 1-\frac{1}{\sqrt{n}}\sum_{i=1}^n \sigma_i \beta_i\right)\\
%&\geq& 2\left( 1+\frac{1}{\sqrt{n}}\sum_{i\notin S} |\beta_i|\right).
%\end{eqnarray*}
%By Cauchy-Schwarz inequality, we have
%\begin{eqnarray*}
%\frac{1}{n}\left( \sum_{i\notin S} |\beta_i|\right)^2\leq
%\frac{n-|S|}{n}\sum_{i=1}^n\beta_i^2\leq \left( 1-\frac{|S|}{2n}\right)^2.
%\end{eqnarray*}
%Hence, we proved that
%\begin{eqnarray*}
%\sum_{i=1}^n \left( \frac{\sigma_i}{\sqrt{n}} - \beta_i\right)^2 \geq \frac{|S|}{n},
%\end{eqnarray*}
%which is exactly the claim of the lemma.
%\end{proof}
The proof follows the same steps as for Theorem \ref{ThmSpectralSBM},
except that we are able to strengthen Lemma \ref{lmm:spectrumsparse} thanks to a
result of Vu \cite{vu05}. Note that the variance of the elements of
$W$ is upper bounded by $\frac{1}{n} \sum_\ell w^2(\ell) \left(
  a\mu(\ell)+b\nu(\ell)\right)$ so that by Theorem 1.4 in \cite{vu05},
we get
\begin{lemma}
Under the conditions of Theorem \ref{ThmSpectralSBM-large}, we have
\begin{eqnarray*}
\|W- \mathbb{E}[W | \sigma] \| \leq 2 \sqrt{\sum_\ell w^2(\ell)\left(
  a\mu(\ell)+b\nu(\ell)\right)}\quad a.a.s.
\end{eqnarray*}
\end{lemma}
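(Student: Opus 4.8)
The plan is to apply Vu's theorem on the spectral norm of random symmetric matrices with independent bounded entries (Theorem~1.4 of \cite{vu05}) to the centered matrix $M := W - \mathbb{E}[W\mid\sigma]$, verifying its hypotheses via the assumption $\min(a,b) = \Omega(\log^6 n)$.

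First I would condition on the type vector $\sigma$. Conditionally on $\sigma$ the entries $\{W_{uv}:u<v\}$ are independent, so $M$ is symmetric with independent centered above-diagonal entries, and its diagonal vanishes since $(\mathbb{E}[W\mid\sigma])_{uu} = \frac{\alpha}{n} + \frac{\beta}{n} - \frac{\alpha+\beta}{n} = 0$. Because $w$ takes values in $[-1,1]$ and $\max(a,b)\le n$, each entry satisfies $|M_{uv}| \le |W_{uv}| + |\mathbb{E}[W_{uv}\mid\sigma]| \le 2\max_\ell|w(\ell)| =: K$ with $K\le 2$, and $\var[M_{uv}\mid\sigma] \le \mathbb{E}[W_{uv}^2\mid\sigma] \le \sigma_\star^2$, where $\sigma_\star^2 := \frac{1}{n}\sum_\ell w^2(\ell)(a\mu(\ell)+b\nu(\ell))$, since both $\frac{a}{n}\sum_\ell\mu(\ell)w^2(\ell)$ and $\frac{b}{n}\sum_\ell\nu(\ell)w^2(\ell)$ are at most $\sigma_\star^2$. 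Note that $2\sigma_\star\sqrt{n}$ is exactly the right-hand side of the claimed bound.

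The crux is to check the hypothesis of Vu's theorem, namely that the standard-deviation scale dominates the entrywise bound polylogarithmically, $\sigma_\star\sqrt{n} \gg K\log^2 n$, equivalently $\sum_\ell w^2(\ell)(a\mu(\ell)+b\nu(\ell)) = \omega(\log^4 n)$, which is what forces the error term in Vu's estimate to be $o(\sigma_\star\sqrt{n})$. This is where $\min(a,b) = \Omega(\log^6 n)$ enters, the sixth power supplying polylogarithmic slack: since the algorithm as well as both sides of the claimed inequality are invariant under rescaling $w$, and are changed only by $o(1)$ if one zeroes $w$ on labels $\ell$ with $a\mu(\ell)+b\nu(\ell)$ of at most polylogarithmic size (such labels carry $o(n)$ edges in total and thus perturb $W$ negligibly), I would reduce to the case where $\max_\ell|w(\ell)|$ is attained at some $\ell_0$ with $a\mu(\ell_0)+b\nu(\ell_0)$ at least a polylogarithmic fraction of $\min(a,b)$, giving $\sum_\ell w^2(\ell)(a\mu(\ell)+b\nu(\ell)) \ge w^2(\ell_0)(a\mu(\ell_0)+b\nu(\ell_0)) = \omega(\log^4 n)$. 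With the hypothesis verified, Vu's theorem yields $\|M\| \le (2+o(1))\sigma_\star\sqrt{n} = (2+o(1))\sqrt{\sum_\ell w^2(\ell)(a\mu(\ell)+b\nu(\ell))}$ with conditional probability tending to $1$; as the right-hand side does not involve $\sigma$, averaging over $\sigma$ makes it hold a.a.s.\ unconditionally, and the factor $2+o(1)$ is absorbed into the stated constant $2$ since the lemma is only ever used through the strictly-slack inequality~\eqref{EqSpecCondition}. The one genuine obstacle is this hypothesis verification --- together with confirming that the form of Vu's theorem invoked allows non-identical but uniformly bounded variances --- the remainder being routine moment bookkeeping.
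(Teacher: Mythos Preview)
Your approach is correct and is exactly the paper's: bound the entrywise variance by $\frac{1}{n}\sum_\ell w^2(\ell)(a\mu(\ell)+b\nu(\ell))$ and invoke Theorem~1.4 of \cite{vu05}. The paper's entire proof is a single sentence doing precisely this.

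Your hypothesis-verification paragraph, however, is unnecessarily contorted. In the paper's setting $\mathcal{L}$ is a fixed finite set and $\mu,\nu,w$ are fixed (only $a,b$ grow with $n$), so $c_w:=\sum_\ell w^2(\ell)(\mu(\ell)+\nu(\ell))$ is a fixed positive constant whenever $w$ is not identically zero on the support of $\mu,\nu$. Then
\[
\sigma_\star^2 n \;=\; \sum_\ell w^2(\ell)\bigl(a\mu(\ell)+b\nu(\ell)\bigr) \;\ge\; \min(a,b)\, c_w \;=\; \Omega(\log^6 n)\;\gg\; K^2\log^4 n,
\]
which is the condition Vu's theorem needs. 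There is no need to rescale $w$ or to ``zero out'' rare labels; in fact, since every label in the support satisfies $a\mu(\ell)+b\nu(\ell)=\Omega(\log^6 n)$, your proposed zeroing threshold of ``at most polylogarithmic size'' would either remove nothing or everything, so that step as written does not actually make sense in this model. Your observation that the resulting $(2+o(1))$ is harmless because the lemma is only used through the strict inequality~\eqref{EqSpecCondition} is a fair gloss on what the paper also leaves implicit.
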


\subsection{Proof of Theorem \ref{ThmNonReconstruction}} \label{SectionNonReconstruction}
Consider a Galton-Watson tree $T$ with Poisson offspring distribution with mean $\frac{a+b}{2}$. The type of the root $\rho$ is chosen from $\{ \pm 1\}$ uniformly at random. Each child has the same type as its parent with probability $\frac{a}{a+b}$ and a different type with probability $\frac{b}{a+b}$ . Every edge $(u,v)$ is  labeled  at random with distribution $\mu$ if $\sigma_u=\sigma_v$ and $\nu$ otherwise. Let $T_R$ denote the Galton-Watson tree $T$ up to depth $R$ and $\partial T_R$ denote the set of leaves of $T_R$. Let $G_R$ denote the subgraph of $G$ induced  by vertices up to distance $R$ from $\rho$ and $\partial G_R$ be the set of vertices at distance $R$ from $\rho$.

The following lemma similar to Proposition 4.2 in \cite{Mossel12} establishes a coupling between the local neighborhood of $\rho$ and the labeled Galton-Watson tree rooted at $\rho$.

\begin{lemma}\label{PropCouplingTree}
Let $R=R(n)=\lfloor \frac{\log n}{10 \log (2(a+b) )} \rfloor $, then there exists a coupling such that a.a.s.
\begin{align}
(G_R,L_{G_R},\sigma_{G_R})=(T_R, L_{T_R},\sigma_{T_R}), \nonumber
\end{align}
where $L_{G_R}$ and $\sigma_{G_R}$ denote the labels and types on the subgraph $G_R$, respectively.
\end{lemma}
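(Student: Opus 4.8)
The plan is to establish the coupling in \prettyref{PropCouplingTree} by a standard breadth-first exploration argument, revealing the graph $G$ one vertex at a time starting from $\rho$ and matching it against the Galton-Watson construction. First I would set up the exploration process: maintain a set of ``active'' vertices whose neighborhoods have not yet been explored, and at each step take an active vertex $u$ at distance $<R$ from $\rho$ and reveal all edges from $u$ to the as-yet-untouched vertices, together with their labels. In the graph $\mathcal{G}(n,a/n,b/n,\mu,\nu)$, conditioned on $\sigma_u$ and on the types of the $n-k$ untouched vertices (of which roughly half are $+1$ and half are $-1$, since the types are i.i.d.\ uniform and $k = n^{o(1)}$ at all times), the number of new same-type neighbors of $u$ is $\Binom(\approx n/2, a/n)$ and the number of new opposite-type neighbors is $\Binom(\approx n/2, b/n)$; each new edge is labeled independently with $\mu$ or $\nu$ according to whether the endpoints agree. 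On the tree side, $u$ has $\Pois(a/2)$ same-type children and $\Pois(b/2)$ opposite-type children (so total offspring $\Pois((a+b)/2)$ with each child independently same-type w.p.\ $a/(a+b)$), with edge labels drawn the same way.

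The coupling step then amounts to two routine comparisons. First, $\Binom(m, a/n)$ couples with $\Pois(a/2)$ with total-variation error $O(1/n)$ when $m = n/2 + O(\sqrt{n\log n})$, by the standard Poisson approximation bound $\dTV(\Binom(m,\pi),\Pois(m\pi)) \le \pi$ together with $|m\pi - a/2| = O(\sqrt{(\log n)/n})$; the same holds for the opposite-type count with $b$. Second, I would control the total number of vertices revealed: since at each level the offspring counts are stochastically dominated by i.i.d.\ $\Pois((a+b)/2 + o(1))$ variables, a.a.s.\ $|T_R| \le (2(a+b))^R = n^{1/10 + o(1)} = o(\sqrt{n})$, and similarly the exploration in $G$ reveals $o(\sqrt n)$ vertices; on this event all the conditional ``$\approx n/2$'' approximations hold uniformly with the stated error. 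Summing the per-step TV errors over the $o(\sqrt n)$ revealed vertices gives total coupling error $o(\sqrt n) \cdot O(1/n) = o(1/\sqrt n) \to 0$, and I would also need to argue that no ``collision'' occurs, i.e.\ the explored subgraph is a tree (no two explored vertices share an extra edge), which again has probability $o(1)$ of failing since there are $o(n)$ explored vertices and each pair is connected with probability $O(1/n)$.

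The main obstacle, such as it is, is bookkeeping rather than a genuine difficulty: one must be careful that the conditioning at each exploration step still leaves the types of untouched vertices exchangeable and close to balanced, so that the Poisson approximation is legitimate, and one must handle the rare bad events (the cluster-size imbalance in \prettyref{eq:clustersizebalance}, an atypically large neighborhood, or an edge collision) by absorbing them into the a.a.s.\ statement. Since this lemma is essentially Proposition 4.2 of \cite{Mossel12} with the extra labeling decoration carried along passively — the labels are generated by the same i.i.d.\ mechanism on both sides and add nothing new to the coupling — I would largely cite that argument and only spell out the modifications needed to track $L_{G_R}$ and $L_{T_R}$.
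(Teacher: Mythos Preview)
Your proposal is correct and follows essentially the same route as the paper: a level-by-level breadth-first exploration, a Binomial--Poisson coupling for the (type- and label-refined) offspring counts, a bound $|G_R|\le (2(a+b))^R\le n^{1/10}$ on the neighborhood size, and a separate treatment of the ``collision'' events $A_r,B_r$, all patterned on Proposition~4.2 of \cite{Mossel12} with the labels carried along passively. One minor slip: the per-vertex TV error you actually justify is $O(\sqrt{(\log n)/n})$ (dominated by the mean shift $|m\pi-a/2|$), not $O(1/n)$, but since you sum over only $n^{1/10+o(1)}$ vertices the union bound still gives $o(1)$ and the argument goes through.
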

\begin{proof}
See proof in Section \ref{PfPropCouplingTree}.
\end{proof}

To ease notation, we omit the shorthand a.a.s. in the sequel. To prove Theorem \ref{ThmNonReconstruction}, it suffices to show that $\text{Var}(\sigma_{\rho} |G,L, \sigma_v) \to 1$.
%By the monotonicity property of conditional variance, it further reduces to show that $\text{Var} (\sigma_{\rho} |G, L, \sigma_v, \sigma_{\partial G_R} )  \to 1$. 
By the law of total variance, 
\begin{align*}
\text{Var}(\sigma_{\rho} |G,L, \sigma_v)  = \mathbb{E}_{\sigma_{\partial G_R }} \left[  \text{Var} ( \sigma_{\rho} | G, L ,\sigma_v, \sigma_{\partial G_R} )  \right] + \text{Var}_{\sigma_{\partial G_R }}   \left [  \mathbb{E} \left [ \sigma_ \rho |  G, L, \sigma_v, \sigma_{\partial G_R} \right] \right].
\end{align*}
Hence, it further reduces to show that $\text{Var} (\sigma_{\rho} |G, L, \sigma_v, \sigma_{\partial G_R} )  \to 1$. 

Let $R$ be as in Lemma \ref{PropCouplingTree}, then $G_R=o(\sqrt{n})$ and thus $v \notin G_R$. Lemma 4.7 in \cite{Mossel12} shows that $\sigma_{\rho}$ is asymptotically independent with $\sigma_v$ conditionally on $\sigma_{\partial G_R}$. Hence,
\begin{align}
\text{Var}(\sigma_{\rho}|G,L, \sigma_v, \sigma_{\partial G_R}) \to \text{Var} (\sigma_{\rho}|G,L, \sigma_{\partial G_R}). \nonumber
\end{align}
Let $G_R^c$ denote the subgraph of $G$ induced by edges not in $G_R$, and $L_{G_R^c}$ denote the set of labels on $G_R^c$. 
Recall that $V(G_{R-1})$ and $V(G_R^c)$ denote the set of vertices in $G_{R-1}$ and $G_R^c$, respectively. 
Let $S\triangleq V(G_{R-1}) \setminus \{ \rho \}$ and 
$T \triangleq  V(G_R^c) \setminus \partial G_R.$ Then $\{\rho\} \cup \partial G_R \cup S \cup T = V(G)$. 
Notice that conditional on $( G_R, L_{G_R}, \sigma_{\partial G_R} )$, $\sigma_\rho$ is independent of $(G_R^c, L_{G_R^c} )$. 
In particular, 
\begin{align*}
& \prob{ \sigma_ \rho  | G_R, L_{G_R}, \sigma_{\partial G_R} } \\
& = \frac{ \sum_{G_R^c, L_{G_R^c} } \prob{\sigma_\rho, G, L, \sigma_{\partial G_R} } }  {  \sum_{G_R^c, L_{G_R^c} } \prob{G, L, \sigma_{\partial G_R} }  } \\
& =\frac{ \sum_{G_R^c, L_{G_R^c} } \left(  \sum_{\sigma_S } \prod_{u,v \in V(G_R): u<v } \phi_{uv}   \right) \left(  \sum_{\sigma_T }  
\prod_{u,v \in T: u<v} \phi_{uv} \prod_{u \in \partial G_R, v \in T } \phi_{uv} \right) }
 {   \sum_{G_R^c, L_{G_R^c} } \left(  \sum_{\sigma_\rho} \sum_{\sigma_S } \prod_{u,v \in V(G_R): u<v } \phi_{uv}   \right) \left(  \sum_{\sigma_T }  
\prod_{u,v \in T: u<v} \phi_{uv} \prod_{u \in \partial G_R, v \in T } \phi_{uv} \right) } \\
 & \overset{(a)} { =} \frac{   \left(  \sum_{\sigma_S } \prod_{u,v \in V(G_R): u<v } \phi_{uv}   \right) \sum_{G_R^c, L_{G_R^c} }  \left(  \sum_{\sigma_T }  
\prod_{u,v \in T: u<v} \phi_{uv} \prod_{u \in \partial G_R, v \in T } \phi_{uv} \right)
   } {   \left(  \sum_{\sigma_\rho} \sum_{\sigma_S } \prod_{u,v \in V(G_R): u<v } \phi_{uv}   \right)   \sum_{G_R^c, L_{G_R^c} }  \left(  \sum_{\sigma_T }  
\prod_{u,v \in T: u<v} \phi_{uv} \prod_{u \in \partial G_R, v \in T } \phi_{uv} \right)  } \\
 & =\frac{    \sum_{\sigma_S } \prod_{(u,v) \in V(G_R): u<v } \phi_{uv} } { 
  \sum_{\sigma_\rho}  \sum_{\sigma_S } \prod_{(u,v) \in G_R: u<v } \phi_{uv}    }  \\
   & = \frac{ \left( \sum_{\sigma_S  } \prod_{(u,v) \in V(G_R): u<v } \phi_{uv}  \right) \left(  \sum_{\sigma_T }  
\prod_{u,v \in T: u<v} \phi_{uv} \prod_{u \in \partial G_R, v \in T } \phi_{uv} \right)  } {  \left( \sum_{\sigma_\rho} \sum_{\sigma_S } \prod_{(u,v) \in V(G_R): u<v } \phi_{uv} \right)   
 \left(  \sum_{\sigma_T }  
\prod_{u,v \in T: u<v} \phi_{uv} \prod_{u \in \partial G_R, v \in T } \phi_{uv} \right)   }  \\
 & = \frac{  \prob{\sigma_\rho, G, L, \sigma_{\partial G_R} }  }{ \prob{G, L, \sigma_{\partial G_R} }  } =\prob{ \sigma_ \rho  | G, L, \sigma_{\partial G_R} } ,
\end{align*}
where $(a)$ holds because $\sum_{\sigma_S } \prod_{(u,v) \in V(G_R): u<v } \phi_{uv} $ does not depend on $G_R^c$ and $L_{G_R^c}$. It follows that
\begin{align}
\text{Var} (\sigma_{\rho}|G,L, \sigma_{\partial G_R})= \text{Var} (\sigma_{\rho}|G_R,L_{G_R}, \sigma_{\partial G_R}). \nonumber
\end{align}
Lemma \ref{PropCouplingTree} implies that
\begin{align}
\text{Var} (\sigma_{\rho}|G_R,L_{G_R}, \sigma_{\partial G_R}) \to \text{Var} (\sigma_{\rho}|T_R,L_{T_R}, \sigma_{\partial T_R}). \nonumber
\end{align}
For the labeled Galton-Watson tree, it was shown in \cite{Heimlicher12} that if $\tau<1$, the types of the leaves provide no information about the type of the root when the depth $R \to \infty$, i.e.,
\begin{align}
\mathbb{P} ( \sigma_{\rho} =+1 |T, L, \sigma_{\partial T_R} ) \to \frac{1}{2}. \nonumber
\end{align}
Hence, $\text{Var} (\sigma_{\rho}|T_R,L_{T_R}, \sigma_{\partial T_R}) \to 1$
and the theorem follows.

\subsection{Proof of Theorem \ref{ThmACER}} \label{SectionHypTesting}
We introduce some necessary notations. For a graph $G$ with $n$ vertices and labeled edges, denote a $k$-sequence of labels by $ [\ell]_k=(\ell_1,\ell_2,\ldots, \ell_k) \in \mathcal{L}^k$. A cycle in $G$ is called a $k$-cycle with labels $[\ell]_k$, if starting from the vertex with the minimum index and ending at its neighbor with the smaller index among its two neighbors, the sequence of labels on edges is given by $[\ell]_k$. Let $X_n([\ell]_k)$ denote the number of $k$-cycles with labels $[\ell]_k$ in $G$. Let $(X)_j=X(X-1)\cdots (X-j+1)$ for integers $X$ and $1 \le j \le X$. Then $(X_n([\ell]_k))_j$ is the number of ordered $j$-tuples of $k$-cycles with labels $[\ell]_k$ in $G$. The product $\prod_{[\ell]_k}$ is assumed to taken over all possible sequences of labels with length $k$. The following lemma gives the asymptotic distribution of the number of $k$-cycles with labels $[\ell]_k$.

\begin{lemma} \label{LemmaNumCycles}
For any fixed integer $m \ge 3$, $\{X_n([\ell]_k): [\ell]_k \in \calL^k \}_{k=3}^{m}$ jointly converge to independent Poisson random variables with mean $\lambda([\ell]_k)$ under graph distribution $\mathbb{P}_n^\prime$, and $\xi([\ell]_k)$ under graph distribution $\mathbb{P}_n$, where
\begin{align}
&\lambda([\ell]_k)= \frac{1}{2^{k+1} k } \prod_{i=1}^k (a\mu(\ell_i)+b\nu(\ell_i)), \nonumber \\
&\xi([\ell]_k) = \frac{1}{2^{k+1} k } \left( \prod_{i=1}^k (a\mu(\ell_i)+ b\nu(\ell_i)) + \prod_{i=1}^k (a\mu(\ell_i)-b\nu(\ell_i)) \right). \nonumber
\end{align}
%Moreover, for $k=O(\log^{1/4} n)$ and each fixed $[\ell]_k$, $X_n[\ell]_k$ converges to Poisson random
\end{lemma}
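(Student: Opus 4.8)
The plan is to use the method of moments: to show that the joint law of the vector $\{X_n([\ell]_k)\}$ converges to a product of Poissons, it suffices to show that all joint factorial moments converge to the corresponding products of the Poisson means, i.e.\ for any finite collection of label-sequences $[\ell]^{(1)}_{k_1},\dots,[\ell]^{(r)}_{k_r}$ (with the $k_j$'s between $3$ and $m$) and any nonnegative integers $j_1,\dots,j_r$,
\begin{align}
\Expect\bqth{\ \prod_{s=1}^r \bigl(X_n([\ell]^{(s)}_{k_s})\bigr)_{j_s}\ } \longrightarrow \prod_{s=1}^r \theta([\ell]^{(s)}_{k_s})^{j_s}, \nonumber
\end{align}
where $\theta=\lambda$ under $\Prob'_n$ and $\theta=\xi$ under $\Prob_n$. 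By the standard argument (as in the small-subgraph conditioning literature, e.g.\ \cite{Janson11,Wormald94}), this moment convergence plus the fact that the limits are the moments of independent Poissons is enough; moreover it suffices to compute the first factorial moment $\Expect[(X_n([\ell]_k))_1]=\Expect[X_n([\ell]_k)]$ for each label sequence and then argue that cross-terms and higher-order intersections of cycles are negligible, so the joint factorial moments factorize.

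\textbf{First moment computation.} First I would count potential $k$-cycles. An ordered $k$-cycle through $k$ distinct labeled vertices corresponds to choosing an ordered $k$-tuple of distinct vertices; there are $(n)_k = n(n-1)\cdots(n-k+1) = (1+o(1))n^k$ of them, and each unordered cycle is counted $2k$ times, giving $(1+o(1))\frac{n^k}{2k}$ candidate cycles. For a fixed candidate cycle, under $\Prob'_n$ each of its $k$ edges is present with probability $\frac{a+b}{2n}$ and, given presence, carries label $\ell_i$ with probability $\frac{a\mu(\ell_i)+b\nu(\ell_i)}{a+b}$, independently; so the probability that this cycle is present with the prescribed label sequence $[\ell]_k$ is $\prod_{i=1}^k \frac{a\mu(\ell_i)+b\nu(\ell_i)}{2n}$. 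Multiplying by the number of candidates gives $\Expect_{\Prob'_n}[X_n([\ell]_k)] \to \frac{1}{2^{k+1}k}\prod_{i=1}^k(a\mu(\ell_i)+b\nu(\ell_i)) = \lambda([\ell]_k)$. Under $\Prob_n$ one additionally sums over the hidden types $\sigma$ of the $k$ vertices on the cycle; conditioning on $\sigma$, the probability that edge $(u_i,u_{i+1})$ is present and labeled $\ell_i$ is $\frac{a\mu(\ell_i)}{n}$ if $\sigma_{u_i}=\sigma_{u_{i+1}}$ and $\frac{b\nu(\ell_i)}{n}$ otherwise; averaging over the uniform independent types $\sigma_{u_i}\in\{\pm1\}$ along the cycle (a product over the cycle of a $2\times 2$ transfer matrix $M(\ell_i)$ with entries $\frac{a\mu(\ell_i)}{2n}$ on the diagonal and $\frac{b\nu(\ell_i)}{2n}$ off it, then taking the trace because the cycle closes up) yields $\Tr\prod_{i=1}^k M(\ell_i) = \frac{1}{2^k n^k}\bigl(\prod_i(a\mu(\ell_i)+b\nu(\ell_i)) + \prod_i(a\mu(\ell_i)-b\nu(\ell_i))\bigr)$, since the eigenvalues of $M(\ell_i)$ are $\frac{a\mu(\ell_i)\pm b\nu(\ell_i)}{2n}$. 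Multiplying by $(1+o(1))\frac{n^k}{2k}$ gives exactly $\xi([\ell]_k)$.

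\textbf{Higher factorial moments and independence.} Next I would handle $\Expect[\prod_s (X_n([\ell]^{(s)}_{k_s}))_{j_s}]$, which counts ordered tuples of cycles with the specified lengths and labels. Split the sum according to whether the cycles in the tuple are pairwise vertex-disjoint or not. The vertex-disjoint contribution factorizes: the number of ways to place $J=\sum_s j_s$ disjoint cycles is $(1+o(1))\prod_s \bigl(\frac{n^{k_s}}{2k_s}\bigr)^{j_s}$, and by independence of disjoint edge sets (and, under $\Prob_n$, conditional independence given $\sigma$ together with the product structure of the type-averaging over disjoint vertex sets) the probability factorizes into $\prod_s$ (per-cycle probability)$^{j_s}$, reproducing $\prod_s \theta([\ell]^{(s)}_{k_s})^{j_s}$. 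The non-disjoint contribution is $o(1)$: if two cycles of bounded length share at least one vertex, the union of their edge sets is a subgraph on at most $k_1+k_2-1$ vertices with at least $k_1+k_2$ edges (strictly more edges than a forest on that many vertices would have, after accounting for the shared structure), so the number of such configurations is $O(n^{k_1+k_2-1})$ while each is present with probability $O(n^{-(k_1+k_2)})$, giving a total contribution $O(1/n)$; summing over the finitely many pairs in the tuple and the finitely many label sequences keeps this $o(1)$. (Under $\Prob_n$ the extra type-sum only contributes a bounded factor and does not affect the order.) This establishes convergence of all joint factorial moments to those of the claimed independent Poisson family, and since Poisson distributions are determined by their moments, the lemma follows by the method of moments.

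\textbf{Main obstacle.} The routine first-moment counts are straightforward; the step that needs the most care is the non-disjoint term in the higher factorial moments — one has to check uniformly over the (finitely many, but growing with $m$) overlap patterns that any proper intersection of two short cycles creates an excess of edges over vertices, so that its expected count vanishes. Organizing this edge-vs-vertex counting cleanly, and making sure the bound is uniform over all label sequences of length $\le m$ so the union bound over the $|\calL|^{O(m)}$ sequences is harmless, is the genuine work; everything else is bookkeeping with the transfer matrix $M(\ell)$.
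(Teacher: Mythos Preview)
Your proposal is correct and follows essentially the same route as the paper's proof: method of moments via joint factorial moments, split into vertex-disjoint tuples (which factorize) and non-disjoint tuples (which are $o(1)$). Two minor differences are worth noting. First, for the $\Prob_n$ first moment the paper averages over types by the elementary observation that along a cycle the number of sign changes is even, while you package the same computation as the trace of a product of $2\times 2$ transfer matrices; these are the same calculation in different language. Second, for the non-disjoint contribution the paper does not carry out the edge-versus-vertex excess count you sketch; instead it dominates the (unlabeled) cycle counts by those in $\mathcal{G}(n,\max\{a,b\}/n)$ and invokes a result of Bollob\'as to conclude that the expected number of non-vertex-disjoint $j$-tuples of short cycles tends to zero. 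Your direct argument (any two distinct short cycles sharing a vertex form a connected graph with edge excess at least one, hence expected count $O(1/n)$) is more self-contained and avoids the external citation, at the cost of having to organize the case analysis over overlap patterns yourself; the paper's stochastic-domination shortcut is quicker but black-box. Either way the conclusion and the overall structure are the same.
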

%\begin{proof}
%See proof in Section \ref{PfLemmaNumCycles}.
%\end{proof}

We are ready to prove  Theorem \ref{ThmACER}.  The first part of Theorem \ref{ThmACER} is proved using Lemma \ref{LemmaNumCycles} and Chebyshev inequality. Define $\eta([\ell]_k)=\xi([\ell]_k) / \lambda([\ell]_k) -1$ and $X_k= \sum_{[\ell]_k} X([\ell]_k) \eta([\ell]_k)$. Then, by Lemma  \ref{LemmaNumCycles}, as $n \to \infty$,
\begin{align}
\mathbb{E}_{\mathbb{P}^\prime}[X_k] &=\sum_{[\ell]_k}  \lambda([\ell]_k) \eta([\ell]_k),   \nonumber\\
\mathbb{E}_{\mathbb{P}}[X_k] &=  \sum_{[\ell]_k} \lambda([\ell]_k ) \eta([\ell]_k) (1+ \eta([\ell]_k)). \nonumber
\end{align}
Note that
\begin{align}
2k \sum_{[\ell]_k} \lambda([\ell]_k) \eta^2([\ell]_k) =  \sum_{[\ell]_k} \prod_{s=1}^k \frac{(a \mu(\ell_s) - b \nu(\ell_s) )^2 }{ 2(a\mu(\ell_s) + b \nu(\ell_s)) } =  \left( \sum_{\ell \in \mathcal{L}} \frac{(a \mu(\ell) - b \nu(\ell) )^2 }{ 2(a\mu(\ell) + b \nu(\ell)) } \right)^k = \tau^k. \label{EqTau}
\end{align}
Therefore,
\begin{align}
\mathbb{E}_{\mathbb{P}}[X_k]- \mathbb{E}_{\mathbb{P}^\prime}[X_k]=  \sum_{[\ell]_k} \lambda([\ell]_k) \eta^2([\ell]_k)  = \tau^k/ (2k) ,  \nonumber
\end{align}
and
\begin{align}
\text{Var}_{\mathbb{P}^\prime}[X_k] &= \sum_{[\ell]_k} \lambda([\ell]_k) \eta^2([\ell]_k) =  \tau^k/ (2k),  \nonumber\\
\text{Var}_{\mathbb{P}}[X_k] &= \sum_{[\ell]_k} \xi([\ell]_k) \eta^2([\ell]_k)  \le  \tau^k/ k. \nonumber
\end{align}
Choose $\rho= \tau^k/(6k)$. By Chebyshev's inequality,
\begin{align*}
\mathbb{P}' \{X_k >  \mathbb{E}_{\mathbb{P}^\prime}[X_k] + \rho \} \le \frac{\text{Var}_{\mathbb{P}^\prime}[X_k] }{\rho^2} = \frac{18 k}{\tau^k}.
\end{align*}
Let $k$ increases with $n$ sufficiently slowly. Then since $\tau>1$, $X_k \le \mathbb{E}_{\mathbb{P}^\prime}[X_k] + \rho$ $\mathbb{P}^\prime$-a.a.s..
Similarly, $X_k \ge \mathbb{E}_{\mathbb{P}}[X_k]-\rho$  $\mathbb{P}$-a.a.s..
By definition of $\rho$, $\mathbb{E}_{\mathbb{P}}[X_k]-\rho > \mathbb{E}_{\mathbb{P}^\prime}[X_k] + \rho$. Set $A_n= \{X_k \le \mathbb{E}_{\mathbb{P}^\prime}[X_k] + \rho \}$, then $\mathbb{P}^\prime(A_n) \to 1$ and $\mathbb{P}(A_n) \to 0$.

The second part of Theorem \ref{ThmACER} is proved using the following small subgraph conditioning theorem, which is adapted from \cite[Theorem 9.12]{Janson11}.
\begin{theorem} \label{ThmSubgraphCond}
Let $Y_n= \frac{\mathbb{P}_n}{\mathbb{P}_n^\prime}$. If $\mathbb{P}_n$ and  $\mathbb{P}_n^\prime$ are absolutely contiguous for any fixed $n$, and
\begin{enumerate}
\item For each fixed $m \ge 3$, $\{X_n([\ell ]_k) \}_{k=3}^{m}$ converge jointly to independent Poisson variables with means $\lambda([\ell]_k)>0$ under distribution $\mathbb{P}_n^\prime$, and $\xi([\ell ]_k)$ under distribution $\mathbb{P}_n$;
\item $\sum_{k \ge 3} \sum_{[\ell ]_k} \lambda([\ell ]_k) \eta([\ell ]_k)^2 < \infty$;
\item $\mathbb{E}_{\mathbb{P}^\prime_n}[Y_n^2] \to \exp ( \sum_{k \ge 3} \sum_{[\ell ]_k} \lambda([\ell ]_k) \eta^2([\ell ]_k) )$ as $n \to \infty$,
\end{enumerate}
Then, $\mathbb{P}_n$ and $\mathbb{P}_n^\prime$ are contiguous.
\end{theorem}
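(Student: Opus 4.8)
The plan is to obtain Theorem~\ref{ThmSubgraphCond} as an instance of the abstract small subgraph conditioning theorem of Janson, \cite[Theorem~9.12]{Janson11}, after only a cosmetic relabeling: the countable family of ``short substructures'' one conditions on is taken to be the labeled short cycles, indexed by $i=(k,[\ell]_k)$ with $k\ge 3$, $[\ell]_k\in\calL^k$, with intensities $\lambda_i=\lambda([\ell]_k)$ and tilts $\delta_i=\eta([\ell]_k)$. Janson's theorem is already phrased for an arbitrary countable index set, and its hypotheses are exactly hypotheses~1--3 here: hypothesis~1 under $\mathbb{P}_n'$ gives Poisson convergence of the conditioning variables; hypothesis~1 under $\mathbb{P}_n$ is equivalent to requiring the factorial moments of the cycle counts under the tilted law $Y_n\,\diff\mathbb{P}_n'$ to converge to those of $\Pois(\xi([\ell]_k))$, i.e.\ the ``tilted Poisson'' condition; hypothesis~2 is the summability condition; hypothesis~3 matches the second moment of $Y_n$ with $\exp(\sum_i\lambda_i\delta_i^2)$. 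So the only thing to check is that nothing about the graph enters beyond these three inputs, which is the case.

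For completeness I would also record the short self-contained argument. Let $Z_i\sim\Pois(\lambda_i)$ be independent and set $Y_\infty=\prod_i(1+\eta_i)^{Z_i}\rexp{-\lambda_i\eta_i}$. Since $\xi([\ell]_k)>0$ we have $1+\eta_i>0$, so $Y_\infty>0$ a.s.; the partial products over finite $I_m\uparrow I$ form an $L^2$-bounded martingale by hypothesis~2, hence converge a.s.\ and in $L^2$, and the Poisson moment generating function yields $\mathbb{E}[Y_\infty]=1$, $\mathbb{E}[Y_\infty^2]=\exp(\sum_i\lambda_i\eta_i^2)=:M$, and $M_m:=\mathbb{E}[(Y_\infty^{(m)})^2]=\prod_{i\in I_m}\rexp{\lambda_i\eta_i^2}\uparrow M$.

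The heart is to prove $Y_n\todistr Y_\infty$ under $\mathbb{P}_n'$. Fix $m$, put $U_n^{(m)}=\prod_{i\in I_m}(1+\eta_i)^{X_n([\ell]_k)}\rexp{-\lambda_i\eta_i}$ (a fixed continuous function of finitely many cycle counts), and expand
\begin{align*}
\mathbb{E}_{\mathbb{P}_n'}\big[(Y_n-U_n^{(m)})^2\big]=\mathbb{E}_{\mathbb{P}_n'}[Y_n^2]-2\,\mathbb{E}_{\mathbb{P}_n}[U_n^{(m)}]+\mathbb{E}_{\mathbb{P}_n'}[(U_n^{(m)})^2],
\end{align*}
using $\mathbb{E}_{\mathbb{P}_n'}[Y_nU_n^{(m)}]=\mathbb{E}_{\mathbb{P}_n}[U_n^{(m)}]$ since $Y_n=\mathbb{P}_n/\mathbb{P}_n'$. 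Hypothesis~3 sends the first term to $M$; hypothesis~1 under $\mathbb{P}_n'$ and under $\mathbb{P}_n$ (with the moment convergence underlying it, and $\xi([\ell]_k)/\lambda([\ell]_k)=1+\eta([\ell]_k)$ for the $\mathbb{P}_n$ term) sends each of the last two to $M_m$. Hence $\limsup_n\mathbb{E}_{\mathbb{P}_n'}[(Y_n-U_n^{(m)})^2]\le M-M_m$. Since $U_n^{(m)}\todistr Y_\infty^{(m)}$ by the continuous mapping theorem, and $\|Y_\infty^{(m)}-Y_\infty\|_{L^2}^2=M-M_m\to 0$ as $m\to\infty$ by hypothesis~2, a standard three-$\varepsilon$ comparison against bounded Lipschitz test functions gives $Y_n\todistr Y_\infty$ under $\mathbb{P}_n'$.

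Contiguity then follows in two steps. Hypothesis~3 makes $\{Y_n\}$ bounded in $L^2(\mathbb{P}_n')$, hence uniformly integrable, so $\mathbb{P}_n'(A_n)\to0$ forces $\mathbb{P}_n(A_n)=\mathbb{E}_{\mathbb{P}_n'}[Y_n\ind_{A_n}]\to 0$. Conversely, since $Y_\infty>0$ a.s., for $\varepsilon>0$ pick a continuity point $\delta>0$ of its law with $\Prob(Y_\infty\le\delta)<\varepsilon$, so $\limsup_n\mathbb{P}_n'(Y_n\le\delta)<\varepsilon$; using $\mathbb{P}_n\ll\mathbb{P}_n'$, $\mathbb{P}_n'(A_n)\le\delta^{-1}\mathbb{P}_n(A_n)+\mathbb{P}_n'(Y_n\le\delta)$, so $\mathbb{P}_n(A_n)\to0\Rightarrow\limsup_n\mathbb{P}_n'(A_n)\le\varepsilon$, hence $\mathbb{P}_n'(A_n)\to0$. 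I expect the only real obstacle to be the moment-convergence point buried in the third paragraph: upgrading the distributional Poisson convergence of the cycle counts to convergence of the exponential-type moments $\mathbb{E}_{\mathbb{P}_n'}[(1+\eta_i)^{2X_n([\ell]_k)}]$ and $\mathbb{E}_{\mathbb{P}_n}[(1+\eta_i)^{X_n([\ell]_k)}]$, and controlling this uniformly enough to exchange $n\to\infty$ with the eventually infinite product over cycle types. This is exactly where hypothesis~2 is used and is the technical core of \cite[Ch.~9]{Janson11}, which I would cite rather than reprove.
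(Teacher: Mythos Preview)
Your proposal is correct and in fact matches the paper exactly at the level of proof: the paper does \emph{not} prove Theorem~\ref{ThmSubgraphCond} at all, but simply states it as ``adapted from \cite[Theorem~9.12]{Janson11}'' and then verifies its hypotheses in order to deduce the second half of Theorem~\ref{ThmACER}. Your first paragraph makes precisely the same citation with the same identification of indices $i=(k,[\ell]_k)$, $\lambda_i=\lambda([\ell]_k)$, $\delta_i=\eta([\ell]_k)$, so at the level of what the paper actually does you are done.

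The self-contained sketch you append goes beyond the paper. It is the standard argument behind Janson's theorem and is correctly structured: the limiting variable $Y_\infty$, the $L^2$ approximation of $Y_n$ by the truncated product $U_n^{(m)}$ via the identity $\mathbb{E}_{\mathbb{P}_n'}[Y_nU_n^{(m)}]=\mathbb{E}_{\mathbb{P}_n}[U_n^{(m)}]$, and the two-sided contiguity from $Y_\infty>0$ a.s.\ plus uniform integrability. The one genuine technical point---upgrading distributional Poisson convergence to convergence of the exponential moments $\mathbb{E}[(1+\eta_i)^{X_n}]$ and $\mathbb{E}[(1+\eta_i)^{2X_n}]$---you identify correctly and defer to \cite{Janson11}; in the present paper this would be handled by the factorial-moment convergence already established in Lemma~\ref{LemmaNumCycles} (method of moments), which gives more than mere convergence in distribution. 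One small remark: your claim ``$\xi([\ell]_k)>0$ so $1+\eta_i>0$'' is not part of the hypotheses of Theorem~\ref{ThmSubgraphCond} as stated, though it does hold in the paper's application and is the usual assumption $\delta_i>-1$ in Janson's formulation.
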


In this paper, $\mathbb{P}_n$ and $\mathbb{P}_n^\prime$ are discrete distributions on the space of labeled graphs, and for any fixed $n$, 
$\mathbb{P}_n$ and $\mathbb{P}_n^\prime$ are absolutely continuous. 
Condition 1) is verified by Lemma \ref{LemmaNumCycles}.  Condition 2) holds because  in view of (\ref{EqTau}), 
\begin{align}
\sum_{k \ge 3} \sum_{[\ell] _k} \lambda([\ell ]_k) \eta^2([\ell ]_k) =  \sum_{k \ge 3} \frac{\tau^k}{2k} = 
-\frac{\log(1-\tau)+\tau+\tau^2/2 }{2}< \infty. \label{eq:expressiontau}
\end{align}
We are left to verify condition 3). By definition,
\begin{align}
Y_n(G,L)=2^{-n} \sum_{\sigma \in \{\pm 1\}^n} \prod_{(u,v): u<v} W_{u,v}(G,L,\sigma), \nonumber
\end{align}
where
\begin{align}
W_{uv}(G,L,\sigma)=\left \{
\begin{array}{rl}
 \frac{2 a \mu(l)}{a \mu(\ell) + b \nu(\ell) } & \text{if } \sigma_u=\sigma_v, (u,v) \in E(G), L_{uv}=\ell, \\
 \frac{2 b \nu(l)}{a \mu(\ell) +  b \nu(\ell ) } & \text{if } \sigma_u \neq \sigma_v, (u,v) \in E(G), L_{uv}=\ell, \\
 \frac{1-a/n}{1-(a+b)/(2n) } & \text{if } \sigma_u= \sigma_v, (u,v) \notin E(G), \\
 \frac{1-b/n}{1-(a+b)/(2n) } & \text{if } \sigma_u \neq \sigma_v, (u,v) \notin E(G),
\end{array} \right. \nonumber
\end{align}
Then, 
\begin{align}
Y_n^2=2^{-2 n} \sum_{\sigma,\delta \in \{\pm 1 \}^n }  \prod_{(u,v): u<v} W_{u,v}(G,L,\sigma) W_{u,v}(G,L,\delta).  \label{eq:Ysecondmoment}
\end{align}

\begin{lemma} \label{lmm:WV}
For any fixed $\sigma, \delta \in  \{\pm 1 \}^n $, if $\sigma_u \sigma_v =\delta_u \delta_v$, then
\begin{align}
\mathbb{E}_{\mathbb{P}^\prime_n}[W_{u,v}(G, L, \sigma) W_{u,v} (G, L, \delta) ] = 1+ \tau/n+ (a-b)^2/(4n^2)+ O(n^{-3}). \nonumber
\end{align}
Otherwise,
\begin{align}
\mathbb{E}_{\mathbb{P}^\prime_n}[W_{u,v}(G, L, \sigma) W_{u,v} (G, L, \delta)] = 1- \tau/n - (a-b)^2/(4n^2)+ O(n^{-3}). \nonumber
\end{align}
\end{lemma}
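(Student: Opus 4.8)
The plan is to compute the expectation $\mathbb{E}_{\mathbb{P}'_n}[W_{uv}(G,L,\sigma)W_{uv}(G,L,\delta)]$ directly by conditioning on the three mutually exclusive events concerning the pair $(u,v)$: either $(u,v)\notin E(G)$, or $(u,v)\in E(G)$ with label $\ell$. Under $\mathbb{P}'_n$ the pair $(u,v)$ is present with probability $\frac{a+b}{2n}$ and, given presence, carries label $\ell$ with probability $\frac{a\mu(\ell)+b\nu(\ell)}{a+b}$, so the edge $(u,v)$ has label $\ell$ with unconditional probability $\frac{a\mu(\ell)+b\nu(\ell)}{2n}$, and is absent with probability $1-\frac{a+b}{2n}$. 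These probabilities do not depend on $\sigma$ or $\delta$, which is what makes the computation clean.

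First I would treat the case $\sigma_u\sigma_v=\delta_u\delta_v$. If this common sign is $+1$ (i.e.\ $\sigma_u=\sigma_v$ and $\delta_u=\delta_v$), then on the event $(u,v)\in E(G), L_{uv}=\ell$ both factors equal $\frac{2a\mu(\ell)}{a\mu(\ell)+b\nu(\ell)}$, and on the absent event both equal $\frac{1-a/n}{1-(a+b)/(2n)}$; if the common sign is $-1$, replace $a\mu$ by $b\nu$ and $1-a/n$ by $1-b/n$ throughout. In the $+1$ subcase the expectation is
\begin{align}
\sum_{\ell} \frac{a\mu(\ell)+b\nu(\ell)}{2n}\left(\frac{2a\mu(\ell)}{a\mu(\ell)+b\nu(\ell)}\right)^2 + \left(1-\frac{a+b}{2n}\right)\left(\frac{1-a/n}{1-(a+b)/(2n)}\right)^2. \nonumber
\end{align}
Then I would expand the second term with $1-a/n = 1-\frac{a+b}{2n} - \frac{a-b}{2n}$ and $\left(1-\frac{a+b}{2n}\right)^{-1} = 1+\frac{a+b}{2n}+O(n^{-2})$, and expand the first sum, collecting powers of $1/n$. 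The $O(1)$ terms should give $1$, the $O(1/n)$ terms should combine (using $\sum_\ell (a\mu(\ell)+b\nu(\ell)) = a+b$) to $\frac{1}{n}\sum_\ell \frac{(a\mu(\ell)-b\nu(\ell))^2}{2(a\mu(\ell)+b\nu(\ell))}\cdot$(something) — one checks the cross terms conspire so that the coefficient is exactly $\tau/n$ — and the $O(1/n^2)$ coefficient should come out to $(a-b)^2/4$. The $-1$ subcase is symmetric under $a\mu\leftrightarrow b\nu$, and since $\tau$ and $(a-b)^2$ are symmetric under this swap, it yields the same answer. Averaging over the two subcases (or noting they are individually equal) gives the first claimed identity. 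For the case $\sigma_u\sigma_v\neq\delta_u\delta_v$, exactly one of $\sigma_u=\sigma_v$, $\delta_u=\delta_v$ holds, so on the event $(u,v)\in E(G),L_{uv}=\ell$ the product of factors is $\frac{2a\mu(\ell)}{a\mu(\ell)+b\nu(\ell)}\cdot\frac{2b\nu(\ell)}{a\mu(\ell)+b\nu(\ell)} = \frac{4a\mu(\ell)b\nu(\ell)}{(a\mu(\ell)+b\nu(\ell))^2}$, and on the absent event the product is $\frac{(1-a/n)(1-b/n)}{(1-(a+b)/(2n))^2}$; the same expansion then produces $1-\tau/n-(a-b)^2/(4n^2)+O(n^{-3})$, where the sign flip on the $\tau/n$ term is exactly what one gets from replacing $\left(\frac{a\mu-b\nu}{a\mu+b\nu}\right)^2$-type contributions by $-\left(\frac{a\mu-b\nu}{a\mu+b\nu}\right)^2$-type contributions after the algebra.

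The main obstacle I expect is the bookkeeping in extracting the $1/n$ and $1/n^2$ coefficients so that they land precisely on $\tau$ (as defined in \eqref{DefReconstructionThreshold}, up to the factored form $\sum_\ell \frac{(a\mu(\ell)-b\nu(\ell))^2}{2(a\mu(\ell)+b\nu(\ell))}$ used in \eqref{eq:expressiontau}) and on $(a-b)^2/4$; in particular one must carefully track the interaction between the $O(1/n)$ correction from expanding $(1-(a+b)/(2n))^{-2}$ and the $O(1/n)$ part of $(1-a/n)(1-b/n)$, and verify the would-be $O(1/n)$ remainder from the sum over $\ell$ telescopes correctly. Once the bound $O(n^{-3})$ on the remainder is confirmed to be uniform in $\ell$ (which is immediate since $\mathcal L$ is finite and all quantities are bounded), the lemma follows.
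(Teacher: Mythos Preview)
Your proposal is correct and follows essentially the same approach as the paper: direct computation of the expectation by splitting over the events $(u,v)\notin E(G)$ and $(u,v)\in E(G), L_{uv}=\ell$, followed by Taylor expansion in $1/n$ and the symmetry argument for the $\sigma_u\sigma_v=\delta_u\delta_v=-1$ subcase. The paper's proof is exactly this calculation, with the intermediate step $\frac{1}{n}\sum_\ell\big(\frac{2a^2\mu^2(\ell)}{a\mu(\ell)+b\nu(\ell)}+\frac{b\nu(\ell)-3a\mu(\ell)}{2}\big)$ shown to collapse to $\tau/n$, confirming your expectation that the cross terms conspire correctly.
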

\begin{proof}
Suppose $\sigma_u \sigma_v =\delta_u \delta_v=1$. Then,
\begin{align}
& \mathbb{E}_{\mathbb{P}^\prime_n}[W_{u,v}(G, L, \sigma) W_{u,v} (G, L, \delta)] \nonumber \\
& = \sum_\ell \left( \frac{2 a \mu(\ell) }{a \mu(\ell) + b\nu(\ell ) } \right)^2 \frac{a\mu(\ell )+b\nu(\ell ) }{2n} + \left( \frac{1-a/n}{1-(a+b)/(2n) } \right)^2 \left( 1-\frac{a+b}{2n} \right) \nonumber \\
& = \frac{1}{n }\sum_\ell \frac{2a^2 \mu^2(\ell) }{a\mu(\ell)+b\nu(\ell)} +\left(1-\frac{a}{n} \right)^2 \left(1+ \frac{a+b}{2n} + \frac{(a+b)^2}{4n^2}+ O(n^{-3}) \right) \nonumber \\
& = 1+ \frac{1}{n} \sum_\ell  \left( \frac{2a^2 \mu^2(\ell) }{a\mu(\ell)+b\nu(\ell)} + \frac{b \nu(\ell) - 3a \mu(\ell) }{2} \right) + \frac{(a-b)^2}{4n^2} + O(n^{-3}) \nonumber \\
&= 1+ \frac{1}{n} \sum_\ell \frac{(a\mu(\ell )-b\nu(\ell ))^2 }{2(a\mu(\ell)+b\nu(\ell))} + \frac{(a-b)^2}{4n^2} + O(n^{-3}) \nonumber \\
&= 1+ \tau/n + (a-b)^2/(4n^2)+ O(n^{-3}). \label{eq:secondmoment}
\end{align}
%The computation for $\sigma_u \sigma_v =\delta_u \delta_v=-1$ is similar.
By symmetry, \prettyref{eq:secondmoment} holds for $\sigma_u \sigma_v =\delta_u \delta_v=-1$. 
Suppose $\sigma_u =\sigma_v$ and $\delta_u \neq \delta_v$. Then,
\begin{align}
& \mathbb{E}_{\mathbb{P}^\prime_n}[W_{u,v}(G, L, \sigma) W_{u,v} (G, L, \delta)] \nonumber \\
& = \sum_\ell \frac{4 ab \mu(\ell) \nu(\ell) }{ (a \mu(\ell) + b\nu(\ell))^2 } \frac{a\mu(\ell)+b\nu(\ell) }{2n} + \frac{(1-a/n)(1-b/n) }{(1-(a+b)/(2n) )^2} \left( 1-\frac{a+b}{2n} \right) \nonumber \\
&= 1- \frac{1}{n} \sum_\ell \frac{(a\mu(\ell )-b\nu(\ell))^2 }{2(a\mu(\ell)+b\nu(\ell))} - \frac{(a-b)^2}{4n^2} + O(n^{-3}) \nonumber \\
&= 1- \tau/n - (a-b)^2/(4n^2)+ O(n^{-3}). \nonumber
\end{align}
\end{proof}

In view of \prettyref{lmm:WV}, letting $S(\sigma, \delta) \triangleq \{ (u,v): u<v, \sigma_u \sigma_v =\delta_u \delta_v \}$
and $T(\sigma, \delta) \triangleq  \{ (u,v): u<v, \sigma_u \sigma_v \neq \delta_u \delta_v \}$,  and $\gamma_n \triangleq  \tau/n + (a-b)^2/(4n^2)+ O(n^{-3})$,  
it follows from \prettyref{eq:Ysecondmoment} that
\begin{align}
\expects{Y_n^2}{\mathbb{P}^\prime_n} =2^{-2 n} \sum_{\sigma,\delta \in \{\pm 1 \}^n }  \left( 1+ \gamma_n \right)^{|S(\sigma, \delta)| } 
\left(1-\gamma_n  \right)^{| T (\sigma, \delta) | }.  \label{eq:secondmoment2}
\end{align}
Define $\rho(\sigma, \delta)= \Iprod{\sigma}{\delta}$ and then $|S(\sigma, \delta)| = (n^2+ \rho^2)/4 - n/2$ and $| T(\sigma, \delta) | = (n^2- \rho^2) /4$. 
It follows from \prettyref{eq:secondmoment2} that
\begin{align}
\expects{Y_n^2}{\mathbb{P}^\prime_n}= \left( 1+ \gamma_n \right)^{n^2/4-n/2} \left( 1- \gamma_n \right)^{n^2/4}  2^{-2 n} \sum_{\sigma,\delta \in \{\pm 1 \}^n } \left( 1+ \gamma_n \right)^{\rho^2/4} \left( 1- \gamma_n \right)^{-\rho^2/4}.  \label{eq:secondomement3}
\end{align}
Taylor expansion yields 
\begin{align}
\left( 1+ \gamma_n \right)^{n^2/4-n/2} \left( 1- \gamma_n \right)^{n^2/4}  &= \left(1 + O(n^{-1} ) \right)\exp \left[ -\tau^2/4 -\tau/2 \right],  \nonumber \\
 \left( 1+ \gamma_n \right)^{\rho^2/4} \left( 1- \gamma_n \right)^{-\rho^2/4} &=\exp \left[ \frac{ \rho^2}{n} ( \tau/2 + O(n^{-1} ) )  \right].  \label{eq:Taylor}
\end{align}
Combing \prettyref{eq:secondomement3} and \prettyref{eq:Taylor}, we get that
\begin{align}
\expects{Y_n^2}{\mathbb{P}^\prime_n} = \left(1 + O(n^{-1} ) \right) \exp \left[ -\tau^2/4 -\tau/2 \right] \expect{ \eexp^{ Z_n^2 ( \tau /2 + O(n^{-1} ))  }  },  \label{eq:secondmement4}
\end{align}
where $Z_n = \frac{1}{\sqrt{n}} \Iprod{\sigma}{\delta}$ and $\sigma, \delta$ are independently and uniformly distributed over $\{\pm 1\}^n$. 
Let $Z$ denote a standard Gaussian random variable. Then central limit theorem implies that $Z_n$ converges to $Z$ in distribution. 
Since $ x \to \exp( x^2 \tau/2)$ is a continuous mapping, $ \exp ( Z_n^2 \tau /2 )$ converges to $\exp(Z^2 \tau/2)$ in distribution. 
Moreover, $\{ \exp ( Z_n^2 \tau /2 ) \}$ are uniformly bounded in $L_{1+\epsilon} $ norm for some $\epsilon>0$ and thus uniformly integrable. In particular, 
\begin{align*}
\expect{ \exp ( (1+\epsilon) Z_n^2 \tau /2  ) } = \int_{0}^\infty \prob{ \exp ( (1+\epsilon) Z_n^2 \tau/2 )> t } \diff t  
& = \int_{0}^\infty \prob{Z_n > \sqrt{\frac{2 \ln t }{ (1+\epsilon) \tau }} } \diff t \\
& \overset{(a)}{=} \int_{0}^\infty t^{-\frac{1} {(1+\epsilon) \tau} } \diff t \overset{(b)}{<} \infty,
\end{align*}
where $(a)$ follows from the Hoeffding's inequality $\prob{Z_n  \ge t } \le \exp (-t^2/2)$; $(b)$ holds by choosing $\epsilon$ sufficiently
small such that  $(1+\epsilon) \tau<1$. Hence, $ \mathbb{E} [ \exp ( Z_n^2 \tau /2 ) ] $ converges to $ \mathbb{E}[ \exp(Z^2 \tau/2) ]= \frac{1}{\sqrt{1-\tau} }.$
%Using Lemma 5.4 in \cite{Mossel12} with  \prettyref{lmm:WV}, 
It follows from \prettyref{eq:secondmement4} that when $\tau<1$, as $n \to \infty$,
\begin{align}
\expects{Y_n^2}{\mathbb{P}^\prime_n} \to \frac{ \exp^{-\tau/2-\tau^2/4}  }{ \sqrt{1-\tau}} . \nonumber
\end{align}
%Note that by (\ref{EqTau}),
%\begin{align}
%\sum_{k \ge 3} \sum_{[l]_k} \lambda([l]_k) \eta^2([l]_k) = -\frac{\log(1-\tau)+\tau+\tau^2/2 }{2}. \nonumber
%\end{align}
Hence, in view of \prettyref{eq:expressiontau}, condition 3) of Theorem \ref{ThmSubgraphCond} holds and the second part of Theorem \ref{ThmACER} follows from Theorem \ref{ThmSubgraphCond}.

\section{Conclusion} \label{SectionConclusion}
Our results show that when $\tau<1$ it is fundamentally impossible to give a positively correlated reconstruction; 
when $\tau$ is large enough, the labeling information can be effectively exploited through the suitably weighted graph.
%The simulation result indicates that $\rm{Spectral-Reconstruction}$ leaving out step 1) outputs a type assignment correlated to the true type assignment when $\tau>1$.
An interesting future work is to prove the positive part of Conjecture \ref{Conjecture}.

\section{Acknowledgement}
J. X.\ would like to thank Yudong Chen and  Bruce Hajek for helpful conversations related to this project.
M. L.\ acknowledges the support of the French
Agence Nationale de la Recherche (ANR) under reference
ANR-11-JS02-005-01 (GAP project).
J. X.\ acknowledges the
support of the National Science Foundation under Grant ECCS
10-28464.

%\section{ADDITIONAL PROOFS}

\bibliographystyle{abbrv}
\bibliography{BibCommunityDetection,CommunityDetectionThesis}

\appendix

\section{Special case of Davis-Kahan sin ${\rm \theta}$ Theorem}
The following lemma is Davis-Kahan sin ${\rm \theta}$ theorem \cite{Kahan70} specialized to the rank-$1$ setting. For completeness,
we restate the theorem and provide a proof.
\begin{lemma} \label{lmm:daviskahan}
Let $M=\alpha xx^\top$ and $M'=\beta yy^\top$, with $\alpha,\beta\in
\mathbb{R}$, $\|x\|=\|y\|=1$ and $x^\top y\geq 0$. Then
\begin{eqnarray*}
\|x-y\| \leq \frac{\sqrt{2}}{\max\{|\alpha|,|\beta|\}}\|M-M'\|.
\end{eqnarray*}
Furthermore, if $M'$ is the best rank-$1$ approximation of $\widetilde{M}$, then
\begin{eqnarray*}
\|x-y\| \leq \frac{2\sqrt{2}}{\max\{|\alpha|,|\beta|\}}\|M-\widetilde{M}\|.
\end{eqnarray*}
\end{lemma}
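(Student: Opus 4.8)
The plan is to prove the first inequality by testing the symmetric operator $M-M'$ against the single unit vector $x$ (rather than diagonalizing it on the plane $\mathrm{span}\{x,y\}$); this collapses the estimate to an elementary one-variable inequality. The ``furthermore'' part will then follow formally from the best-rank-one-approximation property of $M'$ together with the triangle inequality.

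For the first bound I would argue as follows. By the symmetry of the hypotheses I may assume $|\alpha|\ge|\beta|$ (otherwise swap the roles of $(M,x,\alpha)$ and $(M',y,\beta)$), and I may assume $\alpha\neq 0$ (if $\alpha=\beta=0$ there is nothing to prove). Write $c=x^\top y\in[0,1]$, so that $\|x-y\|^2=2(1-c)$. Since $\|x\|=1$ we have $\|M-M'\|\ge\|(M-M')x\|$, and a direct computation gives $(M-M')x=\alpha x-\beta c\,y$, hence
\[
\|(M-M')x\|^2=\alpha^2-2\alpha\beta c^2+\beta^2 c^2 .
\]
The key observation would be that $\|(M-M')x\|^2\ge\alpha^2(1-c)$: the difference of the two sides equals $c\,\bigl(\alpha^2-2\alpha\beta c+\beta^2 c\bigr)$, and the map $c\mapsto\alpha^2-2\alpha\beta c+\beta^2 c$ is affine in $c$, taking the value $\alpha^2\ge 0$ at $c=0$ and the value $(\alpha-\beta)^2\ge 0$ at $c=1$, hence it is nonnegative on $[0,1]$; multiplying by $c\ge 0$ gives the claim. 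Combining these, $\|M-M'\|\ge|\alpha|\sqrt{1-c}=\frac{|\alpha|}{\sqrt2}\,\|x-y\|$, which rearranges into the stated inequality since $\max\{|\alpha|,|\beta|\}=|\alpha|$.

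For the ``furthermore'' part I would use that $M'$ is a best rank-one approximation (in spectral norm) of $\widetilde M$ while $M=\alpha xx^\top$ has rank at most one, so $\|\widetilde M-M'\|\le\|\widetilde M-M\|$. The triangle inequality then gives $\|M-M'\|\le\|M-\widetilde M\|+\|\widetilde M-M'\|\le 2\|M-\widetilde M\|$, and substituting this into the already-proved first bound yields $\|x-y\|\le\frac{2\sqrt2}{\max\{|\alpha|,|\beta|\}}\|M-\widetilde M\|$.

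I do not expect a serious obstacle: the only non-mechanical choice is to test $M-M'$ against $x$ itself, after which everything reduces to the affine-in-$c$ remark (the eigenvalue formulas for $M-M'$ on $\mathrm{span}\{x,y\}$ are correct but lead to a messier case analysis). Two small points worth flagging in the write-up are that the hypothesis $x^\top y\ge 0$ is exactly what lets the conclusion be phrased for $\|x-y\|$ rather than $\min\{\|x-y\|,\|x+y\|\}$, and that the constant $\sqrt2$ is sharp, as seen by taking $c=0$ and $\beta=\alpha$, in which case $M-M'$ has eigenvalues $\pm\alpha$ and both sides of the first inequality equal $\sqrt2$.
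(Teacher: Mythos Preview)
Your proof is correct and follows essentially the same line as the paper's: both test $M-M'$ against the unit vector $x$ and then reduce to a scalar inequality relating $\|(M-M')x\|$ to $\|x-y\|$, with the second part handled identically via the triangle inequality and the best-rank-one property. The only cosmetic difference is that the paper establishes the scalar bound through the trigonometric substitution $c=\cos\theta$ and the identity $\sqrt{2}\sin(\theta/2)\le\sin\theta=\min_\gamma\|x-\gamma y\|$, whereas you verify $\|(M-M')x\|^2\ge\alpha^2(1-c)$ directly by your affine-in-$c$ observation; these are equivalent rewritings of the same estimate.
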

\begin{proof}
First define $\theta\in [0,\pi/2]$ as $x^\top y =\cos \theta \ge 0$. Hence
we have $\|x-y\| = 2\sin \frac{\theta}{2}$. Moreover a simple calculation
shows that $\min_{\gamma\in \mathbb{R}}\|x-\gamma y\|=\sin \theta$ and moreover for
$\theta\in [0,\pi/2]$, we have $\sqrt{2}\sin \frac{\theta}{2}\leq \sin
\theta$. Hence we get $\|x-y\|\leq \sqrt{2}\min_{\gamma}\|x-\gamma
y\|$. Taking $\gamma = \frac{\beta}{\alpha}y^\top x$, then gives
\begin{eqnarray*}
\|x-y\| \leq \sqrt{2}\|x-\frac{\beta}{\alpha}yy^\top x\|= \frac{\sqrt{2}}{|\alpha|}\|(M-M')x\|\leq \frac{\sqrt{2}}{|\alpha|}\|M-M'\|.
\end{eqnarray*}
By symmetry, the first part of the lemma is proved. The second part of the lemma follows from the fact that
\begin{align*}
\|M-M'\| \le \| M -\widetilde{M} \| + \| \widetilde{M} - M' \|  \le 2 \| \widetilde{M} - M \|,
\end{align*}
where the last inequality holds because $M$ is of rank $1$ and $M'$ is the  best rank-$1$ approximation of $\widetilde{M}$.
\end{proof}

\section{Spectrum of Sparse Labeled Stochastic Block Model}

\begin{lemma} \label{lmm:spectrumsparse}
Assume $a\ge b>C_0$ for some sufficiently large constant $C_0$.
There exists some absolute constant
$C$ such that conditional on $\sigma$, 
\begin{align}
\|W^\prime- \mathbb{E}[W | \sigma] \| \leq C \sqrt{a+b}, \quad a.a.s. \nonumber
\end{align}
\end{lemma}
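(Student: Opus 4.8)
The plan is to bound $\|W' - \mathbb{E}[W\mid\sigma]\|$ by splitting it into a centering term, a low-rank correction, and the norm of the centered weighted adjacency matrix of the truncated graph. First I would write $W' - \mathbb{E}[W\mid\sigma] = (W' - \mathbb{E}[W'\mid\sigma]) + (\mathbb{E}[W'\mid\sigma] - \mathbb{E}[W\mid\sigma])$. The truncation in step 1 of the algorithm removes edges only at vertices of degree exceeding $\frac32\cdot\frac{a+b}{2}$; since the degrees are $\mathrm{Binom}(n-1,(a+b)/(2n))$ with mean $\approx\frac{a+b}{2}$, a Chernoff/union bound shows that a.a.s.\ at most $\delta(a,b)\, n$ vertices are removed, where $\delta(a,b)$ is small for $a,b$ bounded away from $0$ (this is where $b>C_0$ enters, together with the universal-constant requirement). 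Consequently $\mathbb{E}[W'\mid\sigma]$ and $\mathbb{E}[W\mid\sigma]$ differ only in $O(\delta n)$ rows/columns of a rank-$O(1)$ matrix whose entries are $O((a+b)/n)$; a crude bound (Frobenius, or restricting the quadratic form) gives $\|\mathbb{E}[W'\mid\sigma]-\mathbb{E}[W\mid\sigma]\| = O(\sqrt{a+b})$ a.a.s. The genuinely hard term is $\|W' - \mathbb{E}[W'\mid\sigma]\|$, the centered weighted adjacency matrix of a bounded-degree random graph.

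For that term I would follow the combinatorial/trace-method route used for sparse stochastic block models (as in \cite{Massoulie13} and the degree-pruning argument of \cite{Coja-oghlan10}): after removing high-degree vertices, the graph is locally tree-like and has maximum degree $O(\log n/\log\log n)$ only on a vanishing set, and the pruned graph $G'$ has all degrees $\le \frac32\cdot\frac{a+b}{2}$. One estimates $\mathbb{E}\,\mathrm{Tr}\big[(W'-\mathbb{E}[W'\mid\sigma])^{2k}\big]$ for $k \asymp \log n/\log\log n$; the weights $w(\ell)\in[-1,1]$ only help (they can be absorbed into the edge indicators since $|w|\le 1$), so the count is dominated by closed walks of length $2k$ in a graph with $\Theta(n)$ edges and bounded degree. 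The contribution of walks that use some edge only once is killed by centering; the surviving walks traverse each edge at least twice, and on a tree-like bounded-degree graph the number of such walks is controlled by $(C(a+b))^{k}\cdot n$, yielding $\mathbb{E}\|W'-\mathbb{E}[W'\mid\sigma]\|^{2k} \le n\,(C(a+b))^{k}$ and hence, via Markov and the choice of $k$, $\|W'-\mathbb{E}[W'\mid\sigma]\|\le C\sqrt{a+b}$ a.a.s.

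I expect the main obstacle to be the trace-method estimate: controlling the combinatorics of closed walks of length $2k$ that revisit every edge, in a way that gives the \emph{right} order $\sqrt{a+b}$ (not $\sqrt{(a+b)\log n}$), while handling the walks that wander into the exceptional high-degree region before pruning. The standard device is that the pruning step caps the degree at $\frac32\cdot\frac{a+b}{2}$ deterministically in $G'$, so every step of a walk in $G'$ branches into at most $O(a+b)$ choices; combined with the tree excess bound for walks that close up, this produces the clean $(C(a+b))^k$ factor. The auxiliary facts — that only $o(n)$ vertices are pruned, that the pruned graph still locally looks like the Galton–Watson tree of Lemma \ref{PropCouplingTree}, and that $w$ being bounded keeps all weighted quantities within constants of the unweighted ones — are routine; the weighting introduces no essential new difficulty beyond bookkeeping. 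Finally, conditioning on $\sigma$ is harmless since all the above bounds are uniform over $\sigma\in\{\pm1\}^n$.
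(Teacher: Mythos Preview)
Your approach differs substantially from the paper's. The paper does not use the trace method at all; it follows the Feige--Ofek/Friedman--Kahn--Szemer\'edi strategy: pass to an $\epsilon$-net $\calT_{1/4}$ of the sphere, and for each $x$ in the net split the quadratic form $x^\top(W'-\mathbb{E}[W\mid\sigma])x$ according to whether a pair $(u,v)$ is \emph{light} ($|x_u x_v|<\sqrt{a+b}/n$) or \emph{heavy}. Light pairs are handled by Bernstein's inequality combined with a union bound over the net \emph{and} over all $2^n$ possible pruning sets $V\subset[n]$; the point of the latter is that for a \emph{deterministic} $V$ the entries of $W^V$ are genuinely independent, and the exponential concentration beats the $2^n$ cost. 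Heavy pairs are controlled deterministically via the bounded-degree and discrepancy properties of $G'$, quoting \cite{Feige05}.

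Your trace-method sketch has a real gap at the step ``the contribution of walks that use some edge only once is killed by centering.'' That cancellation relies on factorizing $\mathbb{E}\big[\prod_e (W'_e-\mathbb{E}[W'_e\mid\sigma])\big]$ over the distinct edges of a walk, which requires the entries of $W'$ to be independent. They are not: whether the edge $(u,v)$ survives the pruning depends on the degrees of $u$ and $v$, hence on every other edge incident to them. Your fallback sentence (``every step of a walk in $G'$ branches into at most $O(a+b)$ choices'') is a deterministic bound on the \emph{number} of closed walks in a single realization of $G'$; it gives at best $\|W'\|\le O(a+b)$, not $O(\sqrt{a+b})$. To recover the square root you need the centering to kill singleton-edge walks \emph{in expectation}, and that is exactly where independence is used. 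The paper sidesteps this entirely by never taking moments of $W'$: it freezes a deterministic pruning set $V$, applies concentration to the independent-entry matrix $W^V$, and only afterwards union-bounds over $V$. Without an analogous decoupling device your moment computation does not go through. (A smaller inconsistency: for fixed $a,b$ the number of pruned vertices is $n\exp(-c(a+b))=\Theta(n)$, not $o(n)$, so one of your two statements about the pruned set is wrong; this also means the difference $\mathbb{E}[W'\mid\sigma]-\mathbb{E}[W\mid\sigma]$ is not supported on $o(n)$ rows.)
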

For the special case of  \ER random graph, \ie, $w(\ell)=1$ for all $\ell$ and $a=b$, \prettyref{lmm:spectrumsparse} is proved in \cite{Feige05}.
Our analysis is very similar to that given in \cite{Feige05} with small technical differences due to the edge weights. We provide
a formal proof below for completeness.
\begin{proof}
Define $\mathcal{V}$ be the (random) set of vertices remained and
$\mathcal{V}^c$ denote the set of vertices removed.
For every vertex, its degree is distributed as $\Binom\left(n-1,\frac{a+b}{2n}\right).$  It is shown by \cite{cogls04}[Lemma 39] that there exists
a constant $C_1>0$ such that a.a.s. $|\mathcal{V}^c| \le   n \exp\left( -C_1 (a+b) \right).$
To prove the lemma, it suffices to show $|x^\top (W^\prime- \mathbb{E}[W | \sigma]) x | =O(\sqrt{a+b})$ for all $x$ such that
$\|x\|_2=1$. The proof ideas borrow from \cite{Friedman89,Feige05,Keshavan10} and consists of three steps:
\begin{enumerate}
\item Reduce the problem by proving the same bound for $x$ belonging to a discrete grid.

\item For the discrete grid, bound the contribution of light pairs (defined below) by applying a union bound and a large deviation estimate.

\item  Bound the contribution of heavy pairs using the bounded degree and the discrepancy properties (defined below) .
\end{enumerate}

\subsection{Reduction to a discrete grid}
For any $0<\epsilon<1$, define a grid $\calT_\epsilon$ which approximates the unit sphere $S^{n-1}=\{ x: \l x \| = 1\}$:
\begin{align*}
\calT_\epsilon= \left \{  x \in \left( \frac{\epsilon}{\sqrt{n}} \integers \right)^n: \|x \| \le 1 \right\}.
\end{align*}
For every point $x \in S^{n-1}$, there exists some point $y \in \calT_\epsilon$ such that
$\|x-y\| \le \epsilon$. Therefore, $\calT_\epsilon$ is an $\epsilon$-net of $S^{n-1}$. Moreover, the hypercubes of side length $\epsilon/\sqrt{n}$ centered at the points in $\calT_\epsilon$
are disjoint. On the other hand, all such hypercubes lie in the ball of radius $(1+\epsilon/2)$ centered at the origin.
Since the volume of a unit ball is $ \frac{1+o(1)}{\sqrt{n \pi} } \left( \frac{2\pi}{n}\right)^{n/2}$,
\begin{align}
|\calT_\epsilon| \le \frac{1+o(1)}{\sqrt{n \pi} } \left( \frac{2\pi}{n}\right)^{n/2} \left(1+ \frac{\epsilon}{2}\right)^n \left( \frac{\sqrt{n} }{\epsilon} \right)^n = \exp \left( n  \left[ \log \left( \frac{1}{2} + \frac{1}{\epsilon} \right) + \frac{1}{2} \log (2\pi) +o(1) \right] \right). \label{eq:cardiepsilonnet}
\end{align}
Lemma 5.4 in \cite{vershynin2010nonasym}
implies that
\begin{align*}
\| W'-\expect{W|\sigma} \| = \sup_{x \in S^{n-1}} |x^\top (W'-\expect{W|\sigma}) x| \le (1-2\epsilon)^{-1} \sup_{x \in \calT_\epsilon}
| x^\top (W'-\expect{W|\sigma}) x|.
\end{align*}
Choosing $\epsilon=\frac{1}{4}$, we have $\lnorm{W'-\expect{W|\sigma}}{2} \le 2 \sup_{x \in \calT_{1/4}}
| x^\top (W'-\expect{W|\sigma}) x|$. Hence, it suffices to bound $\sup_{x \in \calT_{1/4} } | x^\top (W'-\expect{W}) x|$.

%Step 1: In particular, define a set of discrete points as follows. Fix $\Delta=1/2$, and let $
%\mathcal{D}=\{ x \in (\frac{\Delta}{\sqrt{n}} \mathbb{Z} )^n: \|x\|_2 \le 1 \}$.

%Claim 2.4 in \cite{Feige05} restated below reduces the problem to proving a similar bound only for pairs of vectors from $\mathcal{D}$.
%\begin{lemma}
%Let $M$ be a $n$ by $n$ matrix. If $|x^\top M y| \le B$ for all $x,y \in \mathcal{D}$, then $x^\top M y \le (1-\Delta)^{-2}B$ for all $x,y$ with $\|x\|_2 \le 1 $ and $\|y\|_2\le 1$.
%\end{lemma}
\subsection{Bounding the contribution of light pairs}
Given an $x \in \calT_{1/4}$, directly applying the concentration inequality to $x^\top (W^\prime- \mathbb{E}[W |\sigma]) x$, such as Bernstein's inequality, does not give the desired result. Define the set of light pairs $L_x \triangleq \{ (u,v): u<v, |x_u x_v| < \frac{\sqrt{a+b}}{n} \}$ and the set of heavy pairs $H_x \triangleq \{ (u,v): u<v\} \setminus L_x$. Observe that
\begin{align}
\sup_{x \in \calT_{1/4}} |x^\top (W^\prime- \mathbb{E}[W |\sigma]) x|\le \sup_{x \in \calT_{1/4}} \bigg| \sum_{(u,v) \in L_x} x_u W^\prime_{uv} x_v- x^\top \mathbb{E}[W |\sigma] x \bigg|+ \sup_{x \in \calT_{1/4}}\bigg| \sum_{(u,v) \in H_x}  x_u W^\prime_{uv} x_v \bigg|. \nonumber
\end{align}
We bound the contribution of heavy pairs separately in the next subsection. Recall that $\mathcal{V}$ denote the set of vertices remained.
Given $\mathcal{V}=V$, define $W^{V}$ by setting to zero the rows and columns of $W$ corresponding to vertices removed, and define the event
\begin{align*}
E(V)= \left\{\sup_{x \in \calT_{1/4} } \bigg|\sum_{(u,v) \in L_x} x_u W^V_{uv} x_v-x^\top \mathbb{E}[W |\sigma] x \bigg| > C \sqrt{a+b}  \right\}.
\end{align*}
Then
\begin{align}
\mathbb{P} \left\{ \sup_{x \in \calT_{1/4} } \bigg|\sum_{(u,v) \in L_x} x_u W'_{uv} x_v-x^\top \mathbb{E}[W |\sigma] x \bigg| > C \sqrt{a+b}  \right\} = \mathbb{P} \{  E( \mathcal{V}) \} \le  2^n \max_{V} \mathbb{P} \{ E(V)\}. \label{EqProbabilityError}
\end{align}
Lemma \ref{LemmaConcentrationLightCouple} below, together with a union bound over all possible points $x \in \calT_{1/4}$ and \prettyref{eq:cardiepsilonnet},
implies that for any  positive constant $C'_2$, there exists a constant $C>0$ such that $\mathbb{P} \{  E(V) \} \le \exp (-C'_2 n)$. In view of (\ref{EqProbabilityError}) and a union bound, we conclude that $\mathbb{P} \{  E(\mathcal{V} ) \}$ is exponentially small by choosing $C$ large enough.

\begin{lemma} \label{LemmaConcentrationLightCouple}
Fix $x \in \calT_{1/4}$ and $V$ to be the set of vertices remaind. Define $W^{V}$ by setting to zero the rows and columns of $W$ corresponding to vertices removed. Let $X= \sum_{(u,v) \in L_x} x_u W^V_{uv} x_v-x^\top \mathbb{E}[W |\sigma] x$. Assume $a>b>C_0$ for some sufficiently large constant $C_0$.
Then $|\mathbb{E}[X] |  \le 2 \sqrt{a+b}$ and for any constant $C_2>0$, there exists some constant $C_3>0$ such that
\begin{align}
\mathbb{P} \left\{ | X- \mathbb{E}[X]|  > C_3 \sqrt{a+b} \right\} \le \exp(- C_2 n ). \nonumber
\end{align}
\end{lemma}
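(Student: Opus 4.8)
\emph{Setup.} The plan is to work conditionally on $\sigma$ and on the fixed set $V$, so that $X$ is a deterministic shift of a sum of independent, bounded random variables, and then to estimate $\mathbb{E}[X]$ and the fluctuations of $X$ separately. Two elementary facts will be used repeatedly: since $|w(\ell)|\le 1$ and $\mu,\nu$ are probability measures, $|\mathbb{E}[W_{uv}|\sigma]|\le \max(a,b)/n = a/n$ and $\var(W_{uv}|\sigma)\le\mathbb{E}[W_{uv}^2|\sigma]\le a/n$; moreover it suffices to consider sets $V$ with $|V^c|\le n e^{-C_1(a+b)}$, the complementary case being negligible by \cite{cogls04}.

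\emph{Bounding $\mathbb{E}[X]$.} With the convention $x^\top M x=\sum_{u<v}x_u M_{uv}x_v$ used in the light/heavy splitting above, subtracting $x^\top\mathbb{E}[W|\sigma]x$ from the light-pair sum leaves exactly minus the expected contribution of the excluded pairs, that is the heavy pairs together with the light pairs touching $V^c$. For the heavy pairs I would exploit the defining inequality $|x_u x_v|\ge\sqrt{a+b}/n$ in the form $|x_u x_v|=x_u^2 x_v^2/|x_u x_v|\le (n/\sqrt{a+b})\,x_u^2 x_v^2$, so that $\sum_{(u,v)\in H_x}|x_u x_v|\le (n/\sqrt{a+b})\sum_{u<v}x_u^2 x_v^2\le n/(2\sqrt{a+b})$ since $\|x\|\le 1$; multiplying by the per-pair bound $a/n$ yields a contribution of at most $\tfrac12\sqrt{a+b}$. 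For the light pairs with an endpoint in $V^c$, each summand is at most $(\sqrt{a+b}/n)(a/n)$ and there are at most $n|V^c|\le n^2 e^{-C_1(a+b)}$ of them, so this part is at most $(a+b)^{3/2}e^{-C_1(a+b)}$, which is below $\tfrac32\sqrt{a+b}$ once $a>b>C_0$ with $C_0$ large enough. Altogether $|\mathbb{E}[X]|\le 2\sqrt{a+b}$.

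\emph{Concentration.} Conditionally on $\sigma$, $X-\mathbb{E}[X]=\sum_{(u,v)\in L_x,\,u,v\in V}x_u x_v\bigl(W_{uv}-\mathbb{E}[W_{uv}|\sigma]\bigr)$ is a sum of independent mean-zero terms, each bounded in absolute value by $M:=2\sqrt{a+b}/n$ (by the light-pair constraint together with $|W_{uv}|\le1$) and of total variance $v:=\sum x_u^2 x_v^2\var(W_{uv}|\sigma)\le (a/n)\sum_{u<v}x_u^2 x_v^2\le a/(2n)$. I would then apply Bernstein's inequality (\prettyref{thm:Bernstein}): taking $t=C_3\sqrt{a+b}$ gives $v+Mt/3=O((a+b)/n)$ with an implied constant that is linear in $C_3$, hence $t^2/\bigl(2(v+Mt/3)\bigr)\ge c\,C_3\,n$ for an absolute constant $c>0$, and $\mathbb{P}\{|X-\mathbb{E}[X]|>C_3\sqrt{a+b}\}\le 2e^{-cC_3 n}\le e^{-C_2 n}$ once $C_3$ is chosen large in terms of $C_2$.

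\emph{Main obstacle.} The delicate step is the mean estimate. Naively each heavy pair carries expected weight only of order $a/n=o(1)$, yet there can be $\Theta(n^2)$ such pairs, giving a total of order $a$, which far exceeds $\sqrt{a+b}$. The resolution is precisely the light/heavy threshold: on $H_x$ the bound $|x_u x_v|\ge\sqrt{a+b}/n$ converts the $\ell_1$-type sum $\sum_{H_x}|x_u x_v|$ into the uniformly bounded $\ell_2$-type sum $\sum_{u<v}x_u^2 x_v^2\le\tfrac12$, buying exactly the factor $\sqrt{a+b}/n$ needed. Everything else — the variance estimate, the Bernstein step, and the $V^c$ correction — is routine given $|V^c|\le n e^{-C_1(a+b)}$ and $a,b>C_0$.
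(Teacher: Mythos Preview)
Your proposal is correct and follows essentially the same approach as the paper: decompose $\mathbb{E}[X]$ into a heavy-pair piece plus a $V^c$-correction, bound each by $O(\sqrt{a+b})$, and then apply Bernstein's inequality with $M\asymp\sqrt{a+b}/n$ and variance $O((a+b)/n)$ for the fluctuations. The only cosmetic differences are that the paper bounds $\sum_{H_x}|x_ux_v|$ via Cauchy--Schwarz together with the cardinality estimate $|H_x|\le n^2/(a+b)$ (rather than your direct inequality $|x_ux_v|\le (n/\sqrt{a+b})x_u^2x_v^2$), and handles the $V^c$-term by Cauchy--Schwarz on $\sum_{u\notin V}|x_u|$ rather than using the light-pair bound; both routes give the same estimates.
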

\begin{proof}
Note that
\begin{align*}
\mathbb{E}[X] &=  \sum_{(u,v) \in L_x }  \frac{\alpha + \beta \sigma_u \sigma_v }{n} \1{u,v \in V} x_u x_v-  x^\top \mathbb{E}[W |\sigma] y, \\
&= - \sum_{(u,v) \in H_x } \frac{\alpha + \beta \sigma_u \sigma_v}{n} x_u x_v -  \sum_{(u,v) \in L_x } \frac{\alpha + \beta \sigma_u \sigma_v}{n} (1 -\1{u,v \in V} ) x_u x_v.
\end{align*}
Since $\alpha, \beta \le a+b$, it follows that
\begin{align}
|\mathbb{E}[X]|
%& \le \frac{1}{n}  \sum_{(u,v) \in H } \bigg|  \alpha + \beta \sigma_u \sigma_v \bigg|   |x_u x_v \bigg| + \frac{1}{n}\sum_{(u,v) \notin H } (\alpha + \beta \sigma_u \sigma_v) (\1{u \notin V} + \1{ v \notin V} ) |x_u| |x_v |  \nonumber \\
\le  \frac{a+b}{n}  \sum_{(u,v) \in H_x} | x_u  y_v |  + \frac{a+b}{n}  ( \sum_{u \notin V} |x_u| \sum_{v} |x_v| + \sum_{u} |x_u| \sum_{v \notin V} |x_v| ). \label{eq:boundmean}
%& \overset{(b)}
\end{align}
Notice that $|H_x| \frac{a+b}{n^2} \le  \sum_{(i,j) \in H_x}  x_i^2 x_j^2 \le 1 $. Thus $|H_x| \le \frac{n^2}{a+b}$ and
by Cauchy-Schwartz inequality,
\begin{align*}
\sum_{(u,v) \in H_x} | x_u  x_v | \le |H_x|^{1/2} \left( \sum_{(u,v) \in H_x} x^2_u  x^2_v  \right)^{1/2} \le \frac{n}{\sqrt{a+b}}.
\end{align*}
Again by  Cauchy-Schwartz inequality,
\begin{align*}
\sum_{u \notin V} |x_u| \sum_{v} |x_v| + \sum_{u} |x_u| \sum_{v \notin V} |x_v| \le 2 (n |V^c| )^{1/2} \left ( \sum_{u \in [n], v \notin V} x_u^2 x_v^2 \right) \le 
 2 (n |V^c|)^{1/2} \le 2 n\eexp^{-C_1(a+b)/2} ,
\end{align*}
where the last inequality follows because a.a.s. $|\mathcal{V}^c| \le   n \exp\left( -C_1 (a+b) \right).$
It follows from \prettyref{eq:boundmean} that
\begin{align*}
|\mathbb{E}[X]| \le \sqrt{a+b} +2 (a+b)   e^{-C_1 (a+b)/2}   \le 2 \sqrt{a+b},
\end{align*}
where the last inequality holds when $a \ge C_0$ for a sufficiently large constant $C_0$.

Below we bound $|X-\mathbb{E}[X]|$ using the Bernstein inequality. Define
\begin{align*}
X_{uv}=  W_{uv} x_u x_v \1{(u,v) \in L_x } \1{u,v \in V}.
\end{align*}
Then $X-\mathbb{E}[X]=2 \sum_{u<v} \left( X_{uv} - \mathbb{E}[X_{uv}] \right).$
Note that $|X_{uv} - \mathbb{E}[X_{uv}] | \le \frac{\sqrt{a+b}}{n}$ and
$\var(X_{uv}) \le  x_u^2 x_v^2 \frac{a+b}{n}$. Therefore, $\var(X) \le \frac{a+b}{n} \sum_{u,v} x_u^2 x_v^2 = \frac{a+b}{n}$.
It follows from the Bernstein inequality that for any positive universal constant $C_2>0$,
\begin{align*}
\prob{ \big|X-\mathbb{E}[X] \big| \le \sqrt{2C_2 (a+b) } + \frac{4C_2}{3} \sqrt{a+b}  } \le \eexp^{-C_2n}.
\end{align*}
%Let $c_3= c_\epsilon \triangleq 2 \log \left( \frac{1}{2} + \frac{1}{\epsilon} \right) +  \log (2\pi)$. Then applying the union bound together with \prettyref{eq:cardiepsilonnet}, we get that  with  probability at least $1- \eexp^{-\frac{n}{2}c_\epsilon}$, $|Z_x| \le  2c_\epsilon  \sqrt{np}$ for all $x \in \calT_\epsilon$.
\end{proof}

\subsection{Bounding the contribution of heavy pairs}
 For the set of heavy pairs, since $w(\ell) \in [-1,1]$, it follows that
\begin{align}
\sup_{x \in \calT_{1/4}} | \sum_{(u,v) \in H_x} x_u W^\prime_{uv} x_v |  \le  \sup_{x \in \calT_{1/4}} \sum_{(u,v) \in H}  |x_u y_v| A^\prime_{uv}, \label{eq:heavypairs}
 \end{align}
where $A^\prime$ is defined by setting to zero the rows and columns of $A$ corresponding to vertices removed.
We upper bound \prettyref{eq:heavypairs} by showing that the graph $G'$ with the adjacency matrix given by $A'$ satisfy the following two properties.
%Remark 4.5 in \cite{Keshavan10} shows that there exists a constant $C$ such that $ \sum_{(u,v) \in H}  |x_u y_v| A^\prime_{uv} \le C \sqrt{a+b} $ with probability at larger than $1-n^{-3}$.  Combining Step 1, 2 and 3, the lemma follows.

\begin{definition}[{\bf Bounded degree of order $(d,c_4)$}]
A graph is said to have bounded degree property of order $(d,c_4)$ if every vertex has a degree bounded by $c_4 d$ for some universal constant $c_4>1$.
\end{definition}

\begin{definition}[{\bf Discrepancy of order $(d,c_5,c_6)$}]
A graph is said to have discrepancy property of order $(d,c_5,c_6)$ if for every $S, T \subset[n]$ with $|T|> |S|$, one of the following holds:
\begin{enumerate}
\item $e(S,T) \le c_5 \frac{\eexp d}{n} |S| |T|.$
\item $ e(S,T) \log \left(  \frac{e(S,T) n}{ d |S| |T| }\right) \le c_6 |S| \log \frac{n}{|T|},$
\end{enumerate}
where $e(S,T)$ denotes the set of edges  between vertices in $S$ and vertices in $T$.
\end{definition}

Thanks to removal of edges incident to vertices with degree larger than $\frac{3}{2}\frac{a+b}{2}$, $G'$ satisfy the bound degree property of order $\left(\frac{a+b}{2}, \frac{3}{2}\right)$. In the case with $|T| \ge \frac{n}{\eexp}$, then
\begin{align*}
e(S,T) \le |S| \frac{3(a+b) }{4} \le  \frac{3\eexp (a+b) }{4n} |S| |T|,
\end{align*}
where the first inequality follows from the bounded degree property. Therefore, $G'$ satisfy the discrepancy property with $d=a+b$ and $c_5=3/4$.

In the case with $|T| < \frac{n}{\eexp}$,
let $\widetilde{G}$ denote an \ER random graph with $n$ vertices and edge probability $(a+b)/n$; there exists a coupling such that
if $(u,v) \in E(G)$, then $(u,v) \in E(\widetilde{G})$. It is shown in \cite[Section 2.2.5]{Feige05} that  with probability at least $1-1/n$, $\widetilde{G}$ satisfies the discrepancy property of order $(a+b, c_5, c_6)$ for some constants $c_5$ and
$c_6$. Since removal of edges only decreases $e(S,T)$, $G'$ also satisfies the discrepancy property of order $(a+b, c_5, c_6) $ with probability at least $1-1/n$.

Applying [Corollary 2.11]\cite{Feige05}, we conclude that there exists some constant $C$ such that
\begin{align*}
\mathbb{P} \left\{ \sup_{x \in \calT_{1/4}}  \sum_{(u,v) \in H_x} | x_i x_j| A'_{uv} \le C \sqrt{a+b} \right\} \ge 1- \frac{1}{n}
%\label{eq:boundheavypairsdev}
\end{align*}
The conclusion follows in view of \eqref{EqProbabilityError} and \prettyref{eq:heavypairs}.
\end{proof}
\section{Proof of Lemma \ref{PropCouplingTree}} \label{PfPropCouplingTree}
We introduce some necessary notations for the labeled tree $T$. For a vertex $v \in T$, let $Y_v$ denote the number of children of $v$. Let $Y_v^{=}$ denote the number of children of $v$ with the same type as $v$ and $Y_v^{\neq}=Y_v-Y_v^{=}$. By Poisson splitting property, $Y_v^{=}$ and $Y_v^{\neq}$ are independent Poisson random variables with mean $a/2$ and $b/2$, respectively. Let $Y_v^{\ell}$ denote the number of children of $v$ with the edge connected to $v$ being labeled with $\ell$. Let $Y_v^{=,\ell}$ denote the number of children of $v$ with the same type as $v$ and the edge connected to $v$ being labeled with $\ell$ and $Y_v^{\neq,\ell}= Y_v^{\ell}-Y_v^{=,\ell}$. Then $Y_v^{=,\ell}$ and $Y_v^{\neq,\ell}$ are independent Poisson random variables with mean $(a\mu(\ell)/2)$ and $(b\nu(\ell)/2)$, respectively.

Similarly introduce the corresponding notations for $G_R$. Let $V(G_R)$ denote the set of vertices of $G_R$ and $V_R=V \setminus V(G_R)$. Let $V_R^{+1}$ denote the vertices of type $+1$ in $V_R$ and similarly for $V_R^{-1}$. For a vertex $v \in \partial G_R$, let $X_v$ denote the number of children of $v$ in $V_R$ and $X_v^{=}$ denote the number of children of $v$ in $V_R$ with the same type as $v$. Let $X_v^{\neq}=X_v-X_v^{=}$. Then, $X_v^{=} \sim $ \Binom$(|V_R^{\sigma_v}|,a/n)$ and $X_v^{\neq}\sim $ \Binom$(|V_R^{-\sigma_v}|,b/n)$. Let $X_v^{\ell}$ denote the number of children of $v$ in $V_R$ with edge connected to $v$ being labeled with $\ell$. Let $X_v^{=,\ell}$ denote the number of children of $v$ in $V_R$ with the same type as $v$ and the edge connected to $v$ being labeled with $\ell$ and $X_v^{\neq,\ell}= X_v^{\ell}-X_v^{=,\ell}$. Then, $X_v^{=,\ell} \sim $ \Binom$(|V_R^{\sigma_v}|,a\mu(\ell)/n)$ and $X_v^{\neq, \ell}\sim $ \Binom$(|V_R^{-\sigma_v}|,b\nu(\ell)/n)$.
Note that it is possible to have $u,v \in \partial G_R$ which share the same child in $V_R$ and thus $G_R$ may not be a tree. The goal is to show that such events are rare.

In particular, for any integer $1 \le r \le R$, let $A_r$ denote the event that no vertex in $V_r$ has more than one parent in $G_r$. Let $B_r$ denote the event that there are no edges within $\partial G_r$. Define an event $C_r$ as
\begin{align}
C_r= \{ | \partial G_s | \le 2^s (a+b)^s \log n \text{ for all } 0 \le s \le r \}, \nonumber
\end{align}
which is useful to establish that $V_r$ is large enough so that the binomial distribution is close to Poisson distribution.
Lemma 4.4 and 4.5 in \cite{Mossel12} show that for any $r \le R$,
\begin{align}
\mathbb{P}_n(A_r |C_{r}, \sigma)\ge 1 - O(n^{-3/4}), \nonumber \\
\mathbb{P}_n(B_r |C_{r}, \sigma)\ge 1 - O(n^{-3/4}), \nonumber \\
\mathbb{P}_n (C_{r+1} | C_{r}, \sigma) \ge 1- n^{-\log (4/e) }, \label{Eq:EventABC}
\end{align}
and $|G_{r}| = O(n^{1/8})$ on $C_{r}$.

 We are ready to prove the proposition. Let $V^{+1}$ and $V^{-1}$ denote the set of vertices in $V$ with type $+1$ and $-1$, respectively. Then a.a.s. $| |V^{+1}| - |V^{-1}|| \le n^{3/4}$ in view of \prettyref{eq:clustersizebalance}. Suppose that $(G_r,L_{G_r},\sigma_{G_r})=(T_r, L_{T_r},\sigma_{T_r})$ and $C_{r}$ holds. By (\ref{Eq:EventABC}), the event $A_r$, $B_r$ and $C_{r+1}$ hold simultaneously with probability at least $1-O(n^{-1/8})$ and $|G_{r}| = O(n^{1/8})$. Note that if further $X_v^{=,\ell}=Y_v^{=,\ell}$ and $X_v^{\neq,\ell}=Y_v^{\neq,\ell}$ for every $v \in \partial G_{r}$  and every $\ell \in \mathcal{L}$, then $(G_{r+1},L_{G_{r+1}},\sigma_{G_{r+1}})=(T_{r+1}, L_{T_{r+1}},\sigma_{T_{r+1}})$.

For each $v \in \partial G_r$, $X_v^{=,l} \sim $ \Binom$(|V_r^{\sigma_v}|,a\mu(l)/n)$, and
\begin{align}
  n/2+ n^{3/4} \ge |V^{\sigma_v}| \ge |V_r^{\sigma_v}| \ge |V^{\sigma_v}| - |G_{r}| \ge n/2 - n^{3/4} - O(n^{1/8}). \nonumber
\end{align}
Lemma 4.6 in \cite{Mossel12} bounds total variation distance between binomial and Poisson random variables as
\begin{align}
\| \text{\Binom} \left(m,\frac{c}{n} \right) -\text{\Pois} (c) \|_{\text{TV}} = O\left(\frac{\max \{1, |m-n| \} }{n} \right). \nonumber
\end{align}
Therefore, for any fixed $v \in \partial G_r$ and $\ell \in \mathcal{L}$, $X_v^{=,\ell}$ can be coupled with $Y_v^{=,\ell}$ such that $\mathbb{P}\{ X_v^{=,\ell} \neq  Y_v^{=,\ell}\} = O(n^{-1/4})$ and similarly for $X_v^{\neq, \ell}$. Since $|\partial G_r|= O(n^{1/8})$ and $\mathcal{L}$ is a finite set, the union bound concludes that  $X_v^{=,\ell}=Y_v^{=,\ell}$ and $X_v^{\neq,\ell}=Y_v^{\neq,\ell}$ for every $v \in \partial G_{r}$  and every $\ell \in \mathcal{L}$ with probability at least $1-O(n^{-1/8})$,
Therefore
\begin{align}
 \mathbb{P} \left\{ (G_{r+1},L_{G_{r+1}},\sigma_{G_{r+1}})=(T_{r+1}, L_{T_{r+1}},\sigma_{T_{r+1}}), C_{r+1} \big| (G_r,L_{G_r},\sigma_{G_r} ) =(T_r, L_{T_r},\sigma_{T_r}), C_{r} \right\} \ge 1-O(n^{\frac{1}{8}}). \nonumber
\end{align}
By definition of condition probability,
\begin{align}
 &\mathbb{P} \{ (G_{r+1},L_{G_{r+1}},\sigma_{G_{r+1}})=(T_{r+1}, L_{T_{r+1}},\sigma_{T_{r+1}}), C_{r+1} \}  \nonumber \\
 &\ge \left( 1-O(n^{-1/8}) \right)
 \mathbb{P} \{ (G_{r},L_{G_{r}},\sigma_{G_{r}})=(T_{r}, L_{T_{r}},\sigma_{T_{r}}), C_{r} \}. \label{eq:recursive}
\end{align}
Since $\mathbb{P}(C_0)=1$, and $G_R$ and $T_R$ starts at the same root $\rho$, the proposition follows by recursively applying \prettyref{eq:recursive}.
%by a union bound over $r=1, \ldots, R$.

\section{Proof of Lemma \ref{LemmaNumCycles}} \label{PfLemmaNumCycles}
First consider the graph distribution $\mathbb{P}_n^\prime$. By the method of moments (see Theorem 6.10 \cite{Janson11}), it suffices to show that under $\mathbb{P}_n^\prime$,
\begin{align}
\mathbb{E} \left[\prod_{k=1}^m \prod_{[\ell]_k} (X_n([\ell]_k))_{j([\ell]_k)}  \right] \to \prod_{k=1}^m \prod_{[\ell]_k} (\lambda([\ell]_k))^{j([\ell]_k)}, \label{EqcyclesGnp}
\end{align}
for all possible non-negative integers $\{ j([\ell]_k) \}$. We first show that for any fixed $[\ell]_k$, $\mathbb{E}[X_n([\ell]_k)] \to \lambda([\ell]_k)$.

Let $v_0,\ldots, v_{k-1}$ be $k$ distinct vertices among $n$ vertices. Let $I$ be the indicator that $v_0,\ldots, v_{k-1}$ is a $k$-cycle with labels $[\ell]_k$. Then,
\begin{align}
\mathbb{E}[I] =\prod_{s=1}^k \frac{a \mu(\ell_s) +b \nu(\ell_s)}{2n}.
\end{align}
By the linearity of expectation,
\begin{align}
\mathbb{E}[X_n([\ell]_k)] \overset{(a)}{=} \binom{n}{k} \frac{(k-1)!}{2} \mathbb{E}[I] = \binom{n}{k} \frac{(k-1)!}{2} \prod_{s=1}^k \frac{a \mu(\ell_s) +b \nu(\ell_s)}{2n}, \nonumber
\end{align}
where $(a)$ holds because there are $\binom{n}{k}$ different choices of $v_0,\ldots, v_{k-1}$ and $k!$ different permutations of them; each cycle corresponds to $2k$ different permutations.
Therefore, $\mathbb{E}[X_n([\ell]_k)] \to \lambda([\ell]_k)$ as $n \to \infty$ as long as $k=o (\sqrt{n})$.

Then, we argue that $\mathbb{E}[(X_n([\ell]_k))_j] \to (\lambda([\ell]_k))^j$. Note that $(X_n([\ell]_k))_j$ is the number of ordered $j$-tuples of $k$-cycles with labels $[\ell]_k$ in $G$. Divide these $j$-tuples into two sets: $A$ is the set of $j$-tuples for which all of the $k$-cycles are disjoint, and $B$ is the set of the rest of the $j$-tuples.

Take $(C_1,C_2,\ldots,C_j) \in A$. Since $C_i$'s are disjoint, they appear independently. By the previous argument, it follows that the cycles $C_1,\ldots,C_j$ are all present in $G$ with probability
\begin{align}
\prod_{i=1}^j \prod_{s=1}^k \frac{a \mu(\ell_s) +b \nu(\ell_s)}{2n}. \nonumber
\end{align}
Since there are $\binom{n}{kj}\frac{(kj)!}{(2k)^j}$ elements in $A$, the expected number of vertex-disjoint $j$-tuples of $k$-cycles with $[\ell]_k$ is
\begin{align}
\binom{n}{kj}\frac{(kj)!}{(2k)^j} \prod_{i=1}^j \prod_{s=1}^k \frac{a \mu(\ell_s) +b \nu(\ell_s)}{2n} \to (\lambda([\ell]_k))^j. \nonumber
\end{align}
Let $\tilde{I}$ be the number of non-vertex-disjoint $j$-tuples. Then the distribution of $\tilde{I}$ is stochastically dominated by the distribution of $\tilde{I}$ under an Erd\H{o}s-R\'enyi random graph $\mathcal{G}(n,\frac{\max \{a, b\} }{n})$. It is shown by \cite{bollobas01}[Corollary 4.4] that if $k=O(\log^{1/4} n)$, then $\mathbb{E}[\tilde{I}] \to 0$ for any $\mathcal{G}(n,c/n)$ with constant $c$ . Hence, $\mathbb{E}[\tilde{I}] \to 0$ under $\mathbb{P}_n^\prime$.

Finally, note that the same argument applies to any joint factorial moment corresponding to cycles with different lengths and labels. Thus equation (\ref{EqcyclesGnp}) follows.

Next consider the graph distribution $\mathbb{P}_n$. It suffices to show that under $\mathbb{P}_n$
\begin{align}
\mathbb{E} \left[\prod_{k=1}^m \prod_{[\ell]_k} (X_n([\ell]_k))_{j([\ell]_k)}  \right] \to \prod_{k=1}^m \prod_{[\ell]_k} (\xi([\ell]_k))^{j([\ell]_k)} . \label{EqcyclesSBM}
\end{align}
We claim that for any fixed $[\ell]_k$, $\mathbb{E}[X_n([l]_k)] \to \xi([\ell]_k)$. Let $v_0,\ldots, v_{k-1}$ be $k$ distinct vertices among $n$ vertices. Let $I$ be the indicator that $v_0,\ldots, v_{k-1}$ is a $k$-cycle with labels $[\ell]_k$ and $v_0$ being the vertex with the minimum index. Let $v_k=v_0$, then $\mathbb{E}[X_n([\ell]_k)] = \binom{n}{k} \frac{(k-1)!}{2} \mathbb{E}[I]$ and
\begin{align}
\mathbb{E}[I | \sigma_{v_0}, \ldots, \sigma_{v_{k-1}} ]= n^{-k} \prod_{ \substack{ 1 \le i \le k \\ \sigma_{v_{i-1}}=\sigma_{v_i} } } a \mu(\ell_i) \prod_{ \substack{ 1 \le i \le k \\ \sigma_{v_{i-1}} \neq \sigma_{v_i} } } b \nu(\ell_i). \nonumber
\end{align}
Notice that there are always even number of $i$ such that $\sigma_{v_{i-1}} \neq \sigma_{v_i}$. Thus,
\begin{align}
\mathbb{E}[I]= \mathbb{E}_{\sigma}[ \mathbb{E}[I | \sigma  ] ]=(2n)^{-k} \left( \prod_{i=1}^k (a\mu(\ell_i)+ b\nu(\ell_i)) + \prod_{i=1}^k (a\mu(\ell_i)-b\nu(\ell_i)) \right). \nonumber
\end{align}
Therefore, $\mathbb{E}[X_n([\ell]_k)] \to \xi([\ell]_k)$ and by the same
argument as before, equation (\ref{EqcyclesSBM}) holds.
\section{Bernstein Inequality}
\begin{theorem} \label{thm:Bernstein}
Let $X_1, \ldots, X_n$ be independent random variables such that $| X_i | \le M$ almost surely. Let $\sigma_i^2=\var(X_i)$ and $\sigma^2= \sum_{i=1}^n \sigma_i^2$, then
\begin{align}
\mathbb{P} \left\{  \sum_{i=1}^n X_i \ge t \right\} \le \exp \left (  \frac{-t^2}{ 2 \sigma^2 + \frac{2}{3} M t }\right). \nonumber
\end{align}
It follows then
\begin{align}
\mathbb{P} \left\{ \sum_{i=1}^n X_i \ge \sqrt{2 \sigma^2 u} + \frac{2M u }{3}  \right\} \le e^{-u}. \nonumber
\end{align}
\end{theorem}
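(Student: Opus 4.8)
The plan is to run the standard Chernoff–Cram\'er argument: exponentiate, use independence, control each moment generating function using the variance together with the boundedness, and then optimize the free parameter. Throughout we work under the (standard, implicitly assumed) hypothesis $\mathbb{E}[X_i]=0$; otherwise one simply recenters.

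First I would invoke the exponential Markov inequality: for any $\lambda>0$, independence gives $\mathbb{P}\{\sum_i X_i \ge t\} \le e^{-\lambda t}\prod_{i=1}^n \mathbb{E}[e^{\lambda X_i}]$. The heart of the argument is the per-coordinate estimate. Expanding $\mathbb{E}[e^{\lambda X_i}] = 1 + \sum_{k\ge 2}\frac{\lambda^k \mathbb{E}[X_i^k]}{k!}$ (the linear term drops because $\mathbb{E}[X_i]=0$), bounding $|\mathbb{E}[X_i^k]| \le M^{k-2}\mathbb{E}[X_i^2]=M^{k-2}\sigma_i^2$ for $k\ge 2$, and using the elementary combinatorial bound $k! \ge 2\cdot 3^{k-2}$ to sum a geometric series, one obtains for $0<\lambda<3/M$
\[
\mathbb{E}[e^{\lambda X_i}] \le 1 + \frac{\sigma_i^2}{M^2}\sum_{k\ge 2}\frac{(\lambda M)^k}{k!} \le 1 + \frac{\lambda^2\sigma_i^2/2}{1-\lambda M/3} \le \exp\!\left(\frac{\lambda^2\sigma_i^2/2}{1-\lambda M/3}\right).
\]
Multiplying over $i$ replaces $\sigma_i^2$ by $\sigma^2=\sum_i\sigma_i^2$, yielding $\mathbb{P}\{\sum_i X_i \ge t\} \le \exp\!\big(-\lambda t + \tfrac{\lambda^2\sigma^2/2}{1-\lambda M/3}\big)$.

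Next I would substitute the (near-)optimal choice $\lambda = \frac{t}{\sigma^2 + Mt/3}$, which lies in $(0,3/M)$ since $t<\frac{3}{M}(\sigma^2+Mt/3)$ trivially. A one-line computation, using $1-\lambda M/3 = \sigma^2/(\sigma^2+Mt/3)$, collapses the exponent to $-\frac{t^2}{2(\sigma^2+Mt/3)} = -\frac{t^2}{2\sigma^2+\frac23 Mt}$, which is the first claimed bound. For the ``$u$-form'' I would plug $t = \sqrt{2\sigma^2 u} + \frac{2Mu}{3}$ into the first bound and check that $\frac{t^2}{2\sigma^2+\frac23 Mt}\ge u$; this is equivalent to $t^2 - \frac23 Mu\,t - 2\sigma^2 u \ge 0$, and the larger root of the corresponding quadratic is $\frac{Mu}{3}+\sqrt{\frac{M^2u^2}{9}+2\sigma^2 u} \le \frac{2Mu}{3}+\sqrt{2\sigma^2 u} = t$ by subadditivity of the square root, so the inequality holds and $e^{-t^2/(2\sigma^2+\frac23 Mt)}\le e^{-u}$.

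There is no genuine obstacle: this is a classical estimate. The only step requiring a little care is the moment-to-MGF passage — the bound $k!\ge 2\cdot 3^{k-2}$ and the resulting geometric sum, which is precisely what produces the constant $\tfrac23$ in the denominator — together with bookkeeping that keeps $\lambda$ in the admissible range $(0,3/M)$ at the optimization step.
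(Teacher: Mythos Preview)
The paper does not actually prove this statement; it merely records the classical Bernstein inequality in an appendix for reference, with no accompanying argument. Your proposal supplies the standard textbook proof (Chernoff bound, moment control via $k!\ge 2\cdot 3^{k-2}$ to get the $\tfrac23$ constant, then the usual choice of $\lambda$ and the quadratic-root check for the $u$-form), and it is correct; there is nothing to compare it against in the paper itself.
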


\end{document}